\newcommand{\citebf}[1]{#1}
\newcommand{\citeaddendum}{}
\DeclareMathSizes{\@xipt}{11}{9}{8}
\tikzstyle{cross}=[cross out, draw=black, minimum size=2*(#1-\pgflinewidth), inner sep=0pt, outer sep=0pt]
\newcommand*\cleartoleftpage{%
  \clearpage
  \ifodd\value{page}\hbox{}\newpage\fi
}
\newcommand{\chapterquote}[2]{\epigraph{#1}{\par\raggedleft--- \textup{#2}}}
\definecolor{darkgreen}{rgb}{0.0, 0.2, 0.13}
\definecolor{darkolivegreen}{rgb}{0.33, 0.42, 0.18}
\definecolor{asparagus}{rgb}{0.53, 0.66, 0.42}
\definecolor{kiviatblue}{rgb}{0.4,0.4,1}
\newcommand{\LegendBox}[3][]{%
  \coordinate[#1] (LegendBox_anchor) at (#2) ;
  \foreach \col/\item [count=\hi from 0] in {#3} {
      \draw[line width=3mm,color=\col] ([yshift=\hi*8mm]LegendBox_anchor) -- ++(.5,0)
      node[anchor=west][color=black] {\item}
      ;}
  %\draw ([shift={(-.2,-.4)}]LegendBox_anchor)rectangle([shift={(4.5,2)}]LegendBox_anchor);
}
\newcommand{\kiviatTopk}{2}
\newcommand{\kiviatGoal}{2}
\newcommand{\kiviatCost}{2}
\newcommand{\kiviatPredicates}{2}
\theoremstyle{plain}
\newtheorem{theorem}{Theorem}
\theoremstyle{definition}
\newtheorem{definition}{Definition}
\newtheorem{example}{Example}
\theoremstyle{remark}
\newcommand{\name}[1]{\textsc{#1}}
\newcommand{\algname}[1]{\textsc{#1}}
\newcommand{\pddl}[0]{\name{pddl}}
\newcommand{\domain}{{\Xi}}
\newcommand{\task}{\ensuremath {\Pi}\xspace}
\newcommand{\vars}{\ensuremath {\mathcal V}\xspace}
\newcommand{\vardomain}{\ensuremath {D}\xspace}
\newcommand{\operators}{\ensuremath {\mathcal O}\xspace}
\newcommand{\init}{\ensuremath {\mathcal I}\xspace}
\newcommand{\goal}{\ensuremath {\mathcal G}\xspace}
\newcommand{\pre}{\ensuremath {\mathit{pre}}\xspace}
\newcommand{\eff}{\ensuremath {\mathit{eff}}\xspace}
\newcommand{\states}{\ensuremath {\mathscr S}\xspace}
\newcommand{\goalstates}{\ensuremath S_\star}
\newcommand{\utility}{\ensuremath {\mathcal U}\xspace}
\newcommand{\plan}{\ensuremath {\pi}\xspace}
\newcommand{\cost}{\ensuremath {cost}}
\newcommand{\costfun}{\ensuremath {\mathscr C}}
\newcommand{\constcostfun}{\ensuremath {\mathcal C}}
\newcommand{\domaintuple}{\langle \vars, \operators,\constcostfun \rangle}
\newcommand{\limit}{\ensuremath {\mathcal B}}
\newcommand{\comp}{F}
\newcommand{\compfun}{{\text{\textbf{f}}}}
\newcommand{\compdsfun}{f_{\xi}}
\newcommand{\compifun}{f_{\iota}}
\newcommand{\compgfun}{f_{g}}
\newcommand{\size}[1]{\ensuremath{||#1||}\xspace}
\newcommand\mybar{\kern1pt\rule[-\dp\strutbox]{1pt}{\baselineskip}\kern1pt}
\newcommand{\rover}{r}
\newcommand{\navigate}[2]{\ensuremath{\textit{navigate-}#1\textit{-}#2}}
\newcommand{\fly}[4]{\ensuremath{\textit{fly-}#1\textit{-}#2}\textit{-to-}#3\textit{-}#4}
\newcommand{\takeImg}[2]{\ensuremath{\textit{take-img-}#1\textit{-}#2}}
\newcommand{\startDrone}[2]{\ensuremath{\textit{launch-}#1\textit{-}#2}}
\newcommand{\landDrone}[2]{\ensuremath{\textit{land-}#1\textit{-}#2}}
\newcommand{\derivedvars}{\ensuremath {\mathcal D}\xspace}
\newcommand{\axioms}{\ensuremath {\mathcal A}\xspace}
\newcommand{\extendedstates}{\ensuremath {\mathscr S_{E}}\xspace}
\DeclareRobustCommand{\rchi}{{\mathpalette\irchi\relax}}
\newcommand{\irchi}[2]{\raisebox{\depth}{\ensuremath{#1\chi}}} % inner command, used by \rchi
\newcommand{\charf}{\ensuremath{\rchi}}
\newcommand{\astar}[1]{\ensuremath{\text{A}_\text{\scriptsize{#1}}^\star}}
\newcommand{\bddastar}{\ensuremath{\text{BDDA}^\star}}
\newcommand{\addastar}{\ensuremath{\text{ADDA}^\star}}
\newcommand{\evmddastar}{\ensuremath{\text{EVMDDA}^\star}}
\newcommand{\heu}[2]{\ensuremath{h_\text{\scriptsize{#1}}^\text{\scriptsize{#2}}}}
\newcommand{\perfecth}{\heu{\ensuremath{\star}}{}}
\newcommand{\almostperfecth}[1]{\ensuremath{#1 \perfecth}}
\newcommand{\kstar}{K\ensuremath{^\star}}
\newcommand{\forbidk}{Forbid-k}
\newcommand{\FPSPACE}{\ensuremath{\mathsf{FPSPACE}}}
\newcommand{\PSPACE}{\ensuremath{\mathsf{PSPACE}}}
\newcommand{\FP}{\ensuremath{\mathsf{FP}}}
\newcommand{\NP}{\ensuremath{\mathsf{NP}}}
\newcommand{\coNP}{\ensuremath{\mathsf{co}\text{-}\mathsf{NP}}}
\newcommand{\PP}{\ensuremath{\mathsf{PP}}}
\newcommand{\NPSPACE}{\ensuremath{\mathsf{NPSPACE}}}
\newcommand{\EXPTIME}{\ensuremath{\mathsf{EXPTIME}}}
\begin{document}
\begin{titlingpage}

  \small \centering{\textsf{Dissertation zur Erlangung des Doktorgrades der Technischen Fakult{\"a}t der Albert-Ludwigs-Universit{\"a}t Freiburg im Breisgau}}

  \vspace{0.5cm}

  \rule{\textwidth}{0.4mm}\\%

  \huge \centering{\textbf{Symbolic Search for Optimal Planning with Expressive Extensions}}\\
  %\huge \centering{\textbf{Symbolic Search for Optimal Planning with Expressive Extensions and Plan Requirements}}\\
  %\huge \centering{\textbf{Extending the Scope of Symbolic Search for Optimal Planning}}\\

  \vspace{-0.4cm}

  \rule{\textwidth}{0.4mm}\\%

  \vspace{2.5cm}

  \centering{\textbf{David Speck}}

  %\vspace{3.8cm}
  \vspace{1cm}
  \centering{\resizebox*{0.7\textwidth}{!}{
      \includegraphics{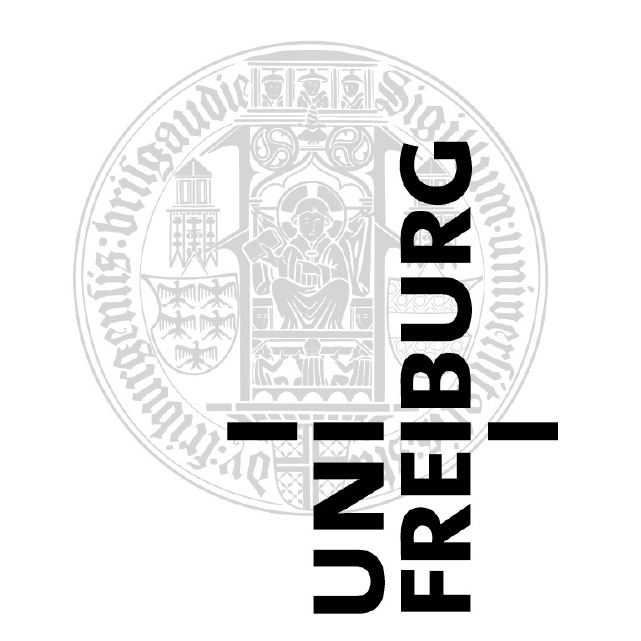}}}

  \vspace{1.5cm}

  \LARGE \centering{\textbf{2022}}

  \normalsize

  \clearpage

  \vspace*{\fill}
  \begin{flushleft}
    \noindent
    \textbf{Dean:}\\
    Prof. Dr. Roland Zengerle, \emph{University of Freiburg, Germany}\\

    \bigskip

    \noindent
    \textbf{PhD advisor and first reviewer:}\\
    Prof. Dr. Bernhard Nebel, \emph{University of Freiburg, Germany}\\

    \bigskip

    \noindent
    \textbf{Second reviewer:}\\
    Prof. Dr. {\'A}lvaro Torralba, \emph{Aalborg University, Denmark}%\\

    \bigskip

    \noindent
    \textbf{Date of defense:}\\
    February 23, 2022

  \end{flushleft}
\end{titlingpage}

\frontmatter

\clearpage
\clearpage
\thispagestyle{empty}
\chapter*{Abstract}

In classical planning, the goal is to derive a course of actions that allows an intelligent agent to move from any situation it finds itself in to one that satisfies its goals.
Classical planning is considered domain-independent, i.e., it is not limited to a particular application and can be used to solve different types of reasoning problems.
In practice, however, some properties of a planning problem at hand require an expressive extension of the standard classical planning formalism to capture and model them. 
%in a concise way. 
Although the importance of many of these extensions is well known, most planners, especially optimal planners, do not support these extended planning formalisms. 
The lack of support not only limits the use of these planners for certain problems, but even if it is possible to model the problems without these extensions, it often leads to increased effort in modeling or makes modeling practically impossible as the required problem encoding size increases exponentially.

In this thesis, we propose to use symbolic search for cost-optimal planning for different expressive extensions of classical planning, all capturing different aspects of the problem.
In particular, we study planning with axioms, planning with state-dependent action costs, oversubscription planning, and top-$k$ planning. 
For all formalisms, we present complexity and compilability results, highlighting that it is desirable and even necessary to natively support the corresponding features.
We analyze symbolic heuristic search and show that the search performance does not always benefit from the use of a heuristic and that the search performance can exponentially deteriorate even under the best possible circumstances, namely the perfect heuristic.
This reinforces that symbolic blind search is the dominant symbolic search strategy nowadays, on par with other state-of-the-art cost-optimal planning strategies.
Based on this observation and the lack of good heuristics for planning formalisms with expressive extensions, symbolic search turns out to be a strong approach.
We introduce symbolic search to support each of the formalisms individually and in combination, resulting in optimal, sound, and complete planning algorithms that empirically compare favorably with other approaches.

\clearpage
\clearpage
\thispagestyle{empty}
\selectlanguage{ngerman}

\chapter*{Zusammenfassung}
Bei der klassischen Planung besteht das Ziel darin, einen Handlungsablauf zu finden, der es einem intelligenten Agenten erm{\"o}glicht, aus jeder Situation, in der er sich befindet, in eine Situation zu gelangen, die seine Ziele erf{\"u}llt.
Die klassische Planung gilt als dom{\"a}nenunabh{\"a}ngig, d.h. sie ist nicht auf eine bestimmte Anwendung beschr{\"a}nkt und kann zur L{\"o}sung verschiedener Arten von Logikproblemen verwendet werden.
In der Praxis erfordern jedoch einige Eigenschaften eines vorliegenden Planungsproblems eine ausdrucksstarke Erweiterung des klassischen Standardplanungsformalismus, um sie zu erfassen und zu modellieren. 
Obwohl die Bedeutung vieler dieser Erweiterungen bekannt ist, unterst{\"u}tzen die meisten Planer, insbesondere optimale Planer, diese erweiterten Planungsformalismen nicht. 
Die fehlende Unterst{\"u}tzung schr{\"a}nkt nicht nur die Verwendung dieser Planer f{\"u}r bestimmte Probleme ein, sondern selbst wenn es m{\"o}glich ist, die Probleme ohne diese Erweiterungen zu modellieren, f{\"u}hrt dies oft zu einem erh{\"o}hten Aufwand bei der Modellierung oder macht die Modellierung praktisch unm{\"o}glich, da die erforderliche Problemkodierungsgr{\"o}{\ss}e exponentiell ansteigt.

In dieser Arbeit schlagen wir vor, die symbolische Suche f{\"u}r kostenoptimale Planung f{\"u}r verschiedene ausdrucksstarke Erweiterungen der klassischen Planung zu verwenden, die alle unterschiedliche Aspekte des Problems erfassen.
Insbesondere untersuchen wir die Planung mit Axiomen, die Planung mit zustandsabh{\"a}ngigen Aktionskosten, die \say{Oversubscription Planung} und die \say{Top-$k$ Planung}. 
F{\"u}r alle Formalismen pr{\"a}sentieren wir Ergebnisse zur Komplexit{\"a}t und Kompilierbarkeit, wobei wir hervorheben, dass es w{\"u}nschenswert und sogar notwendig ist, die entsprechenden Aspekte nativ zu unterst{\"u}tzen.
Wir analysieren die symbolische heuristische Suche und zeigen, dass die Suchleistung nicht immer von der Verwendung einer Heuristik profitiert und dass die Suchleistung selbst unter den bestm{\"o}glichen Umst{\"a}nden, n{\"a}mlich der perfekten Heuristik, exponentiell abnehmen kann.
Dies unterstreicht, dass die symbolische blinde Suche heutzutage die dominierende symbolische Suchstrategie ist, {\"a}nlich gut wie anderen modernen kostenoptimalen Planungsstrategien.
Basierend auf dieser Beobachtung und dem Mangel an guten Heuristiken f{\"u}r Planungsformalismen mit ausdrucksstarken Erweiterungen, erweist sich die symbolische Suche als ein starker Ansatz.
Wir erweitern die symbolische Suche, um jeden der Formalismen einzeln und in Kombination zu unterst{\"u}tzen, was zu optimalen, korrekten und vollst{\"a}ndigen Planungsalgorithmen f{\"u}hrt, die sich im Vergleich zu anderen Ans{\"a}tzen empirisch besser verhalten.
\selectlanguage{english}

\clearpage
\clearpage
\thispagestyle{empty}
\chapter*{Acknowledgments}

This thesis is the final product, which in itself required a lot of effort, but the years during the doctoral process laid the foundation for everything. 
During this time, I have had the pleasure of meeting, learning from, researching with, and being supported by many great people, for which I would like to express my gratitude.

First of all, I would like to thank Bernhard Nebel, who not only gave me the opportunity to do my own research that interested me, but at the same time kept inspiring me with his expertise and dedication to topics like complexity theory.
The Foundations of Artificial Intelligence led by Bernhard was a great place to work with many wonderful people such as Florian Gei{\ss}er, Gregor Behnke, Grigorios Mouratidis, Johannes Aldinger, Petra Geiger, Robert Mattm{\"u}ller, Rolf-David Bergdoll, Stefan Staeglich, Thorsten Engesser, Tim Schulte and Uli Jakob, all of whom I would like to thank for great discussions and support. 

I would especially like to thank Robert Mattm{\"u}ller, who has supported and guided me since my first academic steps.
There are few people who can inspire young people with great commitment and enthusiasm for science. 
Among them, there are even fewer that allow young scientists to be seen as independent researchers by not putting themselves in the center of the action.
Robert is one of those few people without whom this work would not have been possible. Thank you very much.

I would like to thank all my colleagues with whom I have had the pleasure of writing, revising, and publishing a paper during my doctoral process: {\'A}lvaro Torralba, Andr{\'e} Biedenkapp, Bernhard Nebel, David Borukhson, Dominik Drexler, Florian Gei{\ss}er, Frank Hutter, Gregor Behnke, Jendrik Seipp, Marius Lindauer, Michael Katz, Robert Mattm{\"u}ller, Sumitra Corraya and Thomas Keller.
A special thanks to the planning community that has welcomed me since I first attended ICAPS 2018. 
In particular, to {\'A}lvaro Torralba for many interesting discussions, and to Michael Katz for the opportunity to work with him and visit the IBM T. J. Watson Research Center in New York. 
Both treated me like a colleague from the first meeting and shared their expertise with me.

Being a doctoral student is a roller coaster ride that is only possible with a well-balanced social life. 
I would like to take this opportunity to thank all my friends with whom I have shared many great experiences in my gymnastics career. 
Thanks to my study friends Andr{\'e} Biedenkapp, Lukas Gemein and Rick Gelhausen, all of whom will (hopefully) soon receive their PhDs. 
Special thanks to Andr{\'e} Biedenkapp, with whom I have shared many personally great times and scientific successes but also some setbacks.
The many overtime hours with you became a pleasure thanks to your positive attitude and enthusiasm for science.

I would like to thank my family. 
My parents Thomas and Olga, who have always supported me and taught me from an early age that hard work will pay off in the long run. 
They together with Iva, Mario and their slowly but steadily growing family were the support that many deserve but few receive. 
I am grateful that I can consider you not only family but also friends.

Last but not least, I would like to thank Lena (and Balu), who joined my journey a little later but are now indispensable. 
It is hard to describe how much I enjoyed the last time, although it should actually be very exhausting to write such a thesis. 
Lena, your incredibly loving, understanding and motivating nature towards me and my work has helped me to accomplish this thesis.

\vfill{}

\begin{center}
    \begin{tikzpicture}[%
            costnode/.style={pos=0.6,rectangle,thick,solid,
                    inner sep=2pt,draw,fill=white,text=black,font=\small},%
            decisionnode/.style={circle,thick,minimum size=4mm,
                    inner sep=2pt,draw,fill=white!80!black,text=black,font=\normalsize},%
            xscale=2,yscale=1.5,>=latex]
        \begin{scope}{xshift=0cm}
            \node[] (before) at (0,4.6) {}; % incoming edge_value
            \node[decisionnode, minimum size=0.7cm] (t) at (0.0,3.75) {\textbf{T}};
            \node[decisionnode, minimum size=0.7cm] (h) at (0.2,3.0) {\textbf{h}};
            \node[decisionnode, minimum size=0.7cm] (a) at (0.4,2.25) {\textbf{a}};
            \node[decisionnode, minimum size=0.7cm] (n) at (0.6,1.5) {\textbf{n}};
            \node[decisionnode, minimum size=0.7cm] (k) at (0.8,0.75) {\textbf{k}};
            \node[draw,thick,fill=white!80!black,rectangle, minimum size=0.55cm]
            (after0) at (-0.2,0) {{$0$}};
            \node[draw,thick,fill=white!80!black,rectangle, minimum size=0.55cm]
            (after1) at (1.0,0) {{$1$}};
            \draw[->, thick] (before) to (t);
            % t =>
            \draw[->, thick] (t) to[bend right=0,label distance=0mm,edge
            label={\small{$1$}},pos=0.6] (h);
            \draw[->, thick, dotted] (t) to[bend left=0,label distance=0mm,edge
            label={\small{$0$}},swap,pos=0.1] (after0);
            % h =>
            \draw[->, thick] (h) to[bend right=0,label distance=0mm,edge
            label={\small{$1$}},pos=0.6] (a);
            \draw[->, thick, dotted] (h) to[bend left=0,label distance=0mm,edge
            label={\small{$0$}},swap,pos=0.1] (after0);
            % a =>
            \draw[->, thick] (a) to[bend right=0,label distance=0mm,edge
            label={\small{$1$}},pos=0.6] (n);
            \draw[->, thick, dotted] (a) to[bend left=0,label distance=0mm,edge
            label={\small{$0$}},swap,pos=0.1] (after0);
            % n =>
            \draw[->, thick] (n) to[bend right=0,label distance=0mm,edge
            label={\small{$1$}},pos=0.6] (k);
            \draw[->, thick, dotted] (n) to[bend left=0,label distance=0mm,edge
            label={\small{$0$}},swap,pos=0.1] (after0);
            % k =>
            \draw[->, thick] (k) to[bend right=0,label distance=0mm,edge
            label={\small{$1$}},pos=0.6] (after1);
            \draw[->, thick, dotted] (k) to[bend left=0,label distance=0mm,edge
            label={\small{$0$}},swap,pos=0.1] (after0);
        \end{scope}
        %\hfill
        \begin{scope}[xshift=3cm]
            \node[] (before) at (0.3,4.60) {}; % incoming edge_value
            \node[decisionnode, minimum size=0.7cm] (y) at (0.3,3.75) {\textbf{Y}};
            \node[decisionnode, minimum size=0.7cm] (o) at (0.6,2.8125) {\textbf{o}};
            \node[decisionnode, minimum size=0.7cm] (u) at (0.9,1.875) {\textbf{u}};
            \node[decisionnode, minimum size=0.7cm] (x) at (1.1,0.9375) {\textbf{!}};
            \node[draw,thick,fill=white!80!black,rectangle, minimum size=0.55cm]
            (after0) at (0.3,0) {{$0$}};
            \draw[->, thick] (before) to[bend right=0,label distance=0mm,edge
            label={},swap,pos=0.3] node[costnode,pos=0.4] {$0$} (y);
            % y =>
            \draw[->, thick, dotted] (y) to[bend right=30,label distance=0mm,edge
            label={\small{$0$}},swap,pos=0.1] node[costnode,pos=0.4] {$\infty$} (after0);
            \draw[->, thick] (y) to[bend left=30,label distance=0mm,edge
            label={\small{$1$}},pos=0.01] node[costnode,pos=0.35] {$0$} (o);
            % o =>
            \draw[->, thick, dotted] (o) to[bend right=30,label distance=0mm,edge
            label={\small{$0$}},swap,pos=0.1] node[costnode,pos=0.4] {$\infty$} (after0);
            \draw[->, thick] (o) to[bend left=30,label distance=0mm,edge
            label={\small{$1$}},pos=0.01] node[costnode,pos=0.35] {$0$} (u);
            % u =>
            \draw[->, thick, dotted] (u) to[bend right=30,label distance=0mm,edge
            label={\small{$0$}},swap,pos=0.1] node[costnode,pos=0.4] {$\infty$} (after0);
            \draw[->, thick] (u) to[bend left=30,label distance=0mm,edge
            label={\small{$1$}},pos=0.01] node[costnode,pos=0.35] {$0$} (x);
            % ! =>
            \draw[->, thick, dotted] (x) to[bend right=30,label distance=0mm,edge
            label={\small{$0$}},swap,pos=0.1] node[costnode,pos=0.4] {$\infty$} (after0);
            \draw[->, thick] (x) to[bend left=30,label distance=0mm,edge
            label={\small{$1$}},pos=0.01] node[costnode,pos=0.35] {$0$} (after0);
        \end{scope}
    \end{tikzpicture}
\end{center}

\vfill{}

\cleardoublepage
%\input{prestuff/erklaerung}
%\clearpage

\setcounter{tocdepth}{1}
\tableofcontents*

\mainmatter
\chapter{Introduction}
\chapterquote{If you fail to plan, you are planning to fail.}{Benjamin Franklin}

Automated planning is the science of designing and engineering machines, in particular computer programs, that can automatically derive behaviors to achieve goals.
The generation of such strategies, i.e., thinking before acting, is understood as an intelligent behavior and is one of the oldest areas in the field of Artificial Intelligence.
Nowadays, a classical planning problem can be informally described as the problem of finding a course of actions that allows an intelligent agent to move from any situation it finds itself in to one that satisfies its goals.
Since planning is not limited to a specific application domain, it was originally labeled as general problem solving \autocite{helmert-2008,newell-simon-cat1963} and can be used for different types of reasoning problems, such as elevator control \autocite{koehler-schuster-aips2000}, greenhouse logistics \autocite{helmert-lasinger-icaps2010}, natural-language generation \autocite{koller-hoffmann-icaps2010}, robot control \autocite{nilsson-tr1984,speck-et-al-ral2017,karpas-magazzeni-2020}, risk management \autocite{sohrabi-et-al-aaai2018,katz-et-al-icaps2021wsfinplan}, and many more.

\setboolean{kiviatBlue}{true}
\setboolean{kiviatRed}{false}
\renewcommand{\kiviatTopk}{1}
\renewcommand{\kiviatGoal}{1}
\renewcommand{\kiviatCost}{1}
\renewcommand{\kiviatPredicates}{1}
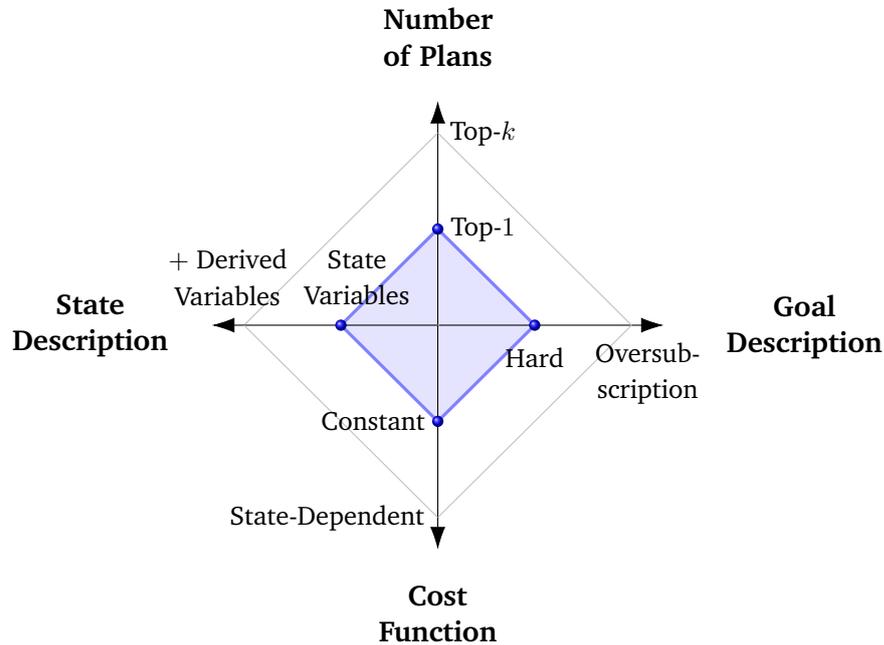
\begin{figure}[t]
    \begin{center}
        \begin{tikzpicture}
    \tkzKiviatDiagram[
    radial style/.style ={-{Latex[length=3mm, width=2mm]}},
    scale=0.85,
    label space=1.5,
    radial = 1,
    gap = 1.5,
    step = 1,
    lattice = 2]{
    \hspace{2.5cm}\mbox{\parbox{4cm}{{\begin{center}\textbf{Goal}\\\textbf{Description} \end{center}}}},
    {\textbf{Number of Plans}},
    \hspace{-1.5cm}\mbox{\parbox{3cm}{{\begin{center}\textbf{State}\\\textbf{Description}\end{center}}}},
    \textbf{Cost Function}}

    \ifbool{kiviatRed}{
        \tkzKiviatLine[very thick,color=red!50,
            fill=red!30,
            opacity=.35,
            mark=ball,
            mark size=2.5pt,
            ball color=red](\kiviatGoal{},\kiviatTopk{},\kiviatPredicates{},\kiviatCost{})
    }

    \ifbool{kiviatBlue}{
        \tkzKiviatLine[very thick,color=blue!50,
            fill=blue!30,
            opacity=.35,
            mark=ball,
            mark size=2.5pt,
            ball color=blue](1,1,1,1)
    }

    \ifbool{kiviatFull}{
    \tkzKiviatLine[very thick,color=darkgreen!50,
        fill=green!30,
        opacity=.35,
        mark=ball,
        mark size=2.5pt,
        ball color=darkgreen](1,2,2,2)
    \tkzKiviatLine[very thick,color=blue!50,
        fill=blue!30,
        opacity=.35,
        mark=ball,
        mark size=2.5pt,
        ball color=blue](1,1,1,1)
    \tkzKiviatLine[very thick,color=red!50,
        %fill=red!30,
        %opacity=.35,
        mark=ball,
        mark size=2.5pt,
        ball color=red](\kiviatGoal{},\kiviatTopk{},\kiviatPredicates{},\kiviatCost{})
    \LegendBox[shift={(-0cm,-2.25cm)}]{current bounding box.south west}%
    {
    red!100/{Forward symbolic search (contribution of this thesis)},
    asparagus!100/{Bidirectional symbolic search (contribution of this thesis)},
    blue!100/{Bidirectional symbolic search (previous state of the art)}}
    }
    {}

    % goal specification
    \draw[] node[align=center,fill=none] at (1.5,-0.5) {\small Hard};
    \draw[] node[align=center,fill=none] at (3.25,-0.75) {\small Oversub-\\ \small scription};
    %\draw[] node[] at (1.75,-0.6) {\rotatebox{-45}{hard}};
    %\draw[] node[align=center] at (3.85,-1.25) {\rotatebox{-45}{oversubscription}};

    % number of plans
    \draw[] node[align=center,fill=none] at (0.7,1.5) {\small Top-$1$};
    \draw[] node[align=center,fill=none] at (0.7,3.0) {\small Top-$k$};

    % derived predicates
    \draw[] node[align=center,fill=none] at (-1.25,0.75) {\small State\\ \small Variables};
    \draw[] node[align=center,fill=none] at (-3.25,0.75) {\small $+$ \small Derived\\ \small Variables};

    % cost function
    \draw[] node[align=center,fill=none] at (-1,-1.5) {\small Constant};
    \draw[] node[align=center,fill=none] at (-1.6,-3.0) {\small State-Dependent\phantom{0}};
\end{tikzpicture}
    \end{center}
    \caption[Overview of extension for classical planning.]{
        Overview of extensions for classical planning, where the blue color denotes the planning formalism supported by symbolic search approaches from the literature.
        %Overview of extensions for classical planning, where the red color indicates which planning formalism is supported by symbolic search approaches from the literature.
        %Overview of extensions for classical planning, where the red color denotes  the subset of supported extensions supported by symbolic search from the literature.
    }\label{fig:intro:kiviat}
\end{figure}
\renewcommand{\kiviatTopk}{1}
\renewcommand{\kiviatGoal}{1}
\renewcommand{\kiviatCost}{1}
\renewcommand{\kiviatPredicates}{1}
\setboolean{kiviatBlue}{false}
\setboolean{kiviatRed}{true}

In classical planning, some properties of a planning problem at hand require an expressive extension of the standard classical planning formalism to capture them in a concise way.
In this thesis, we consider four different extensions of classical planning (\Cref{fig:intro:kiviat}), all of which capture and extend different aspects of classical planning while retaining the core of the formalism: Single-agent planning problems in a fully observable, static, discrete, deterministic, and fully known environment \autocite{russell-norvig-2003}.
Unfortunately, most classical planners do not support any of these expressive extensions because they are usually based on heuristic search and it appears very challenging to design informative and fast to compute heuristics (goal-distance estimators) that consider additional problem properties.
This is especially true for cost-optimal planners, which additionally require that a heuristic is admissible, i.e., that the heuristic never overestimates the cost of reaching a goal state.
However, it is well known that such extensions are critical for an elegant and compact modeling of many real-world problems.
For example, extending the state description to include derived variables allows aspects of a planning problem that are not directly affected by the actions but are derived from the values of other variables to be modeled concisely using a set of logical axioms \autocite{thiebaux-et-al-aij2005}.
While in probabilistic planning a concise encoding of action costs or rewards in form of Markov decision processes has long been standard, in classical planning it is common to consider a potentially exponentially larger representation with constant action costs \autocite{geisser-phd2018}.
In many real-world applications, the description of goals is oversubscribed, i.e., there are a large number of desirable, often competing goals of varying value, and a system may not be able to achieve all of them with the available resources \autocite{smith-icaps2004}.
Unfortunately, conventional classical planning cannot handle such a scenario, because the goal is treated as a hard constraint that can either be accomplished or not.
Finally, in some cases, a single plan may be sufficient, but in practice it is often better to have many good alternatives to choose from in order to be more flexible \autocite{nguyen-et-al-aij2012}.

In this thesis, we investigate and discuss the computational complexity and compilability of classical planning incorporating the four expressive extensions mentioned earlier.
This analysis shows and highlights that for many real-world problems, it is desirable and even necessary to support these features natively.
We propose for the first time to use symbolic search for cost-optimal planning for those four expressive extensions (see \Cref{fig:intro:kiviat}).
Symbolic search provides a good basis for native support of these planning formalisms for several reasons.
A theoretical and empirical analysis of symbolic heuristic search in the form of \bddastar{} shows that the use of a heuristic in symbolic search does not always improve search performance.
This observation strengthens the fact that nowadays symbolic blind search, i.e., without heuristics, is the dominant search strategy of symbolic search, on par with explicit heuristic search in cost-optimal planning.
Thus, since symbolic search does not necessarily need heuristics to be efficient, the search efficiency does not suffer from the lack of efficient and informative heuristics for the proposed planning scenarios.
Based on these observations, we enhance symbolic search to obtain optimal, sound, and complete algorithms for planning with the expressive extensions under consideration.
Our empirical evaluations show that the presented symbolic search approaches perform favorably in all these planning settings compared with other state-of-the-art approaches.
Finally, we show that the proposed symbolic search approach is able to support planning tasks that use and require all expressive extensions at once.

\section{Outline}
This thesis is structured as follows.
In \Cref{ch:background}, we introduce the relevant background for this thesis concerning classical planning. In particular, we introduce the concepts of complexity, compilability and expressive power and present symbolic search for classical planning
together with different types of decision diagrams.
In \Cref{ch:symbolic_heuristic_search}, we investigate the question, why heuristics do not seem to pay off in symbolic search.
Specifically, we theoretically and empirically analyze the search behavior of symbolic heuristic search in form of \bddastar{} \autocite{speck-et-al-icaps2020}.
In \Cref{ch:axioms}, we summarize computational complexity and compilability results for planning with axioms from the literature.
We introduce three ways to extend symbolic search algorithms to support axioms natively and present an empirical evaluation \autocite{speck-et-al-icaps2019}.
In \Cref{ch:sdac}, we present computational complexity and compilability results for planning with state-dependent action costs \autocite{speck-et-al-icaps2021}.
Then, symbolic search algorithms for planning with state-dependent action costs are presented and an empirical evaluation is conducted \autocite{speck-et-al-icaps2018}.
In \Cref{ch:osp}, we discuss computational complexity and compilability results for oversubscription planning.
A symbolic search approach is presented and explained for oversubscription planning and an empirical evaluation is conducted \autocite{speck-katz-aaai2021}.
In \Cref{ch:topk}, we present computational complexity and compilability results for top-$k$ planning and introduce symbolic search for it, which we analyze theoretically and empirically \autocite{speck-et-al-aaai2020}.
In \Cref{ch:discussion}, we discuss the combination of the previously analyzed extensions for classical planning and the use of symbolic search for such a setting.
Finally, possible future work related to this thesis is
discussed.
\Cref{ch:conclusion} concludes and summarizes this thesis.

\section{Published Work}
The core results presented in this work have been published at leading AI and automated planning conferences.
In particular, the following publications form the backbone of this work.
At the beginning of each chapter, we indicate which publications form the basis for the content of the chapter.
\ifbool{attachPDFs}{
    Each of these core papers is attached in its publication form at the end of the thesis.
}

\renewcommand{\citebf}[1]{\textbf{#1}}
\begin{itemize}
    \item \renewcommand{\citeaddendum}{\\\textbf{(Best Student Paper Runner-Up Award)}}\fullcite{speck-et-al-icaps2021}\renewcommand{\citeaddendum}{}
    \item \fullcite{speck-katz-aaai2021}
    \item \fullcite{speck-et-al-icaps2020}
    \item \fullcite{speck-et-al-aaai2020}
    \item \fullcite{speck-et-al-icaps2019}
    \item \renewcommand{\citeaddendum}{\\\textbf{(Partly based on ideas from my master thesis)}}\fullcite{speck-et-al-icaps2018}\renewcommand{\citeaddendum}{}
\end{itemize}
\renewcommand{\citebf}[1]{#1}

The following publications also resulted from my doctoral research, but are not a central part of this thesis.
However, some of the ideas and concepts presented in this thesis were used in these research papers.

\renewcommand{\citebf}[1]{\textbf{#1}}
\begin{itemize}
    \item \fullcite{speck-et-al-icaps2021b}
          \begin{itemize}
              \item \renewcommand{\citeaddendum}{\textbf{ (superseded)}}\fullcite{speck-et-al-icaps2020wsprl}\renewcommand{\citeaddendum}{}
                    %\item \renewcommand{\citeaddendum}{\textbf{ (superseded)}}\fullcite{speck-et-al-arxiv2020}\renewcommand{\citeaddendum}{}
          \end{itemize}
    \item \fullcite{drexler-et-al-icaps2021}
          \begin{itemize}
              \item \renewcommand{\citeaddendum}{\textbf{ (superseded)}}\fullcite{drexler-et-al-icaps2020wshsdip}\renewcommand{\citeaddendum}{}
          \end{itemize}
    \item \fullcite{behnke-speck-aaai2021}
    \item \fullcite{geisser-et-al-socs2020}
    \item \fullcite{corraya-et-al-ki2019}
    \item \fullcite{geisser-et-al-icaps2019wsipc}
    \item \fullcite{speck-et-al-ipc2018}
    \item \renewcommand{\citeaddendum}{\textbf{ (Winner of the Discrete Markov Decision Process Track)}}\fullcite{geisser-speck-ippc2018}\renewcommand{\citeaddendum}{}
\end{itemize}
\renewcommand{\citebf}[1]{#1}

Finally, the following patents were filed based on work conducted during my doctoral process.

\renewcommand{\citebf}[1]{\textbf{#1}}
\begin{itemize}
    \item \fullcite{speck-et-al-patentus2021}
    \item \fullcite{speck-et-al-patenteu2020}
\end{itemize}
\renewcommand{\citebf}[1]{#1}

\section{Awards}
Together with my colleagues, I received the following awards for my work during my doctoral process.

\renewcommand{\citebf}[1]{\textbf{#1}}
\begin{itemize}
    \item \textbf{Best Student Paper Runner-Up Award} for the paper \say{On the Compilability and Expressive Power of State-Dependent Action Costs} \autocite{speck-et-al-icaps2021} at ICAPS 2021
    \item \textbf{Winner of the Discrete Markov Decision Process Track} for the planning system \say{Prost-DD} \autocite{geisser-speck-ippc2018} at the Sixth International Probabilistic Planning Competition at ICAPS 2018
\end{itemize}
\renewcommand{\citebf}[1]{#1}

\chapter{Background to Classical Planning}\label{ch:background}
\chapterquote{[Binary Decision Diagrams are] one of the only really fundamental data structures that came out in the last twenty-five years.}{Donald Knuth (2008)}

In AI planning, the objective is to automatically find a solution (a plan) to a given problem (a planning task).
In this chapter, we formally define classical planning and discuss the concepts of complexity, compilability and expressive power in this context.
Finally, symbolic search for classical planning is presented and different types of decision diagrams are introduced.

\section{Formalism}
In this thesis, we consider classical planning domains and tasks that are characterized by the \name{sas}$^+$ formalism \autocite{backstrom-nebel-compint1995}.

\begin{definition}[Planning Domain]
  \label{def:planning-domain}
  A \emph{planning domain} is a tuple $\domain = \domaintuple$. $\vars$ is a finite set of state variables, each associated with a finite domain $\vardomain{}_v = \{0, \dots, |\vardomain{}_v| - 1\}$. A \emph{fact} is a pair $(v, d)$, where $v \in \vars$ and $d \in \vardomain_v$. For binary variables we also write $v$ for $(v,1)$ and $\lnot v$ for $(v,0)$. A \emph{partial state} or \emph{partial variable assignment} $s$ over $\vars$ is a function on some subset of $\vars$ such that $s(v) \in \vardomain{}_v$, wherever $s(v)$ is defined. If $s$ assigns a value to each variable $v \in \vars$, $s$ is called a \emph{state}. With $\states$ we refer to the set of all possible states defined over $\vars$. $\operators$ is a finite set of \emph{operators/actions}, where each operator is a pair $o = \langle \pre_o, \eff_o \rangle$ of partial variable assignments, called \emph{preconditions} and \emph{effects}. An operator $o \in \operators$ is applicable in a state $s$ iff $\pre_o$ is satisfied in $s$, i.e., $\pre_o \subseteq s$. Applying operator $o$ to state $s$ yields the state $s[o]=s'$, where $s'(v) = \eff_o(v)$ for all variables $v \in \vars$ for which $\eff_o$ is defined and $s'(v) = s(v)$ otherwise. Finally, $\constcostfun: \operators \to \mathbb{N}_0$ is the \emph{cost function} of $\domain$. The size of $\domain$ is $\size{\domain} = \sum_{v \in \vars}|\vardomain{}_v| + \sum_{o \in \operators}(|\pre_o| + |\eff_o|) + \size{\constcostfun}$, where $\size{\constcostfun}$ is bounded by $|\operators| \cdot \lceil \log_2 N \rceil$, where $N$ is the highest action cost value.
\end{definition}

Planning domains together with initial states and goal descriptions form planning tasks.

\begin{definition}[Planning Task]
  \label{def:planning-task}
  A \emph{planning task} is a tuple $\task = \langle \domain, \init, \goal, \limit \rangle$ consisting of a \emph{planning domain}, an \emph{initial state} $\init \in \states$, a partial variable assignment $\goal$ (\emph{goal condition}), which defines all possible goal states $\goalstates \subseteq \states$, and a \emph{cost bound} $\limit \in \mathbb{N}_0 \cup \{ \infty \}$. For simplicity, we also write $\langle \vars, \operators, \constcostfun, \init, \goal \rangle$ for a planning task with domain $\domain = \domaintuple$ and $\limit = \infty$.
  The \emph{size} of $\task$ is $\size{\task} = \size{\domain} + |\goal| + |\init| + \size{\limit}$, where $\size{\limit} = \lceil \log_2 B \rceil$ is the binary encoding size of the cost bound.
\end{definition}

A planning task is often defined without a cost bound $\limit$, which is equivalent to setting the cost bound to infinity (or a sufficient upper bound). However, we use planning tasks with cost bounds to perform complexity and compilability analyses in this dissertation.

The objective of classical planning is to determine plans, which are sequences of applicable actions leading from the initial state to a goal state, respecting the given cost bound $\limit$.

\begin{definition}[Plan]
  \label{def:plan}
  A \emph{plan} $\plan = \langle o_0, \dots, o_{n-1} \rangle$ for planning task
  $\task$ is a sequence of applicable operators that generates a sequence of
  states $s_0, \dots, s_n$, where  $s_0 = \init$, $s_n \in \goalstates$ is a goal state, $s_{i+1} = s_i[o_i]$ for all $i = 0, \dots, n-1$, and the \emph{cost} of $\pi$ is less than or equal to the cost bound $\limit$, i.e., $\cost(\plan) = \sum_{i=0}^{{n-1}} \constcostfun(o_i) \leq \limit$. Plan $\pi$ is \emph{optimal} if
  there is no cheaper plan. We call $\size{\plan} = n$ the \emph{length} or \emph{size} of $\plan$. With $P_\task$ we refer to the (possibly infinite) \emph{set of all plans} for a planning task $\task$.
\end{definition}

The search for an optimal plan, i.e., a plan with the lowest cost, is called cost-optimal planning, or optimal planning for short, and is the focus of this thesis. 
\Cref{ex:rover} describes a classical planning domain and task of a Mars rover, which we will use in this work to motivate and explain expressive features that extend the classical planning formalism.

\begin{example}\label{ex:rover}
  Consider a Mars rover like Perseverance\footnote{\label{note1}\url{https://mars.nasa.gov/mars2020/mission/overview/} (Accessed: 2021-10-12)} equipped with a drone like Ingenuity\footref{note1}, which are designed to perform autonomous tasks. 
  Such a scenario is illustrated in \Cref{fig:mars_example}. 
  The dynamics of this example, i.e., the domain, are as follows.

  The rover can navigate between adjacent cells if they are free (impassable cells are highlighted in red). Navigating the rover between cells has no cost, i.e., a cost of $0$. The drone can launch and land on the rover if both are in the same location. Launching and landing each has a cost of $5$. In addition, the drone can take an image at its current position at a cost of $2$ and fly between any two cells, ignoring the impassability of cells to the rover. The cost of flying from one cell $\textit{from} = (x,y)$ to another cell $\textit{to} = (x',y')$ is the Manhattan distance between these two cells, i.e., $|x-x'|+|y-y'|$.

  In this particular planning task, the rover along with the drone is initially located at (7,3), the actual landing site of Perseverance. The goal is to take pictures at (6,1) and (10,1) and travel with the drone equipped to (0,5), a location known as ``Three Forks'' from which new missions can be commenced.  The cost bound for this task is infinite.

  An optimal plan for this task is $\pi^\rover{} = \langle$\navigate{7}{2},  \navigate{7}{1}, \startDrone{7}{1}, \fly{7}{1}{6}{1}, \takeImg{6}{1}, \fly{6}{1}{10}{1}, \takeImg{10}{1}, \fly{10}{1}{7}{1}, \landDrone{7}{1}, \navigate{7}{2}, \navigate{7}{3}, \dots, \navigate{0}{5}$\rangle$ with a cost of $\cost(\pi^\rover{}) = 5 + 5 + 2 \cdot 2 + 8 = 24$, since navigating the rover costs $0$, the drone launches and lands once with a cost of $5$ each, takes images twice with a cost of $2$ each, and flies a total distance of $8$ cells.

  % Formally, the Rover planning domain $\roverdomain = \langle \rovervars, \roveroperators, \roverconstcostfun \rangle$ and rover task $\rovertask = \langle \roverdomain, \roverinit, \rovergoal, \roverlimit \rangle$ are definied as follows.

  % \begin{compactitem}
  %   \item $\rovervars = \{\textit{rover-at-x}, \textit{rover-at-y}, \textit{drone-at-x}, \textit{drone-at-y}, \textit{drone-on-rover}\} \\
  %           \phantom{\rovervars} \cup \{\textit{adjacent-}x_1\textit{-}x_2 ~|~ x_1, x_2 \in \{0,\dots,10\} \} \\
  %           \phantom{\rovervars} \cup \{\textit{adjacent-}y_1\textit{-}y_2 ~|~ y_1, y_2 \in \{0,\dots,8\} \} \\
  %           \phantom{\rovervars} \cup \{\textit{free-}x\textit{-}y, \textit{taken-image-}x\textit{-}y ~|~ x \in \{0,\dots,10 \}, y \in \{0,\dots,8\} \}
  %         $
  %   \item $\vardomain_{\textit{rover-at-x}} = \vardomain_{\textit{drone-at-x}} = \{0,\dots,10\}$
  %   \item $\vardomain_{\textit{rover-at-y}} = \vardomain_{\textit{drone-at-y}} = \{0,\dots,8\}$
  %   \item $\vardomain_{v} = \{0,1\} \text{   if } v \in \rovervars \setminus \{\textit{rover-at-x}, \textit{rover-at-y}, \textit{drone-at-x}, \textit{drone-at-y}\}$
  %   \item $\roveroperators = \{\textit{move-}x_1y_1\textit{-}x_2y_2 ~|~ x_1,x_2 \in \{0,\dots,10\} \land y_1,y_2 \in \{0,\dots,8\}\}$
  % \end{compactitem}
\end{example}

\begin{figure}
  \begin{center}
    \input{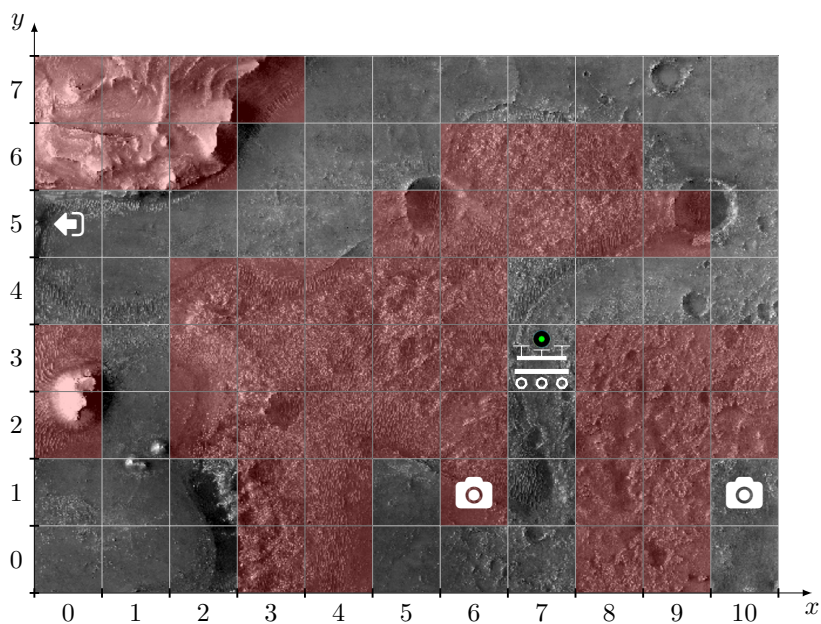}
  \end{center}
  \caption[Visualization of the Mars rover planning task (running example).]{Visualization of the Mars rover planning task used as a running example. The original image is from NASA/JPL-Caltech/University of Arizona\protect\footnotemark{} and shows the Jezero crater, where the green dot indicates the actual landing site of the Perseverance rover.
  Rover, drone, grid lines, red coloring (impassable cells for the rover), camera (goal: take pictures), and arrow (goal: cell to which the rover should travel with the drone) are added.}
  \label{fig:mars_example}
\end{figure}
\footnotetext{\url{https://mars.nasa.gov/resources/25621/perseverances-landing-spot-in-jezero-crater/} (Accessed: 2021-10-12)}

\section{Complexity and Compilations}
A natural question that arises is how hard planning is in general. More precisely, \emph{the bounded plan existence problem} of planning is the problem of deciding for a given planning task \task{} whether there exists a plan. It was shown that this problem is \PSPACE{}-complete for \name{strips} \autocite{bylander-aaai1997} and \name{sas}$^+$ planning tasks \autocite{backstrom-nebel-compint1995}.

\begin{theorem}[\cite{bylander-aij1994}]
  \label{thm:bounded-plan-existence}
  Bounded plan existence of planning is \PSPACE{}-complete. \qed
\end{theorem}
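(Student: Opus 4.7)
The plan is to establish both \PSPACE{}-membership and \PSPACE{}-hardness of bounded plan existence. For the upper bound I would appeal to Savitch's theorem and instead show membership in \NPSPACE{}. A state $s \in \states$ is an assignment to $\vars$ and can be stored in $O(\sum_{v \in \vars}\lceil \log_2|\vardomain_v|\rceil)$ bits, which is polynomial in $\size{\task}$. The nondeterministic procedure keeps only the current state in memory, starts from $\init$, and in each step guesses an applicable operator $o \in \operators$ (verifying $\pre_o \subseteq s$) and overwrites the state in place with $s[o]$, accepting as soon as $\goal$ is satisfied. To respect the cost bound, it additionally maintains a running sum of operator costs; since $\limit$ is given in binary, this counter fits in polynomial space, and the run is aborted once the accumulated cost would exceed $\limit$. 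A step counter of polynomial bit length, bounded by $\log \prod_{v \in \vars}|\vardomain_v|$, guarantees termination whenever no plan exists.

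For the lower bound I would reduce from the canonical \PSPACE{}-complete acceptance problem of a polynomially space-bounded deterministic Turing machine $M$ running in space $p(n)$. Given an input $x$ of length $n$, I construct a planning task whose state variables comprise one tape-cell variable $v_i$ per position $i \in \{1, \dots, p(n)\}$ ranging over the tape alphabet, a head-position variable, and a control-state variable. The initial state $\init$ encodes $x$ on the tape with the head at position $1$ and $M$ in its start state; $\goal$ fixes the control-state variable to an accepting value; $\limit = \infty$ and all operator costs are zero. For every transition of $M$ and every admissible head position I introduce one operator whose precondition reads the scanned symbol together with the current control state and head position, and whose effect updates all three. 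Plans in the resulting task then correspond bijectively to accepting runs of $M$ on $x$, and the construction is clearly polynomial in $n$.

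The main obstacle is respecting the \name{sas}$^+$ restrictions of \Cref{def:planning-domain}, where preconditions and effects are partial variable assignments without conditional or disjunctive constructs. This forces me to introduce a separate operator per (control state, scanned symbol, head position) triple rather than a single parameterised rule, and to verify that the per-operator preconditions uniquely identify which transition of $M$ is simulated and that the effect leaves all other tape cells untouched. Once this uniform polynomial encoding is fixed, correctness and the polynomial time bound of the reduction both follow from standard Turing-machine simulation arguments.
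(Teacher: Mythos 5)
The paper does not actually prove this theorem; it is stated with a \qed as a citation to \textcite{bylander-aij1994} (and to B\"ackstr\"om and Nebel for the \name{sas}$^+$ setting). Your sketch is correct and reproduces exactly the standard argument from that cited literature — nondeterministic forward simulation in polynomial space plus Savitch's theorem for membership (the cost bound and step counter being handled as you describe, since nonnegative costs let one restrict attention to plans no longer than the number of states), and simulation of a polynomially space-bounded Turing machine by one operator per (control state, symbol, head position) triple for hardness.
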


%If two planning formalisms belong to different complexity classes, it is clear that the one belonging to the more complex class is at least as expressive as the other.
Interestingly, if two planning formalisms belong to the same complexity class, it does not directly mean that both have the same expressive power.
Expressive power is a measure of how concisely planning domains and plans can be expressed in a given formalism with compilation schemes \autocite{nebel-jair2000}. In the further course of this work, we will use compilation schemes to show whether and under which conditions certain model extensions such as derived variables or state-dependent action costs can be compiled away, i.e., expressed in the original formalism as a classical planning task.

The following definition formalizes compilation schemes, which translate from one planning formalism to another while preserving plan existence and polynomially preserving task sizes.

\begin{definition}[Compilation Scheme]
  \label{def:compilation-scheme}
  A compilation scheme or, in short, a compilation is a tuple of functions $\compfun = \langle \compdsfun, \compifun, \compgfun \rangle$ on planning domains that induces a function on planning tasks as follows:
  \[\comp(\task) =  \bigr\langle
    \compdsfun(\domain), \init \cup \compifun(\domain) ,
    \goal \cup \compgfun(\domain), \limit \bigr\rangle\text{,}\] where $\task = \langle \domain, \init, \goal, \limit \rangle$, satisfying the following conditions:
  \begin{enumerate}
    \item\label{I:compilation:reduction}
          there exists a plan for $\comp(\task)$ iff there exists a plan for $\task$, and
          %\item\label{II:compilation:time} the functions $\compifun$ and $\compgfun$ are polynomial-time, 
    \item\label{II:compilation:size}
          the size of the results of $\compdsfun, \compifun$, and $\compgfun$ is polynomial in the size of the arguments.
  \end{enumerate}
\end{definition}

%The general idea of compilations schemes introduced by \textcite{nebel-jair2000} are solution preserving mappings from Y domain structures to X domain structures.
%Such compilation schemes involve using partial state specifications in a formalism that requires full state specifications, so a translation of the initial state specification is necessary. However, such state translation functions should be very limited. They should only depend on the set of symbols in the source formalism. Thus, the translation of a literal in a state specification should not depend on the whole specification, and they should be efficiently computable. In this thesis, it is sufficient not to change the source variables of the initial state in any way and only to add new variables. Furthermore, with a consistent naming convention, all three functions of a compilation scheme can refer to the same variables.
Note that our definition of a compilation is \say{cost-sensitive}. 
By not allowing the cost bound to change, we ensure that for every plan in the original task, there must be a plan with cost at most $\limit$ in the target task, and vice versa.
Besides preserving planning task sizes polynomially, another desirable property is preservation of plan lengths. This is captured by the following definition.

\begin{definition}[Compilations Preserving Plan Length]
  \label{def:compilation-preserving-plan-length}
  A compilation scheme $\compfun$ is said to \emph{preserve plan length exactly} (modulo an additive constant) if for every plan $\plan{}$ solving an instance $\task$, there exists a plan $\plan'$ solving $\comp(\task)$ with $\size{\plan{}'}~ \leq \size{\plan{}} + k$ for some constant $k \in \mathbb{N}_0$. It is said to \emph{preserve plan length linearly} if $\size{\plan'}~ \leq c \cdot \size{\plan} + k$ for constants $c \in \mathbb{N}_0$ and $k \in \mathbb{N}_0$, and to \emph{preserve plan length polynomially} if $\size{\plan'}~ \leq p(\size{\plan},\size{\task})$ for some polynomial $p$.
\end{definition}

Intuitively, compilability while exactly preserving the plan length shows that the planning formalism we use as the target formalism is at least as expressive as the source formalism. If compilation requires polynomial or even exponential plan growth, this can lead to an infeasible challenge for planning algorithms that indicates an increase in expressive power \autocite{nebel-jair2000,thiebaux-et-al-aij2005}.

\section{Symbolic Search}
Symbolic search is a state space exploration technique that has its origin in
the field of Model Checking \autocite{mcmillan-1993}. Symbolic search algorithms resemble
their explicit counterparts, but expand and generate whole sets of states in contrast to individual states.
In symbolic search, a set of states $S \subseteq \states$ is represented by its \emph{characteristic function}
$\charf_S$, which is a Boolean function $\charf_S : \states \to \{0,1\}$ that represents whether a given
state belongs to $S$ or not.
More precisely, states contained in $S$ are mapped to $1$ and all others to $0$, i.e., $\charf_S(s) = 1$ if $s \in S$ and $\charf_S(s) = 0$ otherwise.
Similarly, operators $\operators$ can be represented as so-called \emph{transition relations (TRs)}, which are sets of state pairs, namely predecessor and successor states. The characteristic function of a transition relation $T_O$ representing a set of operators $O \subseteq \operators$ is a function $\charf_{T_O}: \states \times \states \to \{ 0,1\}$ that maps all pairs of states $(s,s')$ to true iff successor $s' \in \states$ is reachable from predecessor $s \in \states$ by applying an operator $o \in O$. Given a set of states $S$ and a TR $T_O$, the \emph{image} (\emph{preimage}) operator computes the set of successor (predecessor) states $S'$ of $S$ through $T_O$.
Note that a single TR can in general represent multiple operators with the same cost \autocite{torralba-et-al-icaps2013,torralba-et-al-aij2017}.

\emph{Symbolic (blind) search} describes a symbolic version of uniform cost search, also known as Dijkstra's algorithm \autocite{dijkstra-nummath1959}, which can be performed in different search directions.
\emph{Symbolic forward (blind) search} (progression) starts from the representation of the initial state $\charf_\init$, and iteratively computes the image until a set of states $S$ is found whose intersection with the goal $\charf_\goal$ is non-empty, i.e., $\charf_S \land \charf_\goal \neq \bot$.
The \emph{open} and \emph{closed list} are represented as lists of state sets partitioned into subsets with identical $g$-values, where the \emph{$g$-value} describes the cost required to reach these states.
During the search, the closed list is used to track and prune states that have already been expanded.
\emph{Symbolic backward (blind) search} (regression) can be realized by starting with the goal states, applying the preimage operation until the initial state is found.
In \emph{symbolic bidirectional (blind) search}, both forward and backward symbolic search are performed simultaneously, maintaining two symbolic searches with separate open and closed lists. A search step consists either of a backward or a forward search step (and modifies the respective open and closed lists).  If a state of the current search direction is expanded, which is already contained in the closed list of the search in the opposite direction, a goal path is found.
In general, all strategies, which switch iteratively between both search
directions, guarantee optimality if the termination criterion is chosen accordingly \autocite{pohl-tr1969}.

Finally, a \emph{plan reconstruction} \autocite{torralba-phd2015} is performed to obtain the final plan.
In explicit search, each (search) node keeps track of its parent node, making it easy to construct a plan when a goal state is found.
In symbolic search, however, the parents are not directly known, but all parents are stored in the
closed list with their reachability costs. 
Therefore, it is possible to perform a greedy search, which opposes the actual search direction with the perfect heuristic (cost of shortest path) obtained by the corresponding closed list.
In forward (backward) symbolic search the plan is constructed by a greedy backward (forward) search starting with a found goal state (the initial state). The plan reconstruction procedure iterates over all operators (descending cost)
and selects an explicit predecessor (successor) contained in the closed
list. The latter process is repeated until the initial state (a goal state) is
reached. 
Note that a greedy search in combination with the perfect heuristic leads the search directly from a starting state to a target state, making the runtime of the plan reconstruction negligible with respect to the actual search.
For bidirectional search, a greedy best-first search is performed twice, both opposing the actual search direction. More specifically, both plan reconstructions are initialized with the meeting point and one search is a regression to the initial state, while the other search is a progression to the goal states.

\section{Decision Diagrams}
In symbolic search the most prominent way to represent (characteristic) functions are decision diagrams such as Binary Decision Diagrams (BDDs) \autocite{bryant-dac1985}, Algebraic Decision
Diagrams (ADDs) \autocite{bahar-et-al-fmsd1997} or Edge-Valued Binary
Decision Diagrams (EVBDDs) \autocite{lai-et-al-ieeetc1996} which are data structures that offer a compromise between conciseness of representation and efficiency of manipulation \autocite{drechsler-becker-1998}. The main idea is to break down a function $f$ into subfunctions, so that $f$ can be reassembled from them.

\begin{definition}[Binary Decision Diagram]\label{def:bdd}
  A \emph{Binary Decision Diagram (BDD)} $B_S$ is a directed acyclic graph with a single root node and two terminal nodes: the 0-sink and the 1-sink. Each inner node corresponds to a binary\footnote{Note that each finite domain variable $v \in \vars$ can be represented by $\lceil \log_2 |\vardomain_v| \rceil$ binary variables.
    Although there are generalizations for each type of decision diagram to support multi-valued variables directly, this dissertation mainly considers decision diagrams that support only binary variables, unless explicitly stated otherwise.
  }
  variable $v \in \vars$ and has two successors, where the \emph{low edge} represents that variable $v$ is false, while the \emph{high edge} represents that variable $v$ is true. By traversing the BDD according to a given assignment, the represented Boolean function $\charf_S$ can be evaluated.
\end{definition}

ADDs and EVBDDs have been successfully used to represent numerical functions $f : \states \to \mathbb{Q} \cup \{\infty\}$ \autocite{hansen-et-al-sara2002,torralba-et-al-ijcai2013,speck-et-al-ipc2018} in symbolic planning. An \emph{Algebraic Decision Diagrams (ADD)} $A_f$ is similar to a BDD, but has an arbitrary number of terminal nodes with different discrete values including real numbers. An \emph{Edge-Valued Binary Decision Diagram (EVBDD)} $E_f$ is a rooted directed acyclic graph with edge values, a dangling incoming edge to the root node and a single terminal node. The function $f$ can be evaluated by traversing the graph according to the variable assignment and simultaneously adding up the edge weights. The resulting sum is the function value for the corresponding variable assignment. In practice, the generalization of EVBDDs, so-called \emph{Edge-Valued Multi-valued Decision Diagrams (EVMDDs)} \autocite{ciardo-siminiceanu-fmcad2002}, where variables can be multi-valued, is more common in planning than EVBDDs \autocite{geisser-et-al-ijcai2015,geisser-et-al-icaps2016,mattmueller-et-al-aaai2018}.

Decision diagrams are typically considered in a reduced and ordered form \autocite{becker-molitor2008} and can represent exponentially many states requiring only polynomial space. A decision diagram is called \emph{ordered} if on all paths from the root to a sink variables appear in the same order. A decision diagram is called \emph{reduced} if isomorphic subgraphs are merged and any node is eliminated whose two children are identical. For fixed variable orders, reduced and ordered decision diagrams are unique \autocite{bryant-ieeecomp1986,bahar-et-al-fmsd1997,lai-et-al-ieeetc1996}.
Note that for EVMDDs the corresponding edge values must be taken into account. From now on we only talk about reduced and ordered decision diagrams and assume a fixed variable order.

\begin{definition}[Decision Diagram Size]\label{def:dd-size}
  The \emph{size} $|D|$ of a decision diagram $D$ is the number of nodes of $D$.
\end{definition}

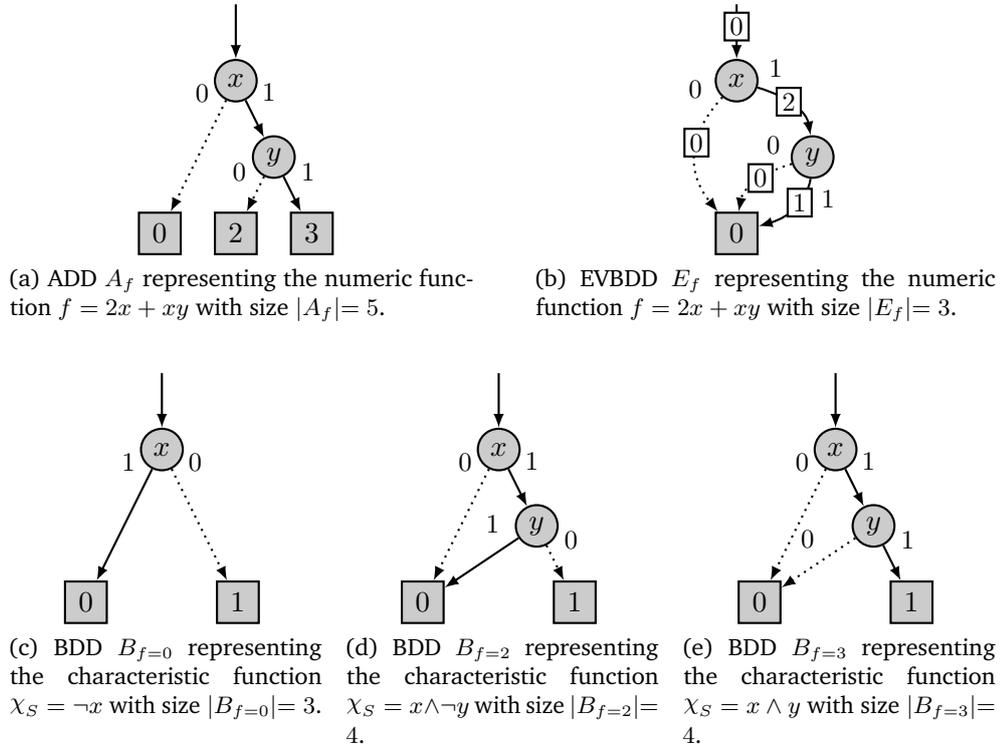
\begin{figure}[tbp]
  \centering
  \subfloat[ADD $A_f$ representing the numeric function $f = 2x + xy$ with size $|A_f|=5$.\label{fig:add}]{
    \centering
    \makebox[0.45\textwidth][c]{
      \begin{tikzpicture}[%
        costnode/.style={pos=0.6,rectangle,thick,
                inner sep=2pt,draw,fill=white,text=black,font=\normalsize},%
        decisionnode/.style={circle,thick,minimum size=4mm,
                inner sep=2pt,draw,fill=white!80!black,text=black,font=\normalsize},%
        xscale=2,yscale=1.35,>=latex]
    \node[] (before) at (0,2.35) {}; % incoming edge_value
    \node[decisionnode, minimum size=0.55cm] (x) at (0,1.5) {$x$};
    \node[decisionnode, minimum size=0.55cm] (y) at (0.25,0.75) {$y$};
    \node[draw,thick,fill=white!80!black,rectangle, minimum size=0.55cm]
    (after0) at (-0.5,0) {{$0$}};
    \node[draw,thick,fill=white!80!black,rectangle, minimum size=0.55cm]
    (after1) at (0,0) {{$2$}};
    \node[draw,thick,fill=white!80!black,rectangle, minimum size=0.55cm]
    (after2) at (0.5,0) {{$3$}};
    \draw[->, thick] (before) to (x);
    % x =>
    \draw[->, thick, dotted] (x) to[bend right=0,label distance=0mm,edge
    label={\small{$0$}},swap,pos=0.1] (after0);
    \draw[->, thick] (x) to[bend left=0,label distance=0mm,edge
    label={\small{$1$}},pos=0.3] (y);
    % y =>
    \draw[->, thick, dotted] (y) to[bend right=0,label distance=0mm,edge
    label={\small{$0$}},swap,pos=0.4] (after1);
    \draw[->, thick] (y) to[bend left=0,label distance=0mm,edge
    label={\small{$1$}},pos=0.4] (after2);
\end{tikzpicture}
    }
  }\hfill
  \subfloat[EVBDD $E_f$ representing the numeric function $f = 2x + xy$ with size $|E_f| = 3$.\label{fig:evbdd}]{
    \centering
    \makebox[0.45\textwidth][c]{
      \begin{tikzpicture}[%
                costnode/.style={pos=0.6,rectangle,thick,solid,
                                inner sep=2pt,draw,fill=white,text=black,font=\small},%
                decisionnode/.style={circle,thick,minimum size=4mm,
                                inner sep=2pt,draw,fill=white!80!black,text=black,font=\normalsize},%
                xscale=2,yscale=1.35,>=latex]
        \node[] (before) at (0,2.35) {}; % incoming edge_value
        \node[decisionnode, minimum size=0.55cm] (x) at (0,1.5) {$x$};
        \node[decisionnode, minimum size=0.55cm] (y) at (0.5,0.75) {$y$};
        \node[draw,thick,fill=white!80!black,rectangle, minimum size=0.55cm]
        (after0) at (0,0) {{$0$}};
        \draw[->, thick] (before) to[bend right=0,label distance=0mm,edge
        label={},swap,pos=0.3] node[costnode,pos=0.4] {$0$} (x);
        % x =>
        \draw[->, thick, dotted] (x) to[bend right=30,label distance=0mm,edge
        label={\small{$0$}},swap,pos=0.1] node[costnode,pos=0.4] {$0$} (after0);
        \draw[->, thick] (x) to[bend left=30,label distance=0mm,edge
        label={\small{$1$}},pos=0.01] node[costnode,pos=0.4] {$2$} (y);
        % y =>
        \draw[->, thick, dotted] (y) to[bend right=30,label distance=0mm,edge
        label={\small{$0$}},swap,pos=0.01] node[costnode,pos=0.4] {$0$} (after0);
        \draw[->, thick] (y) to[bend left=30,label distance=0mm,edge
        label={\small{$1$}},pos=0.01] node[costnode,pos=0.4] {$1$} (after0);
\end{tikzpicture}
    }
  }
  \\
  \subfloat[BDD $B_{f=0}$ representing the characteristic function $\charf_S = \lnot x$ with size $|B_{f=0}| = 3$.\label{fig:bdd_0}]{
  \centering
  \makebox[0.3\textwidth][c]{
    \begin{tikzpicture}[%
                costnode/.style={pos=0.6,rectangle,thick,
                                inner sep=2pt,draw,fill=white,text=black,font=\normalsize},%
                decisionnode/.style={circle,thick,minimum size=4mm,
                                inner sep=2pt,draw,fill=white!80!black,text=black,font=\normalsize},%
                xscale=2,yscale=1.35,>=latex]
        \node[] (before) at (0,2.35) {}; % incoming edge_value
        \node[decisionnode, minimum size=0.55cm] (x) at (0,1.5) {$x$};
        \node[draw,thick,fill=white!80!black,rectangle, minimum size=0.55cm]
        (after0) at (-0.5,0) {{$0$}};
        \node[draw,thick,fill=white!80!black,rectangle, minimum size=0.55cm]
        (after1) at (0.5,0) {{$1$}};
        \draw[->, thick] (before) to (x);
        % x =>
        \draw[->, thick, dotted] (x) to[bend right=0,label distance=0mm,edge
        label={\small{$0$}},pos=0.1] (after1);
        \draw[->, thick] (x) to[bend left=0,label distance=0mm,edge
        label={\small{$1$}},swap,pos=0.1] (after0);
\end{tikzpicture}
  }
  }\hfill
  \subfloat[BDD $B_{f=2}$ representing the characteristic function $\charf_S = x \land \lnot y$ with size $|B_{f=2}| = 4$.\label{fig:bdd_2}]{
  \centering
  \makebox[0.3\textwidth][c]{
    \begin{tikzpicture}[%
                costnode/.style={pos=0.6,rectangle,thick,
                                inner sep=2pt,draw,fill=white,text=black,font=\normalsize},%
                decisionnode/.style={circle,thick,minimum size=4mm,
                                inner sep=2pt,draw,fill=white!80!black,text=black,font=\normalsize},%
                xscale=2,yscale=1.35,>=latex]
        \node[] (before) at (0,2.35) {}; % incoming edge_value
        \node[decisionnode, minimum size=0.55cm] (x) at (0,1.5) {$x$};
        \node[decisionnode, minimum size=0.55cm] (y) at (0.25,0.75) {$y$};
        \node[draw,thick,fill=white!80!black,rectangle, minimum size=0.55cm]
        (after0) at (-0.5,0) {{$0$}};
        \node[draw,thick,fill=white!80!black,rectangle, minimum size=0.55cm]
        (after1) at (0.5,0) {{$1$}};
        \draw[->, thick] (before) to (x);
        % x =>
        \draw[->, thick, dotted] (x) to[bend right=0,label distance=0mm,edge
        label={\small{$0$}},swap,pos=0.1] (after0);
        \draw[->, thick] (x) to[bend left=0,label distance=0mm,edge
        label={\small{$1$}},pos=0.3] (y);
        % y =>
        \draw[->, thick, dotted] (y) to[bend right=0,label distance=0mm,edge
        label={\small{$0$}},pos=0.4] (after1);
        \draw[->, thick] (y) to[bend left=0,label distance=0mm,edge
        label={\small{$1$}},swap,pos=0.1] (after0);
\end{tikzpicture}
  }
  }\hfill
  \subfloat[BDD $B_{f=3}$ representing the characteristic function $\charf_S = x \land y$ with size $|B_{f=3}| = 4$.\label{fig:bdd_3}]{
  \centering
  \makebox[0.3\textwidth][c]{
    \begin{tikzpicture}[%
                costnode/.style={pos=0.6,rectangle,thick,
                                inner sep=2pt,draw,fill=white,text=black,font=\normalsize},%
                decisionnode/.style={circle,thick,minimum size=4mm,
                                inner sep=2pt,draw,fill=white!80!black,text=black,font=\normalsize},%
                xscale=2,yscale=1.35,>=latex]
        \node[] (before) at (0,2.35) {}; % incoming edge_value
        \node[decisionnode, minimum size=0.55cm] (x) at (0,1.5) {$x$};
        \node[decisionnode, minimum size=0.55cm] (y) at (0.25,0.75) {$y$};
        \node[draw,thick,fill=white!80!black,rectangle, minimum size=0.55cm]
        (after0) at (-0.5,0) {{$0$}};
        \node[draw,thick,fill=white!80!black,rectangle, minimum size=0.55cm]
        (after1) at (0.5,0) {{$1$}};
        \draw[->, thick] (before) to (x);
        % x =>
        \draw[->, thick, dotted] (x) to[bend right=0,label distance=0mm,edge
        label={\small{$0$}},swap,pos=0.1] (after0);
        \draw[->, thick] (x) to[bend left=0,label distance=0mm,edge
        label={\small{$1$}},pos=0.3] (y);
        % y =>
        \draw[->, thick, dotted] (y) to[bend right=0,label distance=0mm,edge
        label={\small{$0$}},swap,pos=0.4] (after0);
        \draw[->, thick] (y) to[bend left=0,label distance=0mm,edge
        label={\small{$1$}},pos=0.4] (after1);
\end{tikzpicture}
  }
  }\hfill
  \caption{Visualization of different decision diagrams.}
  %The ADD $A_f$ and EVBDD $E_f$ directly represent the numerical function $f = 2x + xy$. The BDDs $B_{f=z}$ represent all states for which the evaluation of the function $f$ is $z$.}
  % used to represent characteristic and numeric functions.}
  \label{fig:bdd_add}
\end{figure}

The size of a decision diagram depends strongly on the variable order, so that a good order can lead to an exponentially more compact decision diagram \autocite{edelkamp-kissmann-aaai2011}. For some functions the size of the corresponding decision diagram is exponential, independent of the underlying variable order \autocite{bryant-ieeecomp1986,edelkamp-kissmann-aaai2011}. Comparing the different types of decision diagrams, we can see that an EVBDD can be exponentially more compact than an ADD \autocite{roux-siminiceanu-nfm2010} representing additively separable functions such as $f: \{ 0,1 \}^{n+1} \to \{0, \dots, 2^{n+1}-1\}$ with $f(x_0, \dots, x_n) = \sum_{i=0}^{n} 2^i x_i$. Moreover, an ADD can be efficiently disassembled into multiple BDDs, one for each terminal node, in polynomial time and memory with respect to the ADD size by substituting terminal nodes \autocite{torralba-phd2015}. In practice, the main advantage of using BDDs over ADDs (and EVBDDs) is that decision diagram libraries such as \name{cudd} \autocite{somenzi-cudd2015} use techniques such as complement edges to store BDDs more compactly \autocite{brace-et-al-acm1990} and allow for more efficient operations \autocite{burch-et-al-tcad1994}.

\begin{example}\label{ex:dds}
  \Cref{fig:add,fig:evbdd} show the ADD $A_f$ and EVBDD $E_f$ representing the numeric function $f = 2x + xy$. 
  The function $f$ can also be represented as multiple BDDs by disassembling the ADD $A_f$ into three different BDDs, one for each terminal node. \Cref{fig:bdd_0,fig:bdd_2,fig:bdd_3} depict the BDDs $B_{f=z}$ representing all states for which the evaluation of function $f = 2x + xy$ is $z$. The variable order for all decision diagrams is $x \succ y$, i.e., x appears before y on each path.
\end{example}

From now on, 
%in the context of symbolic search,
when we refer to state sets, characteristic functions and numerical functions, we assume that they are represented as one of the corresponding decision diagrams and all logical or numerical operations are realized with the efficient and appropriate decision diagram-based operations using the \algname{apply} algorithm \autocite{bryant-ieeecomp1986,bahar-et-al-fmsd1997,lai-et-al-ieeetc1996}. If it is important which type of decision diagram we have used to represent certain functions, we will specify it explicitly.

%\section{Empirical Experiments}
%\todo{Talk shortly about symple, symk which is based on symba and fast-downward. We will use this planner in all empirical evaluations}

\chapter{Symbolic Heuristic Search}\label{ch:symbolic_heuristic_search}
\chapterquote{While the main observations of this paper are both intuitive and pretty obvious, I still consider the work a significant contribution in focusing attention on a key and often overlooked difference between explicit and symbolic search [\dots].}{Reviewer \#3 (2020)}

\section*{Core Publication of this Chapter}
\renewcommand{\citebf}[1]{\textbf{#1}}
\begin{itemize}
    \item \fullcite{speck-et-al-icaps2020}
\end{itemize}
\renewcommand{\citebf}[1]{#1}

Explicit search and symbolic search have been shown to be strong and competitive approaches for optimal classical planning.
While explicit search usually occurs as variants of forward \astar{} search \autocite{hart-et-al-ieeessc1968}, which are equipped with strong and efficient goal-distance heuristics, symbolic search is usually performed as (bidirectional) blind search, i.e., without heuristics.
This naturally raises the question of why not combine the two techniques to obtain a planning algorithm that combines the state pruning of explicit heuristic search with the concise representation of symbolic search.
This combination is referred to as symbolic heuristic search, which includes a variety of symbolic generalizations of \astar{} for different decision diagrams such as \bddastar{} \autocite{edelkamp-reffel-ki1998}, \addastar{} \autocite{hansen-et-al-sara2002} and \evmddastar{} \autocite{speck-et-al-icaps2018}.

The underlying idea of symbolic \astar{} search is to precompute and represent a \emph{heuristic function} $h$ with decision diagrams, where $h: \states \to \mathbb{N}_0 \cup \{ \infty \}$ such that $h(s_\star) = 0$ for all $s_\star \in \goalstates$.
Symbolic operations are then used to assign the corresponding $f$-values ($f = g + h$) to each state of a set of states $S_g$ reachable with cost $g$.
As usual with \astar{}, symbolic \astar{} expands the states in ascending order of the $f$-value and uses a tie-breaking rule in favor of states with smaller $g$-values \autocite{kissmann-edelkamp-aaai2011,torralba-phd2015}.
To guarantee optimal solutions, a \emph{consistent heuristic} is assumed, i.e., its estimate is always less than or equal to the estimated distance from each direct successor to the goal, plus the cost to reach that successor, i.e., $h(s) \leq h(s[o]) + \constcostfun(o)$.
Note that any consistent heuristic is also admissible, i.e., never overestimates the cost of reaching a goal state \autocite{pearl-1984}.
Two important examples of consistent heuristics and at the same time the extreme cases are the perfect heuristic $\perfecth{}$, which maps any state $s$ to the cost of the cheapest path from $s$ to any goal state, and the blind heuristic $\heu{blind}{}$, which maps any state $s$ to $0$.

\begin{figure}
    \centering
    \subfloat[First step.\label{fid:bddastar_example_a}]{
        \centering
        \makebox[0.5\textwidth][c]{
            \begin{tikzpicture} [darkstyle/.style={draw,fill=gray!40,minimum size=20},scale=1]
    \path [draw, ->, thick] (0, -0.5) to (4, -0.5);
    \path [draw, ->, thick] (-0.75, 0) to (-0.75, 4);

    \node at (-0.5, -1) {$g$};
    \node at (-1.2, -0.5) {$h$};

    \foreach \x in {0,...,4}
    \node  at (1*\x,-1) {\x};

    \foreach \y in {0,...,4}
    \node  at (-1.2,1*\y) {\y};

    \foreach \x/\y/\i/\t in {0/2/1/ \init}{
            \node [darkstyle]  (\x\y) at (1*\x,1*\y) {\t};
            \node [] at (-0.225+\x, -0.2+\y)  {\small{$\i$}};
        }
    \foreach \x/\y in {1/1,1/2,1/3,1/4}{
            \node [draw, minimum size=20]  (\x\y) at (1*\x,1*\y) {};
        }
    \foreach \x/\y in {02/12, 02/11, 02/13, 02/14}{
            \path[draw, ->, thick] (\x) to (\y);
        }

\end{tikzpicture}
        }
    }
    \subfloat[Last step.\label{fid:bddastar_example_b}]{
        \centering
        \makebox[0.5\textwidth][c]{
            \begin{tikzpicture} [darkstyle/.style={draw,fill=gray!40,minimum size=20},scale=1]
    \path [draw, ->, thick] (0, -0.5) to (4, -0.5);
    \path [draw, ->, thick] (-0.75, 0) to (-0.75, 4);

    \node at (-0.5, -1) {$g$};
    \node at (-1.2, -0.5) {$h$};

    \foreach \x in {0,...,4}
    \node  at (1*\x,-1) {\x};

    \foreach \y in {0,...,4}
    \node  at (-1.2,1*\y) {\y};

    \foreach \x/\y/\i/\t in {0/2/1/ \init}{
            \node [darkstyle]  (\x\y) at (1*\x,1*\y) {\t};
            \node [] at (-0.225+\x, -0.2+\y)  {\small{$\i$}};
        }

    \foreach \x/\y/\i/\t in {1/1/2/, 1/2/3/, 2/1/4/, 1/3/5/, 2/2/6/, 3/1/7/,4/0/8/\goal}{
            \node [darkstyle]  (\x\y) at (1*\x,1*\y) {\t};
            \node [] at (-0.225+\x, -0.2+\y)  {\small{$\i$}};
        }

    \foreach \x/\y in {1/4,2/3,2/4,3/3,3/4,4/1,1/4,3/2}{
            \node [draw, minimum size=20]  (\x\y) at (1*\x,1*\y) {};
        }

    \foreach \x/\y in {02/12, 02/11, 02/13, 02/14, 11/21, 11/22, 12/21, 12/22, 12/23, 13/23,13/22,13/24,21/31,22/31,22/32,22/33,22/34,31/40,31/41}{
            \path[draw, ->, thick] (\x) to (\y);
        }

    \draw[dashed] (-0.0, 3.0) node [above] {\footnotesize $f=3$} -- (3.0, -0.0) ;
    \draw[dashed] (-0.0, 4.0) node [above] {\footnotesize $f=4$} -- (4.0, -0.0) ;

    \foreach \x/\y in {1/4,2/3,2/4,3/3,3/4,4/1,1/4,3/2}{
            \node [draw=red, sloped, cross out, line width=.5ex, minimum width=1.5ex, minimum height=1ex, anchor=center]  (\x\y) at (1*\x,1*\y) {};
        }

\end{tikzpicture}
        }
    }
    \caption[Illustration of BDDA$^\star$.]{Illustration of BDDA$^\star$, where the cells represent state sets $S_{g,h}$ and the arrows denote successor state sets. The gray cells are expanded in the order indicated by the numbers \autocite{torralba-gnad-icaps2017tutorial,torralba-phd2015}.
        %Inspired by similar visualizations of \textcite{torralba-phd2015,edelkamp-reffel-ki1998}.
    }
    \label{fid:bddastar_example}
\end{figure}
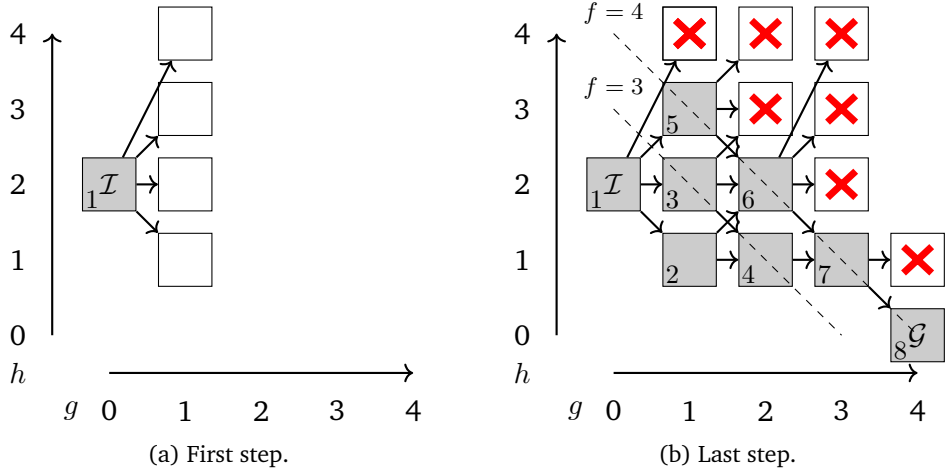

In \bddastar{}, a consistent heuristic function is precomputed and represented by multiple BDDs, one for each heuristic value $h$, where each BDD is used to represent the states $S_h$ with heuristic value $h$.
During search, states $S_g$ reachable with cost $g$ are partitioned according to their heuristic value by computing the intersections of $S_g$ and $S_h$ for each heuristic value $h$ resulting in sets of states $S_{g,h}$.
%
%As usual with \astar{}, \bddastar{} expands states in ascending order of $f = g + h$  and uses a tie-breaking rule in favor of states with smaller g-values \autocite{kissmann-edelkamp-aaai2011,torralba-phd2015}.
\Cref{ex:bddastar} exemplifies how \bddastar{} works.

\begin{example}\label{ex:bddastar}
    We consider a planning task with unit costs. \bddastar{} starts with the state set $S_{0,2} = \{ \init \}$, which contains only the initial state with a $g$-value of 0 and an $h$-value of $2$ (\Cref{fid:bddastar_example_a}).
    The expansion of $S_{0,2}$ leads to sets of states with $g$-value of $1$ and different $h$-values.
    Next, the state set $S_{1,1}$ is expanded, then $S_{1,2}$, and so on until a goal state is contained in $S_{4,0}$ (\Cref{fid:bddastar_example_b}).
\end{example}

While \bddastar{} utilizing a heuristic expands fewer states, symbolic blind search (\bddastar{} with $\heu{blind}{}$) potentially expands fewer sets of states.
Moreover, as \textcite{speck-et-al-icaps2020} have shown and we will examine in this chapter, the BDD representation of the state sets can deteriorate if the states are partitioned according to their $h$-values.
%In other words, in \bddastar{} the search fringe is described using symbolic representations and the notion of node expansions is generalized to region expansions as instead of single states represented as nodes whole sets of states are expanded.
%As all computations are performed with decision diagrams such that the concise presentation is maintained which then allows to expand the most promising states (minimal $f$-value) in form of a set of state represented as decision diagram.

There are several heuristics that can be precomputed and represented using symbolic search and decision diagrams, leading to cutting-edge performance in explicit heuristic search \autocite{edelkamp-aips2002,franco-et-al-ijcai2017,franco-et-al-ipc2018b,moraru-et-al-ki2019}.
Thus, all the ingredients are present to allow a symbolic planner utilizing heuristics, as explicit planners do.
However, \textcite{jensen-et-al-aij2008} identified the partitioning of state sets according to their heuristic values as a bottleneck, because multiple arithmetic operations have to be performed during search.
This leads to different extension of \bddastar{} to overcome this bottleneck such as Lazy \bddastar{} \autocite{torralba-phd2015}, which delays the heuristic evaluation as long as possible, or Set\astar{} \autocite{jensen-et-al-aij2008}, which encodes the heuristic values as preconditions of actions resulting in multiple actions with costs according to the heuristic values.
However, empirical evaluations show that all versions of \bddastar{} perform better than symbolic blind search in some domains but overall symbolic bidirectional search without any heuristic  performs best \autocite{torralba-et-al-ijcai2016}.

In the remainder of this chapter, we describe and summarize the results of \textcite{speck-et-al-icaps2020}, which theoretically and empirically evaluate the search behavior of \bddastar{}.
On the theoretical side, this study reveals another fundamental problem of symbolic heuristics, namely that the use of a heuristic does not always improve the search performance of \bddastar{}, as it may affect the size of the representation.
In general, even the \emph{perfect heuristic} can exponentially deteriorate the search performance of symbolic \astar{}.
The empirical evaluation is consistent with these theoretical results.
Finally, we conclude this chapter with a discussion of the implications of these
%theoretical and empirical results.
findings.

\begin{figure}
    \begin{center}
        %\resizebox{1\textwidth}{!}{
        \begin{tikzpicture}[%
		costnode/.style={pos=0.6,rectangle,thick,
				inner sep=2pt,draw,fill=white,text=black,font=\normalsize},%
		decisionnode/.style={circle,thick,minimum size=4mm,
				inner sep=2pt,draw,fill=white!80!black,text=black,font=\normalsize},%
		xscale=2,yscale=1.35,>=latex]
	\newcommand*{\lh}{0.6}%
	\newcommand*{\vh}{0.2}%

	\node[draw,thick,fill=white!80!black,rectangle, minimum size=0.55cm] (after0) at (1-7*\vh,0) {$0$};
	\node[decisionnode, minimum size=0.55cm] (x7) at (1-0*\vh,1*\lh) {$x_7$};
	\node[decisionnode, minimum size=0.55cm] (x6) at (1-1*\vh,2*\lh) {$x_6$};
	\node[decisionnode, minimum size=0.55cm] (x5) at (1-2*\vh,3*\lh) {$x_5$};
	\node[decisionnode, minimum size=0.55cm] (x4) at (1-3*\vh,4*\lh) {$x_4$};
	\node[decisionnode, minimum size=0.55cm] (x3) at (1-4*\vh,5*\lh) {$x_3$};
	\node[decisionnode, minimum size=0.55cm] (x2) at (1-5*\vh,6*\lh) {$x_2$};
	\node[decisionnode, minimum size=0.55cm] (x1) at (1-6*\vh,7*\lh) {$x_1$};
	\node[decisionnode, minimum size=0.55cm] (x0) at (1-7*\vh,8*\lh) {$x_0$};

	\node[] (tmp) at (1.1,2.3*\lh) {};

	\draw[->, thick] (1-7*\vh,9*\lh) -- (x0);

	\draw[->, thick,dotted] (x6.west) -- (after0);
	\draw[->, thick,dotted] (x5.west) -- (after0);
	\draw[->, thick,dotted] (x4.west) -- (after0);
	\draw[->, thick,dotted] (x3.west) -- (after0);
	\draw[->, thick,dotted] (x2.west) -- (after0);
	\draw[->, thick,dotted] (x1.west) -- (after0);
	\draw[->, thick,dotted] (x0.west) -- (after0);

	\draw[->, thick] (x7) -- (after0);
	\draw[->, thick] (x6.east) to[bend left=10] (x7);
	\draw[->, thick] (x5.east) to[bend left=10] (x6);
	\draw[->, thick] (x4.east) to[bend left=10] (x5);
	\draw[->, thick] (x3.east) to[bend left=10] (x4);
	\draw[->, thick] (x2.east) to[bend left=10] (x3);
	\draw[->, thick] (x1.east) to[bend left=10] (x2);
	\draw[->, thick] (x0.east) to[bend left=10] (x1);

	%%%%%%%%% Additional 1 node
	\renewcommand{\vh}{2.4}

	\node[draw,thick,fill=white!80!black,rectangle, minimum size=0.55cm] (blind) at (\vh,0*\lh) {$1$};

	%%%%%%%%% Exponential BDD

	\renewcommand*{\lh}{0.8}%
	\renewcommand{\vh}{4.2}

	\node[draw,thick,fill=white!80!black,rectangle, minimum size=0.55cm] (after0) at (\vh-0.5,0*\lh) {$0$};
	\node[draw,thick,fill=white!80!black,rectangle, minimum size=0.55cm] (after1) at (\vh+0.5,0*\lh) {$1$};

	\node[decisionnode, minimum size=0.55cm] (v6) at (\vh-0.25,1*\lh) {$v_6$};

	\node[decisionnode, minimum size=0.55cm] (v50) at (\vh-0.25,2*\lh) {$v_5$};
	\node[decisionnode, minimum size=0.55cm] (v51) at (\vh+0.25,2*\lh) {$v_5$};

	\node[decisionnode, minimum size=0.55cm] (v40) at (\vh-0.25,3*\lh) {$v_4$};
	\node[decisionnode, minimum size=0.55cm] (v41) at (\vh+0.25,3*\lh) {$v_4$};
	\node[decisionnode, minimum size=0.55cm] (v42) at (\vh+0.75,3*\lh) {$v_4$};
	\node[decisionnode, minimum size=0.55cm] (v43) at (\vh+1.25,3*\lh) {$v_4$};

	\node[decisionnode, minimum size=0.55cm] (v30) at (\vh+-1.25,4*\lh) {$v_3$};
	\node[decisionnode, minimum size=0.55cm] (v31) at (\vh+-0.75,4*\lh) {$v_3$};
	\node[decisionnode, minimum size=0.55cm] (v32) at (\vh+-0.25,4*\lh) {$v_3$};
	\node[decisionnode, minimum size=0.55cm] (v33) at (\vh+0.25,4*\lh) {$v_3$};

	\node[decisionnode, minimum size=0.55cm] (v20) at (\vh+-1,5*\lh) {$v_2$};
	\node[decisionnode, minimum size=0.55cm] (v21) at (\vh+0,5*\lh) {$v_2$};

	\node[decisionnode, minimum size=0.55cm] (v1) at (\vh+-0.5,6*\lh) {$v_1$};

	\draw[->, thick,dotted] (v6) -- (after0);
	\draw[->, thick] (v6) -- (after1);

	\draw[->, thick,dotted] (v50) to[bend right=10]  (after0);
	\draw[->, thick] (v50) -- (after1);

	\draw[->, thick,dotted] (v51) -- (v6);
	\draw[->, thick] (v51) -- (after1);

	\draw[->, thick,dotted] (v40) to[bend right=15] (after0);
	\draw[->, thick] (v40) -- (after1);

	\draw[->, thick,dotted] (v41) -- (v6);
	\draw[->, thick] (v41) to[bend left=10]  (after1);

	\draw[->, thick,dotted] (v42) -- (v50);
	\draw[->, thick] (v42) -- (after1);

	\draw[->, thick,dotted] (v43) -- (v51);
	\draw[->, thick] (v43) -- (after1);

	\draw[->, thick,dotted] (v30) -- (after0);
	\draw[->, thick] (v30) -- (v6);

	\draw[->, thick,dotted] (v31) -- (v50);
	\draw[->, thick] (v31) to[bend left=24] (v51);

	\draw[->, thick,dotted] (v32) -- (v40);
	\draw[->, thick] (v32) -- (v41);

	\draw[->, thick,dotted] (v33) -- (v42);
	\draw[->, thick] (v33) -- (v43);

	\draw[->, thick,dotted] (v20) -- (v30);
	\draw[->, thick] (v20) -- (v31);

	\draw[->, thick,dotted] (v21) -- (v32);
	\draw[->, thick] (v21) -- (v33);

	\draw[->, thick,dotted] (v1) -- (v20);
	\draw[->, thick] (v1) -- (v21);

	%%%%%%% Inter drawing
	\draw[->,thick,dotted] (x7.east) |- node[costnode, pos=0.75] {with $\perfecth$} (v1);

	\draw[->,thick,dotted] (tmp) -| node[costnode, pos=0.25] {with $\heu{blind}{}$} (blind.north);
	%(blind.north)

	%\node[] at (-1.5,6*\lh) {B:};
	%\node[draw,thick,fill=white!80!black,regular polygon,regular polygon sides=3, minimum size=0.55cm] (B) at (0.25,6.5*\lh) {B};
	%\draw[->, thick,dotted] (-0.0,6.25*\lh) -- (a1);
\end{tikzpicture}
        %}
    \end{center}
    \caption[Visualization of two BDDs with exponential size difference.]{
        Visualization of two BDDs $B_S$ (with $h_{\text{\scriptsize blind}}$) and $B_{S'}$ (with $h_{\scriptsize \star}$) representing state sets $S$ and $S'$, where $B_{S'}$ is exponentially larger in the number of variables than $B_{S}$, although $S' \subsetneq S$ \autocite{speck-et-al-icaps2020}.}
    \label{fid:bddastar_proof}
\end{figure}
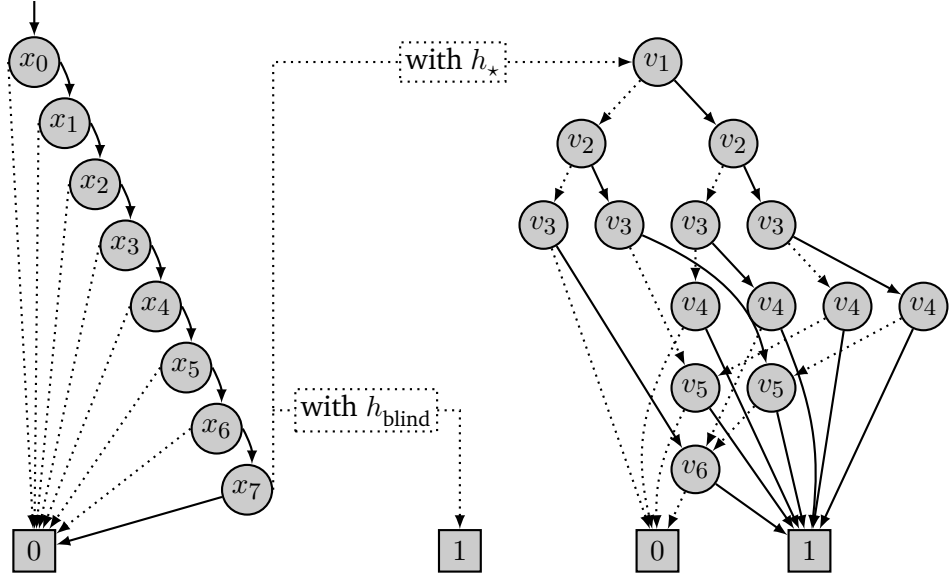

\section{Theoretical Results}

\textcite{speck-et-al-icaps2020} show that good goal-distance estimations in the form of heuristics are not the appropriate quantity to improve the search performance of symbolic heuristic search.
The reason for this is that, in contrast to explicit \astar{}, where every
consistent heuristic can only reduce the number of necessary node expansions (up to tie-breaking) and thus the search effort compared to blind search, in \bddastar{} no such guarantee exists.
In \bddastar{}, the size of expanded BDDs, i.e., number of BDD nodes, representing expanded states determines the search effort and thus the runtime.
As a BDD $B_{S'}$ can be exponentially larger than a BDD $B_{S}$ although the set of states $S'$ is a strict subset of $S$, i.e., $S' \subsetneq S$, it is not always beneficial to represent and expand fewer states (\Cref{fid:bddastar_proof}).
In other words, in explicit search, where the most promising states (search fringe) are simply enumerated explicitly, reducing the number of states to expand directly improves search performance.
However, in symbolic search, reducing the number of states to be expanded can lead to a larger representation size in form of BDDs and a fragmentation of the search fringe as the search progresses.

Similar to \say{must-expand} nodes in explicit search, \textcite{speck-et-al-icaps2020} introduce the notion of \emph{expansion size} for \bddastar{} as the cumulative size of BDDs that must always be expanded by \bddastar{} before finding an optimal solution to measure search effort.
Using this notion it is possible to prove that even under the best possible and unrealistic circumstances, namely the perfect heuristic, the search effort of \bddastar{} can be exponentially larger than the search effort of symbolic
search without heuristic, and vice versa (\Cref{thm:bddastar}).

\begin{theorem}[\cite{speck-et-al-icaps2020}]\label{thm:bddastar}
    Using the perfect heuristic $\perfecth$ instead of the blind heuristic $\heu{blind}{}$ can \emph{decrease} or \emph{increase} the expansion size of \bddastar{} exponentially in the size of the planning task for a given variable ordering. \qed
\end{theorem}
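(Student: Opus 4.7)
The plan is to prove the two directions separately, constructing planning task families that exhibit the claimed exponential gap in each direction. Both constructions have to fix a variable order in advance and verify that the cumulative BDD sizes of all must-expand layers yield the desired asymptotic separation.

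For the easier direction, that $\perfecth$ can \emph{decrease} the expansion size of \bddastar{} exponentially over $\heu{blind}{}$, I would exhibit a standard corridor-with-branches task. Take $n$ binary variables $x_0,\dots,x_{n-1}$ and operators that allow setting each $x_i$ freely at cost $0$, together with a cheap sequential path from $\init$ to $\goal$ of length $n$ that touches only $O(n)$ specific states. Blind search expands, at each $g$-layer, the BDD of all reachable states, whose size grows linearly with $g$ but whose number of represented states doubles each step, so the cumulative expansion size is $\Omega(2^n)$. With $\perfecth$, only states on an optimal path have finite $f = g + \perfecth$ equal to the optimal cost, so \bddastar{} expands only the singleton BDDs along that corridor, costing $O(n)$. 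This yields the exponential decrease.

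The interesting direction is that $\perfecth$ can \emph{increase} the expansion size exponentially over $\heu{blind}{}$. Here the idea is to leverage exactly the pathology illustrated in the referenced figure: a BDD representing a strict subset $S' \subsetneq S$ can be exponentially larger than the BDD of $S$. I would design a task whose operators make the entire Boolean cube over variables $v_1,\dots,v_n$ reachable at a single cost layer $g^\star$, so that under $\heu{blind}{}$ the set expanded at that layer is the tautology, represented by a BDD of constant size. The goal and cost function are then engineered so that $\perfecth$ partitions this cube into subsets $S_{g^\star,h}$ indexed by $h$, each of which is a set whose characteristic function (for the fixed variable order) is provably of exponential BDD size — for instance, by encoding a hidden-weighted or middle-bit type function via the reachability costs. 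Summing the sizes of the must-expand BDDs over these partitions gives an expansion size of $\Omega(2^{n/c})$, while blind search's total expansion size remains polynomial in $n$.

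The main obstacle is the second construction: I need operators and costs such that the perfect heuristic values really do carve the uniform state set into exponentially complex classes, while simultaneously keeping the \emph{union} of those classes (and the reachable-states BDDs at each blind $g$-layer) of polynomial size under the chosen variable order. This requires picking a function whose BDD lower bound survives when it is used as a heuristic, and embedding it into a planning task whose dynamics are still representable by polynomial-size transition relations in that same order. Once the construction is in place, the rest is bookkeeping: formally define the must-expand layers of \bddastar{} under each heuristic, apply the known BDD size lower bounds for the encoded function to bound the expansion size from below in the $\perfecth$ case, and exhibit the linear-size BDDs for each layer in the $\heu{blind}{}$ case.
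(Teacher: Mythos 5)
There is a genuine gap, in both halves. For the \emph{decrease} direction, your argument does not establish what the theorem measures. Expansion size is the cumulative number of BDD \emph{nodes} of the fringes that must be expanded, not the number of states they represent. In your corridor construction you state yourself that the blind layer BDDs grow only linearly with $g$, and then conclude an $\Omega(2^n)$ expansion size from the fact that exponentially many \emph{states} are represented --- that inference is exactly the state-count/representation-size conflation the theorem is about, and it contradicts your own premise. Worse, with operators that set each $x_i$ freely at cost $0$, the whole cube lands in a single $g$-layer whose characteristic function is essentially the tautology, i.e.\ a constant-size BDD, so blind search's expansion size in your task is polynomial and no separation is obtained. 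A correct decrease example must make the \emph{blind} fringes have exponentially large BDDs under the fixed order while the states surviving the $\perfecth$ pruning have small BDDs; your construction does the opposite of neither.

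For the \emph{increase} direction, your plan is the right one and coincides with the route of \textcite{speck-et-al-icaps2020}: build a family $\task_n$ where blind search never prunes on a block of variables $v_1,\dots,v_{2n}$, so its fringe stays compact, whereas $\perfecth$ prunes precisely so that the expanded set must represent a function such as $(v_1 \land v_{n+1}) \lor \dots \lor (v_{n} \land v_{2n})$, which has exponential BDD size under the given (non-interleaved) ordering. But you stop exactly where the mathematical content of the theorem lies: you do not exhibit the operators, cost structure, goal condition, and variable order that realize this, nor verify that the reachable layer BDDs and transition relations stay polynomial in that same order while the pruned fringe is forced to be exponential --- and you explicitly flag this construction as an unresolved obstacle. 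Without that concrete family (and the appeal to the known BDD lower bound for the encoded function), neither direction of the theorem is actually proved.
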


While the result that the search performance of \bddastar{} can be exponentially improved when $\perfecth$ is used instead of $\heu{blind}{}$ is less surprising, the opposite result is very surprising.
This highlights the difference between explicit and symbolic search: the representation of the search fringe must be concise, and in symbolic search the decision diagram size is not directly related to the number of states represented.
To prove that using $\perfecth$ instead of $\heu{blind}{}$ can exponentially increase the expansion size and thus exponentially deteriorate search performance, \textcite{speck-et-al-icaps2020} constructed a family of planning tasks $\task_n$ parameterized over the number of (relevant) variables.
Solving tasks of this family of planning tasks $\task_n$ with \bddastar{} using the perfect heuristic prunes states from the expanded set of states (search fringe) so that the representation size increases exponentially.
\Cref{fid:bddastar_proof} shows the core idea of the proof, with two BDDs representing the search fringe, i.e., the expanded state set, in the last expansion step of \bddastar{} using $\perfecth$ or $\heu{blind}{}$ when solving $\task_3$.
The key observation is that when $\heu{blind}{}$ is used, no states are pruned according to their values over variables $v_i$, resulting in a compact BDD representation.
However, if $\perfecth$ is used, states are pruned according to their values over the variables $v_i$ and the function $(v_1 \land v_{n+1}) \lor \dots \lor (v_{n} \land v_{2n})$ has to be represented, which under certain variable orders requires a BDD with exponentially many nodes \autocite{kissmann-phd2012}.

Finally, \textcite{speck-et-al-icaps2020} show that these theoretical results also hold for several other symbolic \astar{} variants such as Lazy \bddastar{}, Set\astar{}, \addastar{}, and \evmddastar{}, as well as for their bidirectional extensions.

\begin{figure}
    \subfloat[Search Effort (BDD nodes)\label{fig:bddastar_data_a}]{
        \centering
        \makebox[1\textwidth][c]{
            \resizebox{0.95\textwidth}{!}{
                \input{figures/bdd-nodes-fwd.tex}
                \hfill
                \input{figures/bdd-nodes-bid.tex}
            }
        }
    }\\
    \subfloat[Image Time (seconds)\label{fig:bddastar_data_b}]{
        \centering
        \makebox[1\textwidth][c]{
            \resizebox{0.95\textwidth}{!}{
                \includegraphics{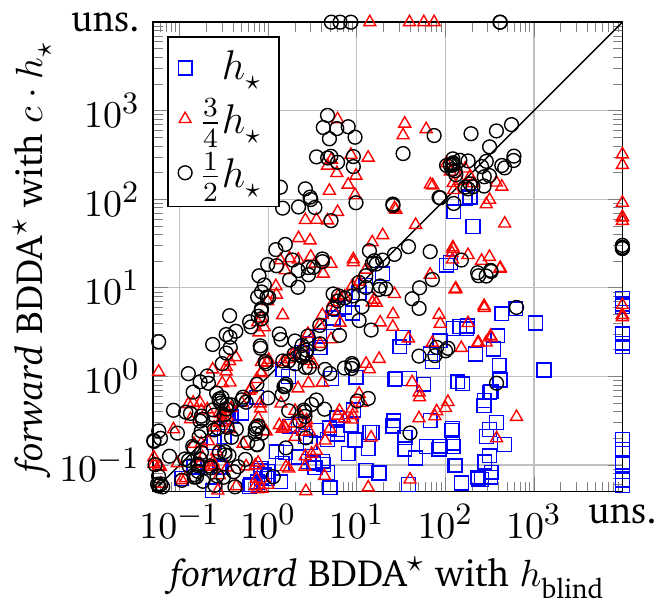}
                \hfill
                \includegraphics{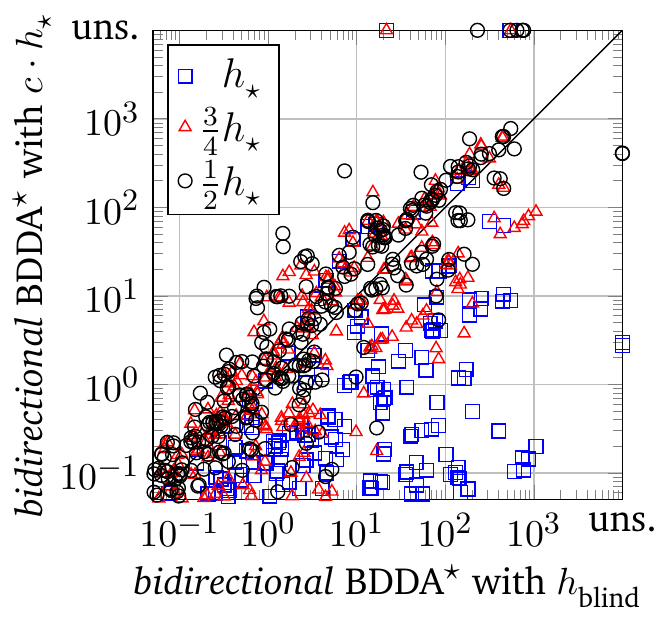}
            }
        }
    }\\
    \subfloat[Overall Runtime (seconds)\label{fig:bddastar_data_c}]{
        \centering
        \makebox[1\textwidth][c]{
            \resizebox{0.95\textwidth}{!}{
                \includegraphics{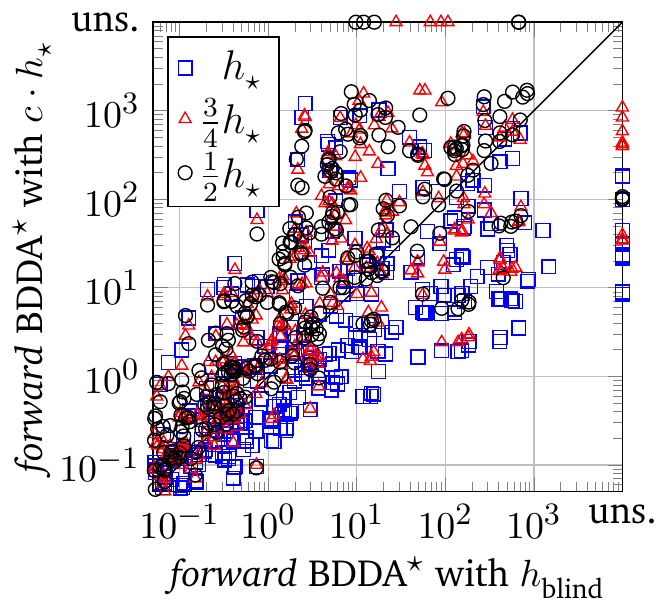}
                \hfill
                \includegraphics{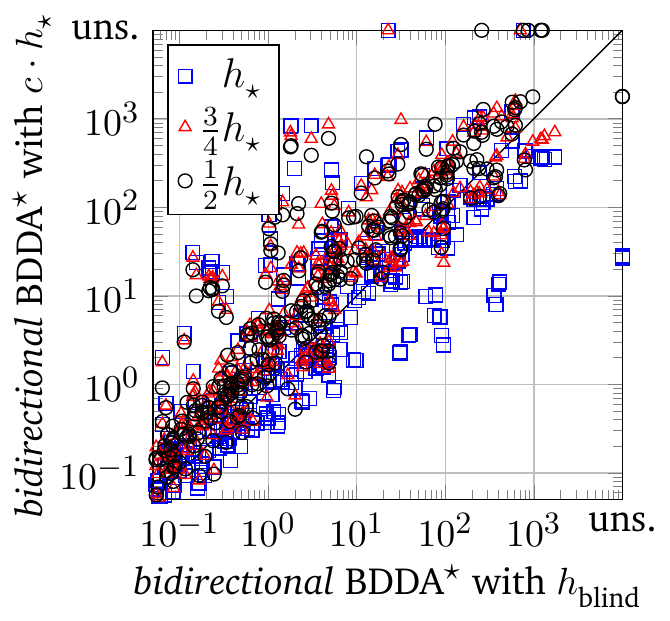}
            }
        }
    }
    \vspace{-0.2cm}
    \caption[Comparison of BDDA$^\star$ with two different heuristics.]{
        Comparison of BDDA$^\star$ with the blind heuristic and with given (fraction) perfect heuristics \autocite{speck-et-al-icaps2020}.
        With \emph{uns.} we refer to instances that were not solved within the time and memory limits.
    }
    %with the blind heuristic \heu{blind}{} and with a given fraction perfect heuristics ($c \in \{1, \frac{3}{4}, \frac{1}{2}\}$)}
    \label{fig:bddastar_data}
\end{figure}

\section{Empirical Results}
\textcite{speck-et-al-icaps2020} empirically investigated forward and bidirectional \bddastar{} with precomputed fraction perfect heuristics on domains from the optimal track from the International Planning Competitions between 1998 and 2018.
A heuristic $c \cdot h$ is called fraction perfect if it assigns to all states the values of the perfect heuristic multiplied by a constant $0 \leq c \leq 1$, where $0\perfecth = \heu{blind}{}$ and $1\perfecth = \perfecth$ are important extreme cases.
Note that \bddastar{} with blind heuristic $0\perfecth = \heu{blind}{}$ corresponds to symbolic blind search.

\Cref{fig:bddastar_data_a} compares the search effort of \bddastar{} with the blind heuristic and the search effort of \bddastar{} with the fraction perfect heuristics.
While the unrealistic case of the perfect heuristic almost always leads to a reduction in search effort in practice, when we consider more realistic cases, namely the fraction perfect heuristic, we can see that the search effort of \bddastar{} can improve or deteriorate to the same extent.

\Cref{fig:bddastar_data_b} shows the expansion time of \bddastar{}, i.e., the cumulative time required to generate all successors with the image operation.
A correlation is observed between the search effort and the expansion time, which also empirically shows that the introduced notion of search effort is an adequate quantity to measure the performance of \bddastar{}.
This highlights the fundamental problem highlighted by \textcite{speck-et-al-icaps2020}: using a heuristic does not always improve the search performance of \bddastar{}.

Finally, \Cref{fig:bddastar_data_c} shows the total running time of \bddastar{} without the time for computing the corresponding heuristic.
Comparing the expansion time with the total running time, we find that \bddastar{} with fraction perfect heuristics has a higher time increase than \bddastar{} with $\heu{blind}{}$, which is due to the time-consuming partitioning of the state sets by heuristic values \autocite{jensen-et-al-aij2008}.

Overall, these empirical results show that also in practice a heuristic can improve or deteriorate the search effort of \bddastar{} to the same extent.
These empirical results are consistent with the presented theoretical results.
Furthermore, \textcite{speck-et-al-icaps2020} show that it appears to be domain
dependent whether a heuristic helps \bddastar{} as the structure of the reachable search space appears to play a central role.

\section{Discussion}\label{sec:symbolic_search_discussion}
As the results of \textcite{speck-et-al-icaps2020} show, using goal-distance estimators, i.e., heuristics, to prune states in symbolic heuristic search does not always pay off.
Indeed, even the perfect heuristic can exponentially deteriorate the search performance of \bddastar{} and other symbolic \astar{} variants.
Since the search performance of \bddastar{} is not directly related to the number of expanded states, but to the size of the involved BDDs during the search, \textcite{speck-et-al-icaps2020} suggest using heuristic functions that can provide a nice structure of the involved decision diagrams or even give a size guarantee.
One possible candidate are potential heuristics \autocite{pommerening-et-al-aaai2015}, which have recently been shown to be encodable as operator potentials within symbolic heuristic search, resulting in state-of-the-art performance \autocite{fiser-et-al-icaps2021wshsdip}.

The fact that symbolic blind search does not use heuristics and yet can compete with explicit heuristic search in cost-optimal planning \autocite{edelkamp-et-al-aaai2015,torralba-et-al-aij2017,speck-et-al-icaps2020} has the advantage that it is not constrained by the limits of heuristics.
Considering expressive extensions of classical planning, such as conditional effects, axioms, and many more, explicit heuristic search planners rarely support them.
The reason is that it is very challenging to design admissible heuristics that are both informative and fast to compute when considering such extensions.
Symbolic blind search, however, provides a good basis to support these extensions, since it does not rely on heuristics and performs an efficient exhaustive search.
In the next chapters, we will take advantage of this and show how symbolic search can be adapted to support expressive extensions in cost-optimal planning.

\chapter{Axioms and Derived Variables}\label{ch:axioms}
\chapterquote{The grand aim of all science [is] to cover the greatest number of empirical facts by logical deduction from the smallest possible number of hypotheses or axioms.}{Albert Einstein (1950)}

\renewcommand{\kiviatPredicates}{2}
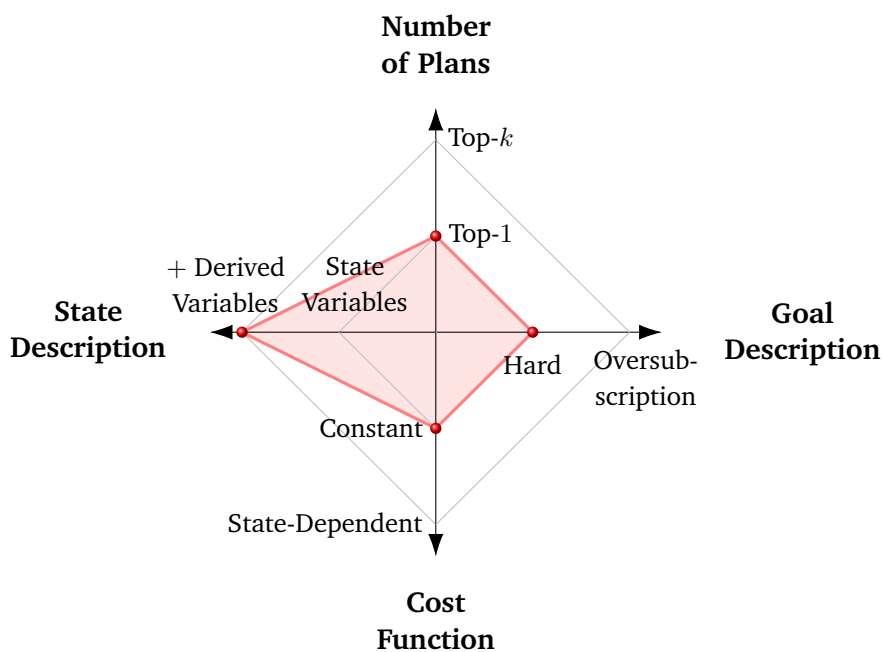
\begin{figure}[t]
    \begin{center}
        \begin{tikzpicture}
    \tkzKiviatDiagram[
    radial style/.style ={-{Latex[length=3mm, width=2mm]}},
    scale=0.85,
    label space=1.5,
    radial = 1,
    gap = 1.5,
    step = 1,
    lattice = 2]{
    \hspace{2.5cm}\mbox{\parbox{4cm}{{\begin{center}\textbf{Goal}\\\textbf{Description} \end{center}}}},
    {\textbf{Number of Plans}},
    \hspace{-1.5cm}\mbox{\parbox{3cm}{{\begin{center}\textbf{State}\\\textbf{Description}\end{center}}}},
    \textbf{Cost Function}}

    \ifbool{kiviatRed}{
        \tkzKiviatLine[very thick,color=red!50,
            fill=red!30,
            opacity=.35,
            mark=ball,
            mark size=2.5pt,
            ball color=red](\kiviatGoal{},\kiviatTopk{},\kiviatPredicates{},\kiviatCost{})
    }

    \ifbool{kiviatBlue}{
        \tkzKiviatLine[very thick,color=blue!50,
            fill=blue!30,
            opacity=.35,
            mark=ball,
            mark size=2.5pt,
            ball color=blue](1,1,1,1)
    }

    \ifbool{kiviatFull}{
    \tkzKiviatLine[very thick,color=darkgreen!50,
        fill=green!30,
        opacity=.35,
        mark=ball,
        mark size=2.5pt,
        ball color=darkgreen](1,2,2,2)
    \tkzKiviatLine[very thick,color=blue!50,
        fill=blue!30,
        opacity=.35,
        mark=ball,
        mark size=2.5pt,
        ball color=blue](1,1,1,1)
    \tkzKiviatLine[very thick,color=red!50,
        %fill=red!30,
        %opacity=.35,
        mark=ball,
        mark size=2.5pt,
        ball color=red](\kiviatGoal{},\kiviatTopk{},\kiviatPredicates{},\kiviatCost{})
    \LegendBox[shift={(-0cm,-2.25cm)}]{current bounding box.south west}%
    {
    red!100/{Forward symbolic search (contribution of this thesis)},
    asparagus!100/{Bidirectional symbolic search (contribution of this thesis)},
    blue!100/{Bidirectional symbolic search (previous state of the art)}}
    }
    {}

    % goal specification
    \draw[] node[align=center,fill=none] at (1.5,-0.5) {\small Hard};
    \draw[] node[align=center,fill=none] at (3.25,-0.75) {\small Oversub-\\ \small scription};
    %\draw[] node[] at (1.75,-0.6) {\rotatebox{-45}{hard}};
    %\draw[] node[align=center] at (3.85,-1.25) {\rotatebox{-45}{oversubscription}};

    % number of plans
    \draw[] node[align=center,fill=none] at (0.7,1.5) {\small Top-$1$};
    \draw[] node[align=center,fill=none] at (0.7,3.0) {\small Top-$k$};

    % derived predicates
    \draw[] node[align=center,fill=none] at (-1.25,0.75) {\small State\\ \small Variables};
    \draw[] node[align=center,fill=none] at (-3.25,0.75) {\small $+$ \small Derived\\ \small Variables};

    % cost function
    \draw[] node[align=center,fill=none] at (-1,-1.5) {\small Constant};
    \draw[] node[align=center,fill=none] at (-1.6,-3.0) {\small State-Dependent\phantom{0}};
\end{tikzpicture}
    \end{center}
    \caption[Overview of extension for classical planning (axioms).]{Overview of extensions for classical planning, where the red color denotes the planning formalism supported by the proposed symbolic search approach.}\label{fig:dervided_predicates:kiviat}
\end{figure}
\renewcommand{\kiviatPredicates}{1}

\section*{Core Publication of this Chapter}
\renewcommand{\citebf}[1]{\textbf{#1}}
\begin{itemize}
    \item \fullcite{speck-et-al-icaps2019}
\end{itemize}
\renewcommand{\citebf}[1]{#1}

In classical planning, Boolean or finite domain variables are used to describe the states of the world, the preconditions and the effects of actions, and the goal description \autocite{fikes-nilsson-aij1971,backstrom-nebel-compint1995}.
For many real-world problems, however, complex action preconditions or goals are desirable or even necessary for a compact problem description \autocite{thiebaux-et-al-aij2005}.
Axioms allow to model such complex preconditions and goals compactly by introducing a set of derived variables whose values are not directly influenced by the actions but are derived from the values of other variables using a set of logical axioms.
%\textcite{thiebaux-et-al-aij2005} showed that axioms are an essential feature because it is impossible to compile them away, i.e., to express them in the original formalism, i.e., as classical planning task (Definition \ref{def:planning-task}), without a super-polynomial growth of the plan length or description size.

Although axioms are an essential feature \autocite{thiebaux-et-al-aij2005} of \pddl{} \autocite{mcdermott-et-al-tr1998,hoffmann-edelkamp-jair2005}, the common language for modeling planning tasks, modern planning techniques rarely support axioms, especially in cost-optimal planning.
Most admissible heuristics commonly used in \astar{} search, one of the most prominent approaches to cost-optimal planning, are not defined for their use with axioms.
The few heuristics that support axioms are based on naive relaxations that consider axioms as zero-cost actions, which may greatly reduce the informativeness of the heuristics.
One exception is the axiom-aware delete relaxation heuristic, obtained by applying a model for state constraints to planning with axioms \autocite{ivankovic-haslum-ijcai2015,haslum-et-al-jair2018}.
While these heuristics are often informative, they are also time-consuming to compute and therefore often do not pay off in terms of coverage or runtime. 
%Moreover, to the best of our knowledge, other planning techniques, such as planning as satisfiability \autocite{kautz-selman-ecai1992,rintanen-aij2012} and symbolic search \autocite{cimatti-et-al-ecp1997,torralba-et-al-aij2017} have not yet been extended to planning with axioms.

In this chapter, we define and motivate planning with axioms and summarize the known complexity and compilability results for this setting.
Then we describe three different sound and complete ways to extend symbolic search algorithms to support axioms natively introduced by \textcite{speck-et-al-icaps2019} (Figure \ref{fig:dervided_predicates:kiviat}).
%More precisely, three different ways of encoding axioms and derived variables with symbolic representations are analyzed.
The empirical study on different planning domains shows that the symbolic axiom encodings of \textcite{speck-et-al-icaps2019} yield an optimal planner that supports axioms and compares favorably with other state-of-the-art methods.

\section{Formalism}
Planning with axioms extends the formalism of classical planning (\Cref{def:planning-domain,def:planning-task}) with a set of derived variables and axioms as follows \autocite{thiebaux-et-al-aij2005,helmert-2008}.

\begin{definition}[Planning with Axioms]\label{def:planning_axioms}
    A \emph{planning task with axioms} is a tuple $\task = \langle \domain, \init, \goal, \limit \rangle$ with $\domain = \langle \vars, \operators, \constcostfun, \derivedvars, \axioms \rangle$ that extends an ordinary planning task and domain with a set of \emph{axioms} $\axioms$ that is used to evaluate a set of \emph{secondary} or \emph{derived} propositional {variables} $\derivedvars$. Partial variable assignments such as the preconditions $\pre_o$ of operators $o \in \operators$ and the goal condition are defined over primary and secondary variables $\vars \cup \derivedvars$.
    A variable assignment $s$ is called a \emph{state} if it is defined for all variables in $\vars$, and an \emph{extended state} if it is defined for all variables in $\vars \cup \derivedvars$.
    With $\states$ we refer to the set of all states and with $\extendedstates$ to the set of all extended states.
    $\axioms$ is a finite set of axioms over $\vars \cup \derivedvars$. An \emph{axiom} has the form $h \leftarrow b$ where the head $h$ is a variable in $\derivedvars$ and the \emph{body} $b$ is a finite conjunction of positive or negative literals over $\vars \cup \derivedvars$.

    The set of axioms is partitioning into a totally ordered set of axioms layers $\axioms_1 \prec \dots \prec \axioms_i \prec \dots \prec \axioms_k$ with $i=1,\dots,k$. The layer of an axiom is defined by the
    layer of its head, which is determined by a partition of the set of derived
    variables into subsets $\derivedvars_1 \prec \dots \prec \derivedvars_k$.
    We assume that this partition forms a \emph{stratification}, i.e., that for all $i=1, \dots, k$, and for each $d_i \in \derivedvars_i$, it holds that (1) if $d_j \in \derivedvars_j$ appears in the body of an axiom with head $d_i$, then $j \leq i$ and (2) if $d_j \in \derivedvars_j$ appears negated in the body of an axiom with head $d_i$, then $j < i$.
\end{definition}

The semantics of axioms is as follows: (1) to evaluate a derived variable, only axioms in the current or previous layers have to be considered, and (2) axioms have negation-as-fault semantics, i.e., if a fact cannot be derived as true, it is assumed to be false in subsequent layers \autocite{speck-et-al-icaps2019}.
%Therefore, axiom layers have to be fully evaluated before a variable from that layer can be used negatively in another axiom body, which necessitates moving to a strictly higher layer if $d_j$ is used negatively . 
Given a state $s \in \states$ (over $\vars$), the extended state $\axioms(s) \in \extendedstates$ (over $\vars \cup \derivedvars$) is uniquely defined by the standard stratified semantics \autocite{apt-et-al-1988,thiebaux-et-al-aij2005}.
In other words, axioms are evaluated in a layer-by-layer fashion using fixed point computations.
More precisely, given a state $s \in \states$, first all derived variables $d \in \derivedvars$ are set to their default value (false), i.e., $\lnot d$. 
Second, a fixed point computation is performed for each axiom layer in sequence to determine the final values of the derived variables \autocite{helmert-2008}.
The evaluated derived variables together with the state $s$ form the extended state $\axioms(s)$.

Intuitively, the axioms form a background theory that makes it possible to capture/derive some properties of states, i.e., secondary variables, from the primary state variables.
Consequently, an operator can only change the values of the primary variables ($\eff_o$ is a partial variable assignment over $\vars$), and based on the primary variables in the successor state, the values of the secondary variables can be derived using the axioms.
Moreover, an operator is applicable in a state $s \in \states$ if $\pre_o \subseteq \axioms(s)$ and a state $s \in \states$ is a goal state if $\goal \subseteq \axioms(s)$.
For a detailed description and discussion of the semantics of axioms, see \textcite{thiebaux-et-al-aij2005,helmert-2008,helmert-jair2006}.
% Algorithm \ref{alg:axiom-explicit-evaluation} describes the axiom
% evaluation algorithm for explicit states, following the exposition of \intextcite{helmert-jair2006}. Some axiom
% definitions also specify a default value for derived variables, which is assumed
% before any rules are applied. We assume that the default value of all derived variables is $\bot$ (false).

%\todo{Maybe add explicit algorithm?}

\Cref{ex:axioms} illustrates the semantics and usefulness of axioms in the context of classical planning.

\begin{example}\label{ex:axioms}
    Consider the $\textit{navigate}$ operators of \Cref{ex:rover}, which navigates the rover from one cell to another with a cost of 0. This verbose modeling of the rover navigation leads to a larger state space and longer plan than necessary. Reachability can be expressed as a recursive property with axioms and derived properties.
    To model a $\textit{navigate}$ operator with axioms that moves the rover to a reachable location $loc$, we introduce a derived variable $\textit{reachable(loc)}$ as a precondition for the $\textit{navigate}$ operator. The values for the derived variables $\textit{reachable(loc)}$ are determined by a single layer of axioms $\axioms_1$.
    \begin{itemize}
        \item $\textit{reachable(loc)} \leftarrow \textit{rover-at}=loc$
        \item $\textit{reachable(to)} \leftarrow \textit{reachable(from)} \land \textit{free(to)} \land \textit{adjacent(from,to)}$
    \end{itemize}

    Intuitively, the first axiom means that the current location of the rover is reachable. The second axiom means that a free location adjacent to a reachable location is also reachable. By applying the axioms until a fixed point is reached, the current state $s$ is expanded to $\axioms(s)$, which then contains the information about the actual reachable locations ($reachable(loc)$) based on the current state $s$, which contains the rover location and all relevant information about the map.
    This encoding provides a natural and concise modeling of the transitive closure property of reachability. Moreover, it provides not only a smaller state space, but also a shorter plan $\pi^\rover{} = \langle$\navigate{7}{1}, \startDrone{7}{1}, \fly{7}{1}{6}{1}, \takeImg{6}{1}, \fly{6}{1}{10}{1}, \takeImg{10}{1}, \fly{10}{1}{7}{1}, \landDrone{7}{1}, \navigate{0}{5}$\rangle$, which avoids the irrelevant choice of the exact path the rover must take.\footnote{If the navigation operators of the rover would have non-zero costs, such a modeling would require state-dependent action costs, which we discuss in more detail in \Cref{ch:sdac}.}
\end{example}

In \Cref{ex:axioms}, the reachable cells for the rover are always the same, but this is not necessarily true in general. For example, it could be possible that actions by the rover make some cells impassable.  A similar scenario where reachability can change from state to state is Sokoban, where the locations of the boxes affect the reachability of the cells \autocite{ivankovic-haslum-ijcai2015,miura-fukunaga-aaai2017}.

\section{Complexity and Compilability}

\textcite{helmert-2008} showed that planning with axioms (\Cref{def:planning_axioms}) is \PSPACE-complete, since it generalizes classical planning (hardness) and a non-deterministic Turing machine can solve the bounded plan existence problem for planning with axioms with polynomial space (completeness).

\begin{theorem}[\cite{helmert-2008}]
    Bounded plan existence of planning with axioms is \PSPACE-complete. \qed
\end{theorem}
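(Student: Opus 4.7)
The plan is to establish the two directions separately. \textbf{Hardness} is immediate: classical \name{sas}$^+$ planning (without axioms) is the special case of \Cref{def:planning_axioms} obtained by taking $\derivedvars = \emptyset$ and $\axioms = \emptyset$, so any instance of bounded plan existence for classical planning is trivially an instance of bounded plan existence for planning with axioms of the same size. Hence \PSPACE{}-hardness follows directly from \Cref{thm:bounded-plan-existence}.

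For \textbf{membership}, I would give an \NPSPACE{} algorithm and invoke Savitch's theorem to conclude \PSPACE{}. The algorithm maintains in memory only (i) the current state $s \in \states$, which is a single value per variable in $\vars$ and therefore of size $O(\size{\task})$; (ii) a binary step counter bounded by $|\states| \leq 2^{O(\size{\task})}$, using $O(\size{\task})$ bits; and (iii) a binary accumulated-cost counter bounded by $\limit$, using $\size{\limit} = O(\log \limit)$ bits (or by the trivial upper bound $|\states| \cdot N$ if $\limit = \infty$). Starting from $\init$, the algorithm repeatedly guesses an operator $o \in \operators$, checks applicability in $\axioms(s)$, updates $s$ to $s[o]$, adds $\constcostfun(o)$ to the cost counter (aborting if it exceeds $\limit$), increments the step counter (aborting if it exceeds $2^{|\vars|}$, since no optimal plan needs to revisit a state), and tests whether $\goal \subseteq \axioms(s)$. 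It accepts as soon as this test succeeds.

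The crucial subroutine is the evaluation of $\axioms(s)$ from a state $s$. By the stratification assumption in \Cref{def:planning_axioms}, this can be carried out layer by layer: within each layer $\axioms_i$, initialize all heads in $\derivedvars_i$ to their default value and iterate the axioms of $\axioms_i$ until no further change occurs. Since each head flips its value at most once per layer, the fixed point is reached after at most $|\derivedvars_i| \cdot |\axioms_i|$ rule firings, and only $s$ plus one Boolean per derived variable needs to be stored. Thus $\axioms(s)$ can be computed in polynomial time, and in particular polynomial space, in $\size{\task}$. Combining this subroutine with the bookkeeping above yields a nondeterministic procedure using $O(\size{\task})$ space in total, establishing membership in \NPSPACE{} and hence, by Savitch's theorem, in \PSPACE{}.

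The main obstacle I anticipate is purely bookkeeping rather than conceptual: one has to be careful that the cost bound $\limit$, which is encoded in binary, does not force the cost counter to blow up, and that the step counter suffices to guarantee termination of the nondeterministic search. Both issues are handled by the observation that an optimal plan can be chosen of length at most $|\states| \leq 2^{O(\size{\task})}$, so an $O(\size{\task})$-bit counter is enough; and the cost counter can be capped at $\min(\limit, |\states| \cdot N)$, whose binary encoding is also $O(\size{\task})$.
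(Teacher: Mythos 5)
Your proof is correct and follows exactly the argument the paper (citing Helmert 2008) sketches: hardness because planning with axioms generalizes classical planning, and membership via a nondeterministic polynomial-space simulation of plan execution—with stratified axiom evaluation computable in polynomial time—followed by Savitch's theorem. The only nitpick is the abort threshold for the step counter, which for multi-valued variables should be $|\states| = \prod_{v \in \vars} |\vardomain_v|$ rather than $2^{|\vars|}$, but this does not affect the argument since an $O(\size{\task})$-bit counter still suffices.
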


\textcite{thiebaux-et-al-aij2005} show that axioms are an essential feature of the planning language \pddl{} \autocite{mcdermott-et-al-tr1998,hoffmann-edelkamp-jair2005}, proving that it is in general not possible to compile away axioms without a super-polynomial growth of the plan length or description size.\footnote{Note that \textcite{thiebaux-et-al-aij2005} also show that the 1-step planning with axioms problem is \EXPTIME{}-complete, which results from the fact that they consider planning tasks described in \pddl{}, while we consider planning tasks and, in particular, axioms in a grounded and normalized form.}
Furthermore, \textcite{thiebaux-et-al-aij2005} argue that axioms are necessary to model real-world problems in a compact and elegant way, as they allow to
model complex action preconditions (e.g. transitive closure property) or goals.
From these two observations, we can conclude that it is desirable and necessary to have search algorithms that can handle axioms directly with native support to solve some real-world problems.

\section{Symbolic Search}

\textcite{speck-et-al-icaps2019} introduce three different variants to support and represent axioms in symbolic search. All three encodings are sound and complete, allowing for optimal and complete symbolic search. We briefly describe the idea of the three symbolic variants and present an empirical evaluation comparing the symbolic approaches \autocite{speck-et-al-icaps2019} with explicit heuristic search approaches for planning with axioms \autocite{ivankovic-haslum-ijcai2015}. 
For a more detailed explanation of the symbolic encodings, we refer to \textcite{speck-et-al-icaps2019}.

\begin{figure}
    \centering
    \begin{tikzpicture}[>=latex]

    \node[] at (1,0) (dots) {\dots};
    \node[draw] at (3,0) (states) {$\charf_{S}$};
    \node[draw, diamond, aspect=1,align=center] at (7.0,0) (fixpoint) {fixed\\point?
    };
    \node[draw,align=center] at (10.0,0) (fullstates) {$\charf_{\{\axioms{}(s) ~\mid~ s \in S\}}$};

    \draw[->, thick] (dots) -- (states) node[pos=0.1,above,align=center] {apply\\operations};
    \draw[->, thick] (states) -- (fixpoint) node[midway,above,align=center] {apply\\axioms $\operators_{\axioms_i}$};
    \draw[->, thick] (fixpoint) -- (fullstates)  node[midway,above,align=center,xshift=-2pt] {yes};
    \draw[->,thick] (fixpoint.north) |- (6, 1.3) -| (states);
    \node[] at (5.0, 1.5) {no};

    \draw [decorate,decoration={brace,amplitude=10pt},xshift=-4pt,yshift=0pt,rotate=90]
    (-1.25,-7.5) -- (-1.25,-2.75) node [black,midway,below,yshift=-10pt,align=center]
    {\footnotesize for each axiom layer \\ $\axioms_{1}, \dots, \axioms_{k}$};
    \draw[->,dotted,thick] (fullstates) -- (12.0,0);
\end{tikzpicture}
    \caption[\operators{}-based encoding of axioms.]{Visualization of symbolic \emph{forward} search with the
        \operators{}-based encoding of axioms \autocite{speck-et-al-icaps2019}.\label{fig:axiom_option1}}
\end{figure}

% \begin{figure}
%     \begin{center}
%         \subfloat[$\operators$-based encoding of axioms.\label{fig:axiom_option1}]{
%             \input{figures/axioms-option-1.tex}
%         }\\\bigskip
%         \subfloat[$\vars$-based encoding of axioms.\label{fig:axiom_option2}]{
%             \input{figures/axioms-option-2.tex}
%         }\\\bigskip
%         \subfloat[Symbolic translation encoding of axioms.\label{fig:axiom_option3}]{
%             \input{figures/axioms-option-3.tex}
%         }
%     \end{center}
%     \caption{Visualization of the three different variants to support and represent axioms in symbolic search\autocite{speck-et-al-icaps2019}. \label{fig:axiom_options}}
% \end{figure}

\paragraph{$\operators$-Based Encoding.}
The most straightforward way to integrate axioms into symbolic search is the so-called $\operators$-based encoding, where each axiom is interpreted as an operator, where the body is a (conditional) precondition and the head is the (conditional) effect.
%\footnote{For simplicity, we do not explicitly introduce conditional effects. Technically, axioms are encoded as operators $\operators_{\axioms}$ without precondition and with a conditional effect, where the body of the rule is the condition and the head is the effect.} 
Unlike the original operators, the axiom operators $\operators_{\axioms}$ affect a derived variable rather than a primary variable.
With this encoding, it is possible to expand a set of states, i.e., to evaluate the derived variables of each state in a set of states in a symbolic way with a fixed point computation that represents the operators $\operators_{\axioms}$ as a transition relation with BDDs.
\Cref{fig:axiom_option1} illustrates this procedure.
The general idea is to perform this fixed point computation after each application of the actual operators to expand all newly generated successor states.

\begin{figure}
    \centering
    \begin{tikzpicture}[>=latex]
    %\node[draw,align=center] at (3.25, 1.75) {For all $d \in \derivedvars:$\\create BDD $S_d[\vars]$};
    \node[] at (1,0) (dots) {\dots};
    \node[draw] at (3,0) (states) {$\charf_{S}$};
    \node[draw] at (10.0,0) (fullstates) {$\charf_{\{\axioms{}(s) ~\mid~ s \in S\}}$};

    \draw[->, thick] (dots) -- (states) node[pos=0.1,above,align=center] {apply\\operations};
    \draw[->, thick] (states) -- (fullstates) node[midway,above,align=center] {$\bigwedge_{d \in \derivedvars} (d \leftrightarrow \charf_{S_d})$ };
    \draw[->,dotted,thick] (fullstates) -- (12.0,0);
\end{tikzpicture}
    \caption[\vars{}-based encoding of axioms.]{Visualization of symbolic \emph{forward} search with the
        \vars{}-based encoding of axioms \autocite{speck-et-al-icaps2019}.\label{fig:axiom_option2}}
\end{figure}

\medskip
The following two axiom encodings are based on the idea of precomputing a symbolic representation over the primary variables $\vars$ for each derived variable $d \in \derivedvars$ if $d$ is true (\Cref{def:primary_representation}). 
This overcomes the problem of the $\operators$-based encoding, which requires an expensive fixed point computation to be performed in each planning step. 
\textcite{speck-et-al-icaps2019} present an efficient algorithm for computing the primary representation for each derived variable using BDDs.

\begin{definition}[Primary Representation]\label{def:primary_representation}
    Let $d \in \vars \cup \derivedvars$ be a (primary or derived) variable and $\axioms$ a set of axioms.
    The \emph{primary representation} of $d$ is the set of states $S_d$, which contains all states over $\vars$ where $d$ is evaluated to true, i.e., $S_d = \{ s \in \states ~|~ \axioms(s) \models d \}$.
\end{definition}

\paragraph{$\vars$-Based Encoding.}
\Cref{fig:axiom_option2} illustrates the $\vars$-based encoding, where each state of a given set of successor states $S$ is expanded.
Starting from the primary representation (\Cref{def:primary_representation}), the derived variable $d$ in a state $s \in S$ is true if $s \in S_d$. 
Therefore, $\charf_{\{\axioms{}(s) ~\mid~ s \in S\}} = \charf_{S} \land \bigwedge_{d\in \derivedvars} (d \leftrightarrow \charf_{S_d})$ represent the extended set of states.

\paragraph{Symbolic Translation.}
In contrast to the $\operators$-based and $\vars$-based encodings, the symbolic translation encoding completely omits derived variables by performing the search using a translation of the original planning task.
The idea of the translation is to replace all occurrences of derived variables in the planning task with their corresponding primary representation. In particular, derived variables in operator preconditions and the goal formula are replaced by their corresponding primary representation (\Cref{fig:axiom_option3}). Thus, no reasoning about derived variables during the actual search is required.
Note that this symbolic translation encoding is different from compilations that convert \pddl{} with axioms to \pddl{} without axioms \autocite{gazen-knoblock-ecp1997,thiebaux-et-al-aij2005}, since at no point is an explicit version of the compiled task (a new \pddl{} representation) created. 
While the primary representation, and thus the symbolic translation, can lead to exponential size growth in the worst case, in practice it is shown that the concise representation of BDDs alleviates this problem (see Empirical Evaluation).
\medskip

\begin{figure}[t]
    \centering
    \begin{tikzpicture}[>=latex]
	\node[align=center] at (0, 0.65) (tr_before) {
		transition relation\\$\charf_{T_o}$ over $\vars \cup \derivedvars$};
	\node[align=center] at (8, 0.65) (tr_after) {$\charf_{T'_o}$ with complex\\precondition over $\vars$};

	\node[align=center] at (0, -0.75) (goal_before) {goal condition\\$\charf_\goal$ over $\vars \cup \derivedvars$};
	\node[align=center] at (8, -0.75) (goal_after) {complex goal\\ condition $\charf_{\goal'}$ over $\vars$};

	\draw[->, thick] (tr_before) -- (tr_after) node[pos=0.5,above,align=center] {};
	\draw[->, thick] (goal_before) -- (goal_after) node[pos=0.5,above,align=center] {replace all $d \in \derivedvars$\\with $\charf_{S_{d}}$};

	% \node[draw,align=center] at (0, 0.75) (tr_before) {Transition Relation\\$\charf_{T_o}$ over $\vars \cup \derivedvars$};
	% \node[draw,align=center] at (8, 0.75) (tr_after) {$\charf_{T'_o}$ with complex\\precondition over $\vars$};

	% \node[draw,align=center] at (0, -0.75) (goal_before) {Goal condition\\$\charf_\goal$ over $\vars \cup \derivedvars$};
	% \node[draw,align=center] at (8, -0.75) (goal_after) {$\charf_{\goal'}$ with complex\\precondition over $\vars$};

	% \draw[->, thick] (tr_before) -- (tr_after) node[pos=0.5,above,align=center] {};
	% \draw[->, thick] (goal_before) -- (goal_after) node[pos=0.5,above,align=center] {Replace all $d \in \derivedvars$\\with $\charf_{S_{d}}$};
\end{tikzpicture}
    \caption[Symbolic translation encoding of axioms.]{
        Visualization of symbolic \emph{forward}, \emph{backward} and \emph{bidirectional} search with the symbolic translation encoding of axioms \autocite{speck-et-al-icaps2019}.\label{fig:axiom_option3}}
\end{figure}

\textcite{speck-et-al-icaps2019} define the $\operators$-based and the $\vars$-based encoding for symbolic forward search only. It is an open question how to perform backward search, since it is not clear how to efficiently reverse intermediate inferences about the values of derived variables.
In contrast, the symbolic compilation encoding performs all inferences over the derived variables as a preprocessing step. 
Thus, it allows to perform backward search and bidirectional search, which can significantly improve the planning performance.

\paragraph{Empirical Evaluation.}
\Cref{tab:coverage_axioms} shows the coverage (number of optimally solved instances) on different domains with axioms modeling verification problems \autocite{ghosh-et-al-jar2015,edelkamp-icaps2003wscompetition}, multi-agent planning with belief sets \autocite{kominis-geffner-icaps2015} or elevator control \autocite{koehler-schuster-aips2000}. 
Explicit \astar{} search \autocite{hart-et-al-ieeessc1968} is evaluated with the $\heu{blind}{}$ heuristic and the (naive) \heu{max}{} heuristic \autocite{ivankovic-haslum-ijcai2015}, which are the best known heuristics that are admissible and support axioms. 
We see that a BDD-based search is competitive with the $\operators$-based and $\vars$-based approaches compared to the explicit \astar{} search. 
This is mainly due to the good performance in the Miconic Axioms domain, which consists of many instances. 
However, symbolic search using the symbolic translation encoding proves to be the best performing and overall dominant approach, especially with bidirectional search.

%\todo{describe where the domain come from.}

\begin{table}
    \begin{center}
        \resizebox{1\textwidth}{!}{
            \begin{tabular}{lrrccrrr}
    \toprule
    \multirow{2}{*}{Algorithm}               &                                          &                                        & \multicolumn{5}{c}{\textbf{BDD Search}}               \\
    \cmidrule(lr){4-8}

                                            & \multicolumn{2}{c}{\astar{} (explicit)} & \multicolumn{1}{c}{\operators{}-based} &
    {\vars{}-based}                         & \multicolumn{3}{c}{Sym. Translation}                                                                                                      \\

    \cmidrule(lr){2-3} \cmidrule(lr){4-4} \cmidrule(lr){5-5}  \cmidrule(lr){6-8}
    Domain (\#Tasks)                        & \heu{blind}{}                            & \heu{max}{}                            & fwd                                     & fwd & fwd &
    bwd                                     & bid                                                                                                                                       \\
    \midrule
    & \multicolumn{7}{l}{\emph{International Planning Competition (IPC)}} \\
    {Blocks Axioms} (35)             & 18                                       & 18
                                            & 15                                       & 15
                                            & 21                                       & 18                                     &
    \textbf{30}                                                                                                                                                                         \\
    {Grid Axioms} (5)                & 1                                        & 2
                                            & 1                                        & 1
                                            & 1                                        & 0                                      & \textbf{3}                                            \\
    {Miconic Axioms} (150)           & 60                                       & 60
                                            & 127                                      & \textbf{150}
                                            & \textbf{150}                             & \textbf{150}
                                            & \textbf{150}                                                                                                                              \\
    {Optical Telegraphs} (48)        & 2                                        & 2
                                            & 2                                        & 2
                                            & \textbf{4}                               & 0
                                            & \textbf{4}                                                                                                                                \\
    {PSR Middle} (50)                & 35                                       & 35
                                            & 32                                       & 39
                                            & \textbf{50}                              & \textbf{50}
                                            & \textbf{50}                                                                                                                               \\
    {PSR Large} (50)                 & 14                                       & 14
                                            & 13                                       & 15
                                            & 24                                       & 23                                     &
    \textbf{25}                                                                                                                                                                         \\
    {Philosophers} (48)              & 5                                        & 5
                                            & 9                                        & 9
                                            & \textbf{12}                              & 4
                                            & \textbf{12}                                                                                                                               \\
    \midrule
    & \multicolumn{7}{l}{\emph{Complex Preconditions at IPC}} \\
    {Assembly} (30)                  & 0                                        & 0
                                            & 6                                        & 5
                                            & 9                                        & 8                                      &
    \textbf{11}                                                                                                                                                                         \\
    {Airport Adl} (50)               & 19                                       & \textbf{21}
                                            & 14                                       & 12
                                            & 20                                       & 11                                     & 19                                                    \\
    {Trucks} (30)                    & 6                                        & 8
                                            & \textbf{9}                               & \textbf{9}
                                            & \textbf{9}                               & 4
                                            & 8                                                                                                                                         \\
    \midrule
    & \multicolumn{7}{l}{\emph{\textcite{ivankovic-haslum-ijcai2015}}} \\
    {Blocker} (7)                    & \textbf{7}                               &
    \textbf{7}                              & 4                                        & 5
                                            & 5                                        & 5                                      & 5                                                     \\
    {Social Planning} (2)            & \textbf{2}                               &
    \textbf{2}                              & \textbf{2}                               & \textbf{2}
                                            & \textbf{2}                               & \textbf{2}                             & \textbf{2}                                            \\
    {Sokoban Axioms} (25)            & 19                                       & \textbf{20}
                                            & 7                                        & 7
                                            & 18                                       & \textbf{20}
                                            & \textbf{20}                                                                                                                               \\
    \midrule
    & \multicolumn{7}{l}{\emph{\textcite{ghosh-et-al-jar2015}}} \\
    Acc cc2 (7)                    & \textbf{7}                               & \textbf{7}
                                            & \textbf{7}                               &
    \textbf{7}
                                            & \textbf{7}                               & \textbf{7}
                                            & \textbf{7}                                                                                                                                \\
    Grid cc2 (13)                  & \textbf{8}                               & 7
                                            & 0                                        & 0
                                            & 0                                        & 0                                      & 0                                                     \\
    \midrule
    & \multicolumn{7}{l}{\emph{\textcite{kominis-geffner-icaps2015}}} \\
    Collab And Comm (1)            & \textbf{1}                               &
    \textbf{1}                              & 0                                        & 0
                                            & 0                                        & 0                                      & 0                                                     \\
    Muddy Children (1)             & \textbf{1}                               &
    \textbf{1}                              & \textbf{1}                               & \textbf{1}                             & \textbf{1}                              & 0
                                            & \textbf{1}                                                                                                                                \\
    Muddy Child (1)                & \textbf{1}                               &
    \textbf{1}                              & \textbf{1}                               &
    \textbf{1}                              & \textbf{1}                               &
    \textbf{1}                              & \textbf{1}                                                                                                                                \\
    Sum (1)                        & \textbf{1}                               &
    \textbf{1}                              & \textbf{1}                               &
    \textbf{1}                              & \textbf{1}                               & 0
                                            & \textbf{1}                                                                                                                                \\
    Word Rooms (2)                 & \textbf{2}                               &
    \textbf{2}                              & 0                                        & 0
                                            & 0                                        & 0                                      & 0                                                     \\
    \midrule
    w/o Miconic (406) & 149                                      & 154
                                            & 134                                      & 131
                                            & 185                                      & 153                                    &
    \textbf{199}                                                                                                                                                                        \\
    %\midrule
    \textbf{Sum (556)}                      & 209                                      & 214
                                            & 251                                      & 281
                                            & 335                                      & 303                                    &
    \textbf{349}                                                                                                                                                                        \\
    \bottomrule
\end{tabular}

        }
    \end{center}
    \caption[Coverage for planning with axioms.]{Coverage (number of optimally solved instances) for explicit and symbolic search algorithms on domains with axioms \autocite{speck-et-al-icaps2019}.}
    \label{tab:coverage_axioms}
\end{table}

%\section{Summary}
\chapter{State-Dependent Action Costs}\label{ch:sdac}
%\chapterquote{Kind words do not cost much, yet they accomplish much.}{Blaise Pascel}
\chapterquote{The fashion of the world is to avoid cost, and you encounter it.}{William Shakespeare (1598)}

\renewcommand{\kiviatCost}{2}
\begin{figure}[t]
    \begin{center}
        \begin{tikzpicture}
    \tkzKiviatDiagram[
    radial style/.style ={-{Latex[length=3mm, width=2mm]}},
    scale=0.85,
    label space=1.5,
    radial = 1,
    gap = 1.5,
    step = 1,
    lattice = 2]{
    \hspace{2.5cm}\mbox{\parbox{4cm}{{\begin{center}\textbf{Goal}\\\textbf{Description} \end{center}}}},
    {\textbf{Number of Plans}},
    \hspace{-1.5cm}\mbox{\parbox{3cm}{{\begin{center}\textbf{State}\\\textbf{Description}\end{center}}}},
    \textbf{Cost Function}}

    \ifbool{kiviatRed}{
        \tkzKiviatLine[very thick,color=red!50,
            fill=red!30,
            opacity=.35,
            mark=ball,
            mark size=2.5pt,
            ball color=red](\kiviatGoal{},\kiviatTopk{},\kiviatPredicates{},\kiviatCost{})
    }

    \ifbool{kiviatBlue}{
        \tkzKiviatLine[very thick,color=blue!50,
            fill=blue!30,
            opacity=.35,
            mark=ball,
            mark size=2.5pt,
            ball color=blue](1,1,1,1)
    }

    \ifbool{kiviatFull}{
    \tkzKiviatLine[very thick,color=darkgreen!50,
        fill=green!30,
        opacity=.35,
        mark=ball,
        mark size=2.5pt,
        ball color=darkgreen](1,2,2,2)
    \tkzKiviatLine[very thick,color=blue!50,
        fill=blue!30,
        opacity=.35,
        mark=ball,
        mark size=2.5pt,
        ball color=blue](1,1,1,1)
    \tkzKiviatLine[very thick,color=red!50,
        %fill=red!30,
        %opacity=.35,
        mark=ball,
        mark size=2.5pt,
        ball color=red](\kiviatGoal{},\kiviatTopk{},\kiviatPredicates{},\kiviatCost{})
    \LegendBox[shift={(-0cm,-2.25cm)}]{current bounding box.south west}%
    {
    red!100/{Forward symbolic search (contribution of this thesis)},
    asparagus!100/{Bidirectional symbolic search (contribution of this thesis)},
    blue!100/{Bidirectional symbolic search (previous state of the art)}}
    }
    {}

    % goal specification
    \draw[] node[align=center,fill=none] at (1.5,-0.5) {\small Hard};
    \draw[] node[align=center,fill=none] at (3.25,-0.75) {\small Oversub-\\ \small scription};
    %\draw[] node[] at (1.75,-0.6) {\rotatebox{-45}{hard}};
    %\draw[] node[align=center] at (3.85,-1.25) {\rotatebox{-45}{oversubscription}};

    % number of plans
    \draw[] node[align=center,fill=none] at (0.7,1.5) {\small Top-$1$};
    \draw[] node[align=center,fill=none] at (0.7,3.0) {\small Top-$k$};

    % derived predicates
    \draw[] node[align=center,fill=none] at (-1.25,0.75) {\small State\\ \small Variables};
    \draw[] node[align=center,fill=none] at (-3.25,0.75) {\small $+$ \small Derived\\ \small Variables};

    % cost function
    \draw[] node[align=center,fill=none] at (-1,-1.5) {\small Constant};
    \draw[] node[align=center,fill=none] at (-1.6,-3.0) {\small State-Dependent\phantom{0}};
\end{tikzpicture}
    \end{center}
    \caption[Overview of extension for classical planning (cost function).]{Overview of extensions for classical planning, where the red color denotes the planning formalism supported by the proposed symbolic search approach.}\label{fig:cost_function:kiviat}
\end{figure}

\renewcommand{\kiviatCost}{1}

\section*{Core Publications of this Chapter}
\renewcommand{\citebf}[1]{\textbf{#1}}
\begin{itemize}
    \item \renewcommand{\citeaddendum}{\\\textbf{(Best Student Paper Runner-Up Award)}}\fullcite{speck-et-al-icaps2021}\renewcommand{\citeaddendum}{}
          %\item \fullcite{corraya-et-al-ki2019}
    \item \renewcommand{\citeaddendum}{\\\textbf{(Partly based on ideas from my master thesis)}}\fullcite{speck-et-al-icaps2018}\renewcommand{\citeaddendum}{}
\end{itemize}
\renewcommand{\citebf}[1]{#1}

In classical planning, it is common to assume constant action costs \autocite{fikes-nilsson-aij1971,backstrom-nebel-compint1995}.
This often results in increased effort for the modeler, because when a planning problem inherently involves state-dependent action costs (sometimes called conditional costs), the modeler of the problem must therefore distribute these costs over multiple copies of the original action.
In addition, the structure of the original cost function is hidden, which, however, could provide useful information and more compact representation possibilities for planning algorithms \autocite{geisser-phd2018}.
If we consider, e.g., probabilistic planning in form of factorized Markov decision processes \autocite{puterman-1994}, state-dependent action costs/rewards have been the standard for a long time \autocite{sanner-rddl2010,geisser-phd2018,geisser-et-al-socs2020} and are supported by many different approaches \autocite{keller-eyerich-icaps2012,geisser-speck-ippc2018,cui-khardon-ippc2018}.
Therefore, recently there has been increased interest in classical planning with state-dependent action costs \autocite{keller-geisser-aaai2015,keller-et-al-ijcai2016,geisser-phd2018,corraya-et-al-ki2019,mattmueller-et-al-aaai2018,ivankovic-et-al-icaps2019,haslum-et-al-jair2018,drexler-et-al-icaps2021,drexler-et-al-icaps2020wshsdip}.

In this chapter, we define and motivate planning with state-dependent action costs before summarizing the associated complexity and compilability investigated by \textcite{speck-et-al-icaps2021}.
Then, complete and optimal symbolic search algorithms based on BDDs and EVMDDs are presented and explained for planning with state-dependent action costs \autocite{speck-et-al-ipc2018,speck-et-al-icaps2018} (Figure \ref{fig:cost_function:kiviat}).
The empirical evaluation shows that the native support of state-dependent action costs within symbolic search can be beneficial overall compared to other explicit search approaches that are partially based on compilations.

\section{Formalism}
A planning domain and task with state-dependent action costs (sdac) is defined as follows \autocite{geisser-et-al-ijcai2015,geisser-phd2018}.

\begin{definition}[Planning with Sdac]\label{def:planning_sdac}
    An \emph{sdac planning task} $\langle \domain, \init, \goal, \limit \rangle$ with $\domain = \langle \vars, \operators, (\costfun_o)_{o \in \operators} \rangle$ is identical to a classical planning task (\Cref{def:planning-task}), except for having \emph{(local) state-dependent action cost functions} $\costfun_o : \states \to \mathbb{N}_0$ for each operator $o \in \operators$.
    The set of all operators $\operators$ induces a \emph{state-dependent action cost function} or \emph{cost function} $\costfun : \states \times \operators \to \mathbb{N}_0$  such that $\costfun(s, o) = \costfun_o(s)$ for all $s \in \states$.
    A plan $\pi = \langle o_0, \dots, o_{n-1} \rangle$ for an sdac planning task that generates a sequence of states $s_0, \dots, s_n$ generalizes a classical plan (\Cref{def:plan}) by considering for the cost computation the state in which each operator is applied, i.e., $\cost(\plan) = \sum_{i=0}^{n-1} \costfun_{o_i}(s_i)$.
\end{definition}

In general, such a state-dependent cost function can have an arbitrary form and even be uncomputable \autocite{geisser-phd2018}.
In practice, however, it is useful to restrict the form and expressiveness of the cost function.
Similar to \textcite{geisser-phd2018}, the focus of this work is mainly on cost functions that can be evaluated in polynomial time and are in concise form (\Cref{def:cost-function}).
However, in the theoretical complexity and compilability analysis, a comprehensive classification of compilability and non-compilability for planning with sdac is provided that also considers more complex classes of cost functions \autocite{speck-et-al-icaps2021}.

\begin{definition}[Operator Cost Function]\label{def:cost-function}
    We define a language $\mathcal{L}$ by the following Backus normal form:
    \[
        t ::= c \;\mybar\; v \;\mybar\; t + t \;\mybar\; t - t \;\mybar\; t \cdot t \;\mybar\; |t|,
    \]
    where $c \in \mathbb{Z}$, $v \in \vars$.
    The semantic of $\mathcal{L}$ is defined over states $s \in \states$ as follows:
    \begin{align*}
         & \costfun^c(s) = c                                                                                           \\
         & \costfun^v(s) = s(v)                                                                                        \\
         & \costfun^{t \circ t'}(s) = \costfun^t(s)  \circ \costfun^{t'}(s) \qquad \text{for } \circ \in \{+,-,\cdot\} \\
         & \costfun^{|t|}(s) = |\costfun^t(s)|
    \end{align*}

For a given term $t \in \mathcal{L}$, the interpretation $\costfun{}^t$ specifies the operator cost function, where we restrict the allowed operator cost function to a positive range, i.e., $\costfun{}^t : \states \to \mathbb{N}_0$. 
In the following, we often identify a cost function $\costfun{}^t$ with the term $t$ that defines it. 

    % An \emph{operator cost function} $\costfun_o$ of an operator $o \in \operators$ is recursively defined as one of the following cases.
    % \begin{enumerate}
    %     \item A constant function, i.e., $\costfun_o(s) = c$, for $c \in \mathbb{N}_0, s \in \states$.
    %     \item A function $\costfun_o(s) = s(v)$, for $s \in \states, v \in \vars$, i.e., the result of $\costfun_o$ is the domain value of $v$ in $s$. We will write $\costfun_o(s) = v$ for such functions.
    %     \item The composite $(f \circ g)(s) = g(f(s)) : \states \to \mathbb{N}_0$ of two functions $f: \states \to \mathbb{N}_0$ and $g : \mathbb{N}_0 \to \mathbb{N}_0$, such as the absolute value ($|f|$).
    %     \item The generalized composition $(f \circ g)(s) = f(s) \circ g(s)$ of two functions $f, g : \states \to \mathbb{N}_0$ for a binary operator $\circ : \mathbb{N}_0 \times \mathbb{N}_0 \to \mathbb{N}_0$, such as addition ($f+g$), subtraction ($f-g$), or multiplication ($f \cdot g$).
    %     %\item The result of a unary operator $\alpha : \mathbb{N}_0 \to \mathbb{N}_0$, which is applied to a function $f : \states \to \mathbb{N}_0$, such as the absolute value ($|f|$).
    %     %\item The result of a binary operator $\circ : \mathbb{N}_0 \times \mathbb{N}_0 \to \mathbb{N}_0$, which is applied to two functions $f, g : \states \to \mathbb{N}_0$, such as addition ($f+g$), subtraction ($f-g$), or multiplication ($f \cdot g$).
    % \end{enumerate}
\end{definition}

We emphasize that planning with sdac is a generalization of classical planning. Classical planning is an important special case in which we have a constant action cost, i.e., operator costs that are independent of the state in which the operator is applied.
Modeling operator costs as state-dependent allows for a more natural and concise representation of planning problems as \Cref{ex:sdac} illustrates.

\begin{example}\label{ex:sdac}
    Consider the drone flight operators of \Cref{ex:rover}.
    In classical planning (\Cref{def:planning-task}), we need one $\textit{fly}$ operator for each pair of cells $\textit{from}=(x,y)$ and $\textit{to}=(x',y')$ to model the Manhattan distance as constant costs.
    Note that such modeling is often used to represent motion operators with costs in the International Planning Competition (e.g., \name{transport} domain).
    With sdac, we are able to model such operators in a natural and concise way by specifying the Manhattan distance directly as the cost function of the $\textit{fly}$ operator.
    For each cell $\textit{to}=(x',y')$, we specify an operator $\textit{fly-to-}x'\textit{-}y'$ that is applicable when the drone is launched and has the effect of the drone being at $(x',y')$.
    The cost of the operator $\textit{fly-to-}x'\textit{-}y'$ is $|x_{\textit{drone}} - x'_{\textit{drone}}| + |y_{\textit{drone}} - y'_{\textit{drone}}|$.
\end{example}

\section{Complexity and Compilability}
The work of \textcite{speck-et-al-icaps2021} addresses the computational complexity and compilability of planning with sdac. They show that planning with sdac has the same complexity as ordinary planning, i.e., is \PSPACE-complete.
It is natural that a cost function should not be computationally more difficult than planning itself, i.e, more difficult than $\PSPACE$. Otherwise, the evaluation of the cost function would dominate the computation instead of solving the actual planning task, which should be the actual challenge.
By restricting the complexity of cost functions to \FPSPACE{}, we capture cost functions that can be evaluated in polynomial time (\Cref{def:cost-function}), up to more complex cost functions such as the cost of a robot motion operation \autocite{reif-focs1979,lavalle-2006}.
%\footnote{Note that in \autocite{speck-et-al-icaps2021}, the concept of generic cost functions over lifted domains is introduced and considered in the impossibility results. However, for the presentation of the core results and the proof ideas, this concept is not necessary, so we will ignore it here.}

% \cite is intentional
\begin{theorem}[\cite{speck-et-al-icaps2021}]\label{thm:plan_existence}
    Bounded plan existence of planning with sdac is $\PSPACE$-complete, if the cost function is in $\FPSPACE$. \qed
\end{theorem}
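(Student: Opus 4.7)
The plan is to establish $\PSPACE$-completeness in the usual two directions, and the hard direction should follow immediately from prior results. For $\PSPACE$-hardness I would reduce from bounded plan existence for ordinary classical planning, which is already $\PSPACE$-hard by \Cref{thm:bounded-plan-existence}. Every classical task with constant cost function $\constcostfun$ embeds trivially as an sdac task by letting each local cost function $\costfun_o$ be the constant function with value $\constcostfun(o)$; this is a polynomial-time many-one reduction that preserves plan existence and plan cost, so hardness lifts immediately. No new gadgetry is required here.

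For $\PSPACE$-membership I would exhibit a nondeterministic polynomial-space decision procedure and invoke Savitch's theorem ($\NPSPACE=\PSPACE$). The algorithm maintains a current state $s$, a cumulative cost $g$, and a step counter $i$. Starting from $s=\init$, $g=0$, $i=0$, it repeatedly guesses an operator $o \in \operators$, checks $\pre_o \subseteq s$, evaluates $\costfun_o(s)$, verifies $g+\costfun_o(s) \leq \limit$, and updates $s := s[o]$, $g := g + \costfun_o(s)$, $i := i+1$; it accepts as soon as $\goal \subseteq s$. The state $s$ fits in $\sum_{v \in \vars} \lceil \log_2 |\vardomain_v| \rceil$ bits and $g$ in $\lceil \log_2 \limit \rceil$ bits, both polynomial in $\size{\task}$. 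By hypothesis $\costfun_o \in \FPSPACE$, so a single cost evaluation uses only polynomial workspace; since we only need to detect whether the emitted value plus $g$ would exceed $\limit$, any value relevant to acceptance has bit-length at most $\lceil\log_2(\limit+1)\rceil$ and can be stored, and larger values can be rejected on the fly.

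The main obstacle is bounding plan length so that the counter $i$ itself fits in polynomial space. When all operator costs are strictly positive the plan has at most $\limit$ steps, whose binary encoding is polynomial; but zero-cost operators allow arbitrarily long plans, so one needs a nontrivial bound. The standard argument is that a shortest witnessing plan need not revisit any $(s,g)$ pair, hence its length is bounded by $|\states|\cdot(\limit+1)$, whose binary length $\log_2|\states|+\log_2(\limit+1)$ is polynomial in $\size{\task}$. Equipping the procedure with a polynomial-bit counter $i$ that rejects on overflow therefore preserves soundness and completeness while guaranteeing termination, and closes the $\NPSPACE$ upper bound. Combining this with the hardness direction yields the claimed $\PSPACE$-completeness.
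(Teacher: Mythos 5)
Your proof is correct and follows essentially the same route as the paper: hardness by observing that classical planning with constant costs is a trivial special case of sdac planning, and membership via a guess-and-check nondeterministic machine that stores only the current state, the accumulated cost compared against $\limit$, and the polynomial-space evaluation of the \FPSPACE{} cost function, concluding with $\NPSPACE = \PSPACE$ by Savitch's theorem. Your explicit termination bound via distinct $(s,g)$ pairs is a careful detail that the paper's argument leaves implicit, but it does not change the approach.
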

%\begin{proof}
%    [Proof \autocite{speck-et-al-icaps2021}]
%    $\PSPACE$-hardness results from reducing the well-known bounded plan %existence problem (with unit costs) \autocite{bylander-aij1994} to our %problem, where the plan length corresponds to the plan cost.
%    $\PSPACE$-membership can be proven by defining a nondeterministic TM that %guesses an action for application in each step, evaluating the applied %actions and adding up their costs.
%    This nondeterministic Turing Machine terminates if the selected action is %not applicable, or the given cost bound $\limit$ is exceeded.
%    Since only the current state, the summed up costs and the computation of %the selected cost function (in $\FPSPACE$) must be maintained at any time, %this Turing Machine is in $\NPSPACE$, which is known to be equivalent to %$\PSPACE$ \autocite{savitch-jcss1970}.
%\end{proof}

Recall that the same computational complexity of two planning formalisms does not imply the same expressive power \autocite{nebel-jair2000}.
While existing translations (EVMDD-based and combinatorial \autocite{geisser-et-al-ijcai2015,geisser-phd2018}) of sdac planning tasks into classical (constant cost) planning tasks work well in practice, both can lead to an exponential increase in task size, which means that strictly speaking they are not compilations by \Cref{def:compilation-scheme}.
\textcite{speck-et-al-icaps2021}, however, analyzed the compilability of sdac and obtained results of possibility and impossibility that depend on the desired plan length and the complexity of the cost function.
For this purpose, it is assumed that the cost function is provided in a concise form of a deterministic Turing machine (DTM).
It should be noted that, in general, other concise forms of cost functions, such as computer programs or the format defined in \Cref{def:cost-function}, can be simply converted to this form by specifying a DTM that computes/simulates the corresponding cost function.
Table \ref{tab:compilation_sdac} gives an overview of the compilation results, which we discuss below.

\begin{table}
    \begin{center}
    \resizebox{0.9\textwidth}{!}{
        \begin{tabular}{ll|p{2.5cm}|p{2.75cm}|p{2.75cm}|}
                                                                   &                                                      & \multicolumn{3}{c|}{Desired Plan Length Preservation}                                                                                                                                                                            \\[1mm]
                                                                   &                                                      & linearly                                                                                                                            & polynomially                                                                  & don't care \\
            \midrule
            \multirow{2}{*}{\rotatebox{90}{$\costfun$ Complexity}} & \raisebox{-3mm}{\rotatebox{90}{\vspace{-2cm} $\FP$}} & \multicolumn{1}{p{2.5cm}|}{\centering \emph{impossible}}                                                                            & \multicolumn{2}{p{5.5cm}|}{\centering \emph{possible} using DTM compilation}               \\
            \cmidrule{2-5}
                                                                   & \raisebox{-12mm}{\rotatebox{90}{$\FPSPACE$}}         & \multicolumn{2}{p{5.5cm}|}{\centering \emph{impossible} (unless polynomial hierarchy collapses at the third level)} & \multicolumn{1}{p{2.75cm}|}{\centering \emph{possible} using DTM compilation}              \\
            \midrule
        \end{tabular}
    }
\end{center}
\vspace{-4cm}
\begin{tikzpicture}
    \node at (0,0) {};
    \draw[line width=0.1cm,black] (5.325,1.1) |- (8.7,-0.345);
    \draw[line width=0.1cm,black] (5.325,-0.345) -| (8.7,-2.3);
\end{tikzpicture}
    \caption[Compilability results for planning with state-dependent action cost.]{Existence results for compilation schemes preserving task sizes polynomially depending on the computational complexity of the cost functions (rows) and the desired plan length preservation (columns) \autocite{speck-et-al-icaps2021}.}
    \label{tab:compilation_sdac}
\end{table}

\paragraph{Possibility Results.}  It can be shown that there exists a compilation scheme that preserves the plan length polynomially when the cost function is in $\FP$.
Moreover, there is a valid compilation scheme (unbounded plan length) if the cost function is in $\FPSPACE$.
The underlying idea of the compilation scheme that yields these two results is to simulate a DTM that computes the cost function $\costfun_o(s)$ of the operator $o \in \operators$ for a current state $s \in \states$ within the planning task.
For this purpose, each operator is compiled into an operator sequence that ensures that the task size and plan length are bounded by the space and time complexity of the corresponding DTM used to compute the cost function \autocite{speck-et-al-icaps2021}.
%: Selecting an operator with checking the precondition of the operator, writing the current state to the tape of the DTM, simulating the DTM, reading the cost from the tape from which the cost is derived, and finishing the action which performs the effects.
%That way, each operator is complied into several operators and the task size and plan length are bounded by the space and time complexity of the corresponding DTM used to compute the cost function.
Note that this compilation makes it possible to encode various different cost functions that can be computed with a DTM, although the usefulness of the resulting planning task obviously depends heavily on the complexity of the DTM.

\paragraph{Impossibility Results.} Finally, \textcite{speck-et-al-icaps2021} proved that it is impossible to compile away, i.e., to find a compilation scheme that preserves plan length linearly when the cost function is in $\FP$. Moreover, it is impossible to preserve the plan length polynomially when the cost function is in $\FPSPACE$, unless the polynomial hierarchy collapses to the third level.

\medskip
Considering these results on complexity and compilability, one can see that it may be helpful not to compile sdac at all, but to keep it in the model and support it natively in algorithms, which avoids the overhead introduced by compilation \autocite{speck-et-al-icaps2021}.
In particular, the fact that cost functions in $\FP$, such as the fragment considered in practice (\Cref{def:cost-function}), require polynomial blowup in terms of domain description size and original plan length may make the use of the compilation technique infeasible.
In the following, we show how it is possible to natively support and represent sdac with decision diagrams to perform a symbolic search.

\section{Symbolic Search}
\textcite{speck-et-al-icaps2018} introduce a complete and optimal approach that performs symbolic search with EVMDDs.
In contrast to symbolic search with BDDs, where costs are represented by partitioning sets of states into subsets with identical cost values (see \Cref{ex:dds}), with EVMDDs it is possible to represent a set of reachable states together with reachability costs simultaneously.
The latter is possible by assigning to reachable states $S \subseteq \states$ the corresponding cost $g \in \mathbb{N}_0$ and to unreachable states the cost $\infty$.
\textcite{speck-et-al-icaps2018} introduced all the necessary operations such as the image and preimage operations for EVMDDs to enable symbolic search.
The main advantages of using EVBDDs/EVMDDs over ADDs and BDDs are that they can represent certain functions exponentially more compactly and that encoding reachability and cost in the open list, closed list, and transition relations simultaneously can also result in a more concise representation \autocite{speck-et-al-icaps2018}.

In addition to the EVMDD-based symbolic search for sdac planning tasks, we propose and study in this thesis a novel variant based on BDDs.
The underlying idea is to first create an ADD (instead of an EVMDD) that represents each operator as a transition relation that includes costs, before partitioning the ADD into multiple BDDs.
In other words, we create multiple transition relations for each operator with sdac, one for each possible cost value, encoding as a precondition the corresponding condition for the cost. This approach may result in multiple transition relations with different costs, but allows us to perform the classical symbolic search with cost bucketing, using the full power of sophisticated BDD operations and libraries.

Note that the representation of the cost function as decision diagram can lead to an exponential increase in the worst case. However, as usual with decision diagrams, it is assumed that the representation size of the cost functions is manageable.

%\clearpage
\begin{figure}
    \subfloat[An EVMDD representing the transition relation of operator $o$ and the cost $\costfun_o(s)$ \autocite{speck-et-al-icaps2018}.\label{fig:tr_evmdd}]{
        \centering
        \makebox[0.425\textwidth][c]{
            \begin{tikzpicture}[%
        costnode/.style={pos=0.6,rectangle,thick,solid,
                inner sep=2pt,draw,fill=white,text=black,font=\small},%
        decisionnode/.style={circle,thick,minimum size=4mm,
                inner sep=2pt,draw,fill=white!80!black,text=black,font=\normalsize},%
        xscale=2,yscale=1.5,>=latex]
    \node[] (before) at (0,3.85) {}; % incoming edge_value
    \node[decisionnode, minimum size=0.7cm] (x) at (0,3) {$x$};
    \node[decisionnode, minimum size=0.7cm] (xp) at (0,2.25) {$x'$};
    \node[decisionnode, minimum size=0.7cm] (y) at (0,1.5) {$y$};
    \node[decisionnode, minimum size=0.7cm] (yp0) at (-0.5,0.75) {$y'$};
    \node[decisionnode, minimum size=0.7cm] (yp1) at (0.5,0.75) {$y'$};
    \node[draw,thick,fill=white!80!black,rectangle, minimum size=0.7cm]
    (after0) at (0,0) {{$0$}};
    % % before
    \draw[->, thick] (before) to[bend right=0,label distance=0mm,edge
    label={},swap,pos=0.3] node[costnode,pos=0.35] {$1$} (x);
    % % x =>
    \draw[->, thick, dotted] (x) to[bend right=75,label distance=0mm,edge
    label={\small{$0$}},swap,pos=0.1] node[costnode,pos=0.5] {$0$} (xp);
    \draw[->, thick] (x) to[bend left=90,label distance=0mm,edge
    label={\small{$1$}},pos=0.15] node[costnode,pos=0.5] {$\infty$} (after0);
    % % xp =>
    \draw[->, thick, dotted] (xp) to[bend right=90,label distance=0mm,edge
    label={\small{$0$}},swap,pos=0.15] node[costnode,pos=0.5] {$\infty$} (after0);
    \draw[->, thick] (xp) to[bend left=75,label distance=0mm,edge
    label={\small{$1$}},pos=0.01] node[costnode,pos=0.5] {$0$} (y);
    % % y =>
    \draw[->, thick, dotted] (y) to[bend right=30,label distance=0mm,edge
    label={\small{$0$}\phantom{.}},swap,pos=0.3] node[costnode,pos=0.35] {$0$} (yp0);
    \draw[->, thick] (y) to[bend left=30,label distance=0mm,edge
    label={\phantom{.}\small{$1$}},pos=0.3] node[costnode,pos=0.35] {$5$} (yp1);
    % % yp0 =>
    \draw[->, thick, dotted] (yp0) to[bend right=30,label distance=0mm,edge
    label={\small{$0$}},swap,pos=0.01] node[costnode,pos=0.35] {$0$} (after0);
    \draw[->, thick] (yp0) to[bend left=30,label distance=0mm,edge
    label={\small{$1$}},pos=0.01] node[costnode,pos=0.35] {$\infty$} (after0);
    % % yp0 =>
    \draw[->, thick, dotted] (yp1) to[bend right=30,label distance=0mm,edge
    label={\small{$0$}\phantom{.}},swap,pos=0.01] node[costnode,pos=0.35] {$\infty$} (after0);
    \draw[->, thick] (yp1) to[bend left=30,label distance=0mm,edge
    label={\phantom{.}\small{$1$}},pos=0.01] node[costnode,pos=0.35] {$0$} (after0);
\end{tikzpicture}
        }
    }\hfill
    \subfloat[Two BDDs representing the transition relation of operator $o$, where the left BDD encodes the cost of 1 and the right BDD encodes the cost of 6.\label{fig:tr_bdds}]{
        \centering
        \makebox[0.5\textwidth][c]{
            \begin{tikzpicture}[%
        costnode/.style={pos=0.6,rectangle,thick,solid,
                inner sep=2pt,draw,fill=white,text=black,font=\small},%
        decisionnode/.style={circle,thick,minimum size=4mm,
                inner sep=2pt,draw,fill=white!80!black,text=black,font=\normalsize},%
        xscale=2,yscale=1.5,>=latex]
    \begin{scope}
        \node[] (before) at (0,3.85) {}; % incoming edge_value
        \node[decisionnode, minimum size=0.7cm] (x) at (0,3) {$x$};
        \node[decisionnode, minimum size=0.7cm] (xp) at (0.2,2.25) {$x'$};
        \node[decisionnode, minimum size=0.7cm] (y) at (0.4,1.5) {$y$};
        \node[decisionnode, minimum size=0.7cm] (yp) at (0.6,0.75) {$y'$};
        \node[draw,thick,fill=white!80!black,rectangle, minimum size=0.55cm]
        (after0) at (-0.25,0) {{$0$}};
        \node[draw,thick,fill=white!80!black,rectangle, minimum size=0.55cm]
        (after1) at (0.8,0) {{$1$}};
        \draw[->, thick] (before) to (x);
        % x =>
        \draw[->, thick, dotted] (x) to[bend right=0,label distance=0mm,edge
        label={\small{$0$}},pos=0.6] (xp);
        \draw[->, thick] (x) to[bend left=0,label distance=0mm,edge
        label={\small{$1$}},swap,pos=0.1] (after0);
        % xp =>
        \draw[->, thick, dotted] (xp) to[bend right=0,label distance=0mm,edge
        label={\small{$0$}},swap,pos=0.1] (after0);
        \draw[->, thick] (xp) to[bend left=0,label distance=0mm,edge
        label={\small{$1$}},pos=0.6] (y);
        % y =>
        \draw[->, thick, dotted] (y) to[bend right=0,label distance=0mm,edge
        label={\small{$0$}},pos=0.6] (yp);
        \draw[->, thick] (y) to[bend left=0,label distance=0mm,edge
        label={\small{$1$}},swap,pos=0.1] (after0);
        % yp =>
        \draw[->, thick, dotted] (yp) to[bend right=0,label distance=0mm,edge
        label={\small{$0$}},pos=0.6] (after1);
        \draw[->, thick] (yp) to[bend left=0,label distance=0mm,edge
        label={\small{$1$}},swap,pos=0.1] (after0);
    \end{scope}
    \begin{scope}[xshift=1.75cm]
        \node[] (before) at (0,3.85) {}; % incoming edge_value
        \node[decisionnode, minimum size=0.7cm] (x) at (0,3) {$x$};
        \node[decisionnode, minimum size=0.7cm] (xp) at (0.2,2.25) {$x'$};
        \node[decisionnode, minimum size=0.7cm] (y) at (0.4,1.5) {$y$};
        \node[decisionnode, minimum size=0.7cm] (yp) at (0.6,0.75) {$y'$};
        \node[draw,thick,fill=white!80!black,rectangle, minimum size=0.55cm]
        (after0) at (-0.25,0) {{$0$}};
        \node[draw,thick,fill=white!80!black,rectangle, minimum size=0.55cm]
        (after1) at (0.8,0) {{$1$}};
        \draw[->, thick] (before) to (x);
        % x =>
        \draw[->, thick, dotted] (x) to[bend right=0,label distance=0mm,edge
        label={\small{$0$}},pos=0.6] (xp);
        \draw[->, thick] (x) to[bend left=0,label distance=0mm,edge
        label={\small{$1$}},swap,pos=0.1] (after0);
        % xp =>
        \draw[->, thick, dotted] (xp) to[bend right=0,label distance=0mm,edge
        label={\small{$0$}},swap,pos=0.1] (after0);
        \draw[->, thick] (xp) to[bend left=0,label distance=0mm,edge
        label={\small{$1$}},pos=0.6] (y);
        % y =>
        \draw[->, thick, dotted] (y) to[bend right=0,label distance=0mm,edge
        label={\small{$0$}},swap,pos=0.1] (after0);
        \draw[->, thick] (y) to[bend left=0,label distance=0mm,edge
        label={\small{$1$}},pos=0.6] (yp);
        % yp =>
        \draw[->, thick, dotted] (yp) to[bend right=0,label distance=0mm,edge
        label={\small{$0$}},swap,pos=0.1] (after0);
        \draw[->, thick] (yp) to[bend left=0,label distance=0mm,edge
        label={\small{$1$}},pos=0.6] (after1);
    \end{scope}
\end{tikzpicture}
        }
    }
    \caption[Visualization of symbolic representations of an operator with state-dependent action costs.]{Visualization of different variants for a symbolic representation of an operator $o = \langle \lnot x, x \rangle$ with state-dependent action costs $\costfun_o(s) = 5y +1$.}
    \label{fig:tr_sdac}
\end{figure}
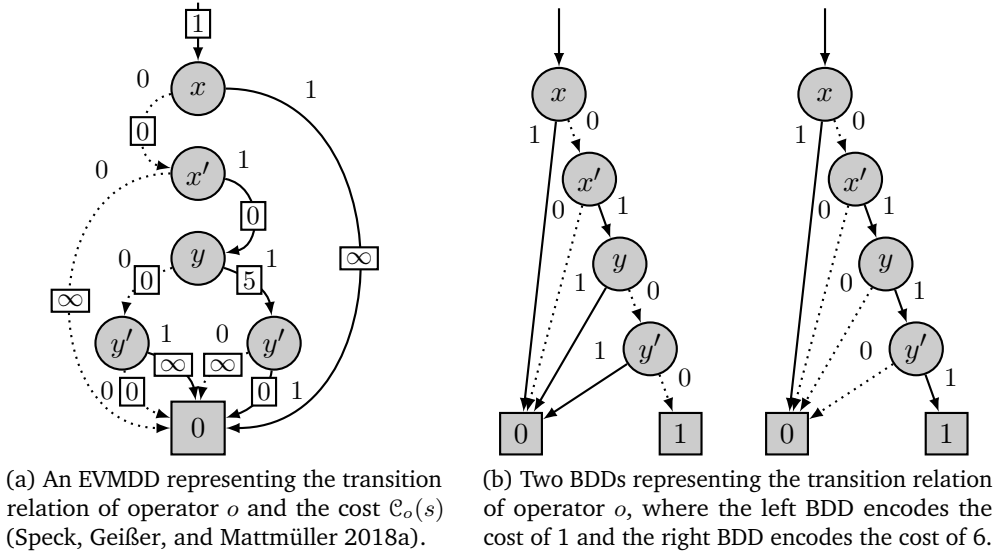

\begin{example}
    Consider an operator $o = \langle \lnot x, x \rangle$ with sdac $\costfun_o(s) = 5y +1$ for $s \in \states$. \Cref{fig:tr_sdac} visualizes the transition relation of $o$ represented with different decision diagrams, where the precondition (predecessor states) are encoded with unprimed variables and the effects (successor states) with primed variables. \Cref{fig:tr_evmdd} represents the EVMDD representing the transition relation of $o$, while encoding the cost $\costfun_o(s)$ for each state $s \in \states$ at the same time. To represent the operator $o$ with BDDs, multiple BDDs are necessary to decompose the cost function.
    \Cref{fig:tr_bdds} illustrates two BDDs representing the two possible cost values of $\costfun_o(s)$, specifically 1 and 6. The left BDD encodes the cost of 1 by encoding $ \lnot y$ in the precondition.  Note that by adding the new precondition $\lnot y$, $\lnot y$ also holds in all successor states, i.e., $\lnot y'$, since variable $y$ does not occur in the effect (closed world assumption). Similarly, the right BDD encodes $y$ as a precondition to obtain a cost of 6.
\end{example}

\paragraph{Empirical Evaluation.}  While \textcite{speck-et-al-icaps2018}  reported results comparing symbolic search with EVMDDs and translation-based approaches, here we present new empirical results incorporating recent advances, planners, and benchmarks for planning with sdac \autocite{geisser-phd2018,corraya-et-al-ki2019,speck-et-al-icaps2021}.

\Cref{tab:coverage_sdac} shows the coverage (number of optimally solved instances) of explicit \astar{} search \autocite{hart-et-al-ieeessc1968} with various translation-based heuristics \autocite{geisser-phd2018} and forward, backward and bidirectional symbolic search with all data structures represented as either EVMDDs \autocite{speck-et-al-icaps2018} or BDDs.
In explicit \astar{} search for planning with sdac, the cost function is represented as an EVMDD, which is used to evaluate the actual costs ($g$-values) \autocite{geisser-phd2018}.
We consider the \heu{blind}{} heuristic and the \heu{max}{} heuristic \autocite{bonet-geffner-ecp1999}, which is analyzed in two versions: \heu{max}{c} uses the combinatorial translation \autocite{geisser-phd2018} and \heu{max}{dd} uses the EVMDD translation \autocite{geisser-et-al-ijcai2015} to compute the heuristic values.

\begin{table}
    \begin{center}
        \resizebox{1\textwidth}{!}{
            \newcommand{\numtasks}[1]{(#1)}
\begin{tabular}{@{}lrrrrrrrrr@{}}
    \toprule
    Algorithm & \multicolumn{3}{c}{\astar{} (explicit)} & \multicolumn{3}{c}{\textbf{EVMDD Search}} & \multicolumn{3}{c}{\textbf{BDD Search}} \\
    \cmidrule(lr){2-4} \cmidrule(lr){5-7} \cmidrule(lr){8-10}
    Domain (\# Tasks) & \heu{blind}{} & \heu{max}{c} & \heu{max}{dd} & fwd & bwd & bid & fwd & bwd & bid \\
    \midrule
    Asterix \numtasks{30} & 10 & 5 & 10 & \textbf{30} & 29 & \textbf{30} & \textbf{30} & 28 & \textbf{30} \\
    
    Pegsol \numtasks{100} & \textbf{96} & 0 & \textbf{96} & 86 & 16 & 88 & \textbf{96} & 23 & \textbf{96} \\
    % Greedy Pegsol \numtasks{50} & 48 & 0 & 48 & 44 & 8 & 44 & 48 & 14 & 48 \\
    % Greedy Pegsol2 \numtasks{50} & 48 & 0 & 48 & 42 & 8 & 44 & 48 & 9 & 48 \\
    % Greedy Pegsol-08 \numtasks{30} & \textbf{29} & 0 & \textbf{29} & 27 & 7 & 27 & \textbf{29} & 12 & \textbf{29} \\
    % Greedy Pegsol-11 \numtasks{20} & \textbf{19} & 0 & \textbf{19} & 17 & 1 & 17 & \textbf{19} & 2 & \textbf{19} \\
    % Greedy Pegsol-08-v2 \numtasks{30} & \textbf{29} & 0 & \textbf{29} & 26 & 7 & 27 & \textbf{29} & 8 & \textbf{29} \\
    % Greedy Pegsol-11-v2 \numtasks{20} & \textbf{19} & 0 & \textbf{19} & 16 & 1 & 17 & \textbf{19} & 1 & \textbf{19} \\

    Gripper \numtasks{30} & 8 & 4 & 8 & 11 & 8 & 12 & \textbf{20} & 14 & 19 \\

    PSR Large \numtasks{100} & 33 & 12 & \textbf{39} & 36 & 20 & 36 & 28 & 23 & 28 \\
    PSR Middle \numtasks{100} & 79 &  34 & 84 & 88 & 55 & \textbf{90} & 74 & 58 & 74 \\
    %PSR Large \numtasks{50} & 20 & 11 & \textbf{23} & 21 & 13 & 22 & 18 & 15 & 18 \\
    %PSR Large Full \numtasks{50} & 13 & 1 & \textbf{16} & 15 & 7 & 14 & 10 & 8 & 10 \\
    % PSR Middle \numtasks{50} & 48 & 26 & 49 & \textbf{50} & 37 & \textbf{50} & 43 & 37 & 43 \\
    % PSR Middle Full \numtasks{50} & 31 & 8 & 35 & 38 & 18 & \textbf{40} & 31 & 21 & 31 \\

    Openstacks \numtasks{70} & 9 & 10 & 10 & 40 & 30 &  38 & \textbf{54} & 46 & 51 \\
    % Openstacks-08 \numtasks{30} & 6 & 6 & 6 & 14 & 12 & 12 & \textbf{18} & 15 & \textbf{18} \\
    % Openstacks-11 \numtasks{20} & 3 & 4 & 4 & \textbf{20} & 15 & \textbf{20} & \textbf{20} & \textbf{20} & \textbf{20} \\
    % Openstacks-14 \numtasks{20} & 0 & 0 & 0 & 6 & 3 & 6 & \textbf{16} & 11 & 13 \\

    Transport \numtasks{30} & 24 & 19 & 24 & 24 & 23 & \textbf{29} & 24 & 23 & 28 \\
    TSP \numtasks{26} & \textbf{21} & 12 & \textbf{21} & 15 & 10 & 13 & 18 & 10 & 18 \\
    \midrule
    \textbf{Sum \numtasks{486}} & 280 & 96 & 292 & 330 & 191 & 336 & \textbf{344} & 225 & \textbf{344} \\
    \bottomrule
    \end{tabular}
    
        }
    \end{center}
    \caption[Coverage for planning with state-dependent action costs.]{Coverage (number of optimally solved instances) for explicit and symbolic search algorithms on versions of planning domains with state-dependent action costs.}
    \label{tab:coverage_sdac}
\end{table}

\bigskip
Overall, it can be seen that native support for sdac within symbolic search compares favorably with explicit search approaches.
%Symbolic (bidirectional) search based on EVMDDs or BDDs solves most tasks overall. 
Comparing symbolic search with BDDs and EVMDDs, there are domain-wise differences in coverage, with BDD-based symbolic search performing best overall.
However, the structural advantages of EVMDDs pay off especially in the PSR domain, where the cost function is complex and contains derived variables represented with the symbolic translation described in \Cref{ch:axioms}.\footnote{In \Cref{sec:combination}, the relationship and interplay of the different planning extensions is discussed in more detail.} Looking at the explicit approaches, we can see that in many domains it is not possible to perform a combinatorial translation \heu{max}{c}, which often requires exponentially many operators and thus solves the least number of tasks.
Explicit \astar{} search with the $\heu{blind}{}$ and the $\heu{max}{dd}$ heuristics performs much better.
These configurations use EVMDDs to concisely represent the cost functions and as a basis for the heuristic computation, which seems to pay off.
It turns out that the heuristic estimates of $\heu{max}{dd}$ can help the search, but the performance of $\heu{max}{dd}$ is still often not comparable to symbolic blind search.
However, there are domains where explicit heuristic search performs better than symbolic search. This shows that, similar to classical planning, a potential portfolio planner of these complementary search strategies could lead to a state-of-the-art planner that combines both strengths \autocite{sievers-et-al-aaai2019}.

%\todo{links to planners and benchmarks}
%\section{Empirical Evaluation}

%\section{Summary}

\chapter{Oversubscription Planning}\label{ch:osp}
%\chapterquote{Having conflicting goals, dedicating resources to unconnected targets, and accommodating incompatible interests [...] make for bad strategy.}{...}
\chapterquote{A goal is not always meant to be reached, it often serves simply as something to aim at.}{Bruce Lee}

\renewcommand{\kiviatGoal}{2}
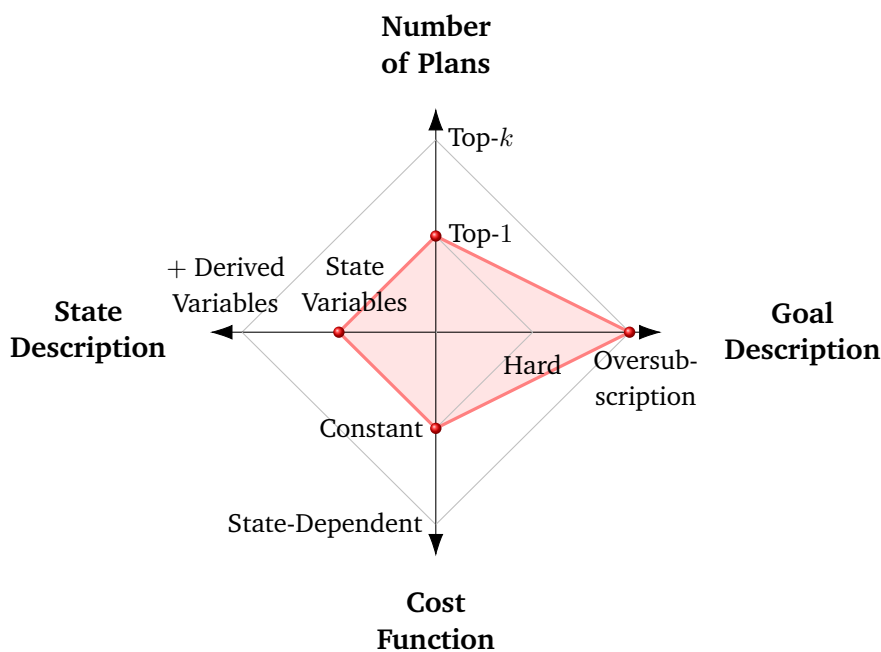
\begin{figure}[t]
    \begin{center}
        \begin{tikzpicture}
    \tkzKiviatDiagram[
    radial style/.style ={-{Latex[length=3mm, width=2mm]}},
    scale=0.85,
    label space=1.5,
    radial = 1,
    gap = 1.5,
    step = 1,
    lattice = 2]{
    \hspace{2.5cm}\mbox{\parbox{4cm}{{\begin{center}\textbf{Goal}\\\textbf{Description} \end{center}}}},
    {\textbf{Number of Plans}},
    \hspace{-1.5cm}\mbox{\parbox{3cm}{{\begin{center}\textbf{State}\\\textbf{Description}\end{center}}}},
    \textbf{Cost Function}}

    \ifbool{kiviatRed}{
        \tkzKiviatLine[very thick,color=red!50,
            fill=red!30,
            opacity=.35,
            mark=ball,
            mark size=2.5pt,
            ball color=red](\kiviatGoal{},\kiviatTopk{},\kiviatPredicates{},\kiviatCost{})
    }

    \ifbool{kiviatBlue}{
        \tkzKiviatLine[very thick,color=blue!50,
            fill=blue!30,
            opacity=.35,
            mark=ball,
            mark size=2.5pt,
            ball color=blue](1,1,1,1)
    }

    \ifbool{kiviatFull}{
    \tkzKiviatLine[very thick,color=darkgreen!50,
        fill=green!30,
        opacity=.35,
        mark=ball,
        mark size=2.5pt,
        ball color=darkgreen](1,2,2,2)
    \tkzKiviatLine[very thick,color=blue!50,
        fill=blue!30,
        opacity=.35,
        mark=ball,
        mark size=2.5pt,
        ball color=blue](1,1,1,1)
    \tkzKiviatLine[very thick,color=red!50,
        %fill=red!30,
        %opacity=.35,
        mark=ball,
        mark size=2.5pt,
        ball color=red](\kiviatGoal{},\kiviatTopk{},\kiviatPredicates{},\kiviatCost{})
    \LegendBox[shift={(-0cm,-2.25cm)}]{current bounding box.south west}%
    {
    red!100/{Forward symbolic search (contribution of this thesis)},
    asparagus!100/{Bidirectional symbolic search (contribution of this thesis)},
    blue!100/{Bidirectional symbolic search (previous state of the art)}}
    }
    {}

    % goal specification
    \draw[] node[align=center,fill=none] at (1.5,-0.5) {\small Hard};
    \draw[] node[align=center,fill=none] at (3.25,-0.75) {\small Oversub-\\ \small scription};
    %\draw[] node[] at (1.75,-0.6) {\rotatebox{-45}{hard}};
    %\draw[] node[align=center] at (3.85,-1.25) {\rotatebox{-45}{oversubscription}};

    % number of plans
    \draw[] node[align=center,fill=none] at (0.7,1.5) {\small Top-$1$};
    \draw[] node[align=center,fill=none] at (0.7,3.0) {\small Top-$k$};

    % derived predicates
    \draw[] node[align=center,fill=none] at (-1.25,0.75) {\small State\\ \small Variables};
    \draw[] node[align=center,fill=none] at (-3.25,0.75) {\small $+$ \small Derived\\ \small Variables};

    % cost function
    \draw[] node[align=center,fill=none] at (-1,-1.5) {\small Constant};
    \draw[] node[align=center,fill=none] at (-1.6,-3.0) {\small State-Dependent\phantom{0}};
\end{tikzpicture}
    \end{center}
    \caption[Overview of extension for classical planning (goal description).]{Overview of extensions for classical planning, where the red color denotes the planning formalism supported by the proposed symbolic search approach.}
    \label{fig:goal_description:kiviat}
\end{figure}
\renewcommand{\kiviatGoal}{1}

\section*{Core Publication of this Chapter}
\renewcommand{\citebf}[1]{\textbf{#1}}
\begin{itemize}
    \item \fullcite{speck-katz-aaai2021}
\end{itemize}
\renewcommand{\citebf}[1]{#1}

In conventional classical planning, goal states are defined by a goal condition, which is treated as a hard constraint that must be fully satisfied by any solution/plan.
In many real-world applications, however, the description of goals is oversubscribed, i.e., there are a large number of desirable, often competing goals of varying value, and a system (e.g., a Mars rover) may
not be able to achieve all of them with the available resources (e.g., battery power).
In such scenarios, it is natural to assign different utility values to states and to search for feasible and achievable states that maximize the overall utility value.
This planning formalism is called \emph{partial satisfaction planning}. Here, we distinguish between \emph{net-benefit planning} \autocite{vandenbriel-et-al-aaai2004}, where operator costs and state utility values, i.e., solution costs and solution utilities, are comparable, and \emph{oversubscription planning} \autocite{smith-icaps2004}, where those are not comparable. 

Symbolic search in the context of partial satisfaction planning has been studied to a limited extent.
For net-benefit planning, symbolic branch-and-bound search was introduced \autocite{edelkamp-kissmann-ijcai2009,kissmann-phd2012}, which considers so-called soft-goal planning tasks, i.e., planning tasks in which the utility function is a weighted sum over a subset of the state variables. 
Since \textcite{keyder-geffner-jair2009} provided a simple and concise compilation from net-benefit planning (soft-goal planning tasks) to classical planning, which is most commonly used in practice, we will focus on oversubscription planning, which is considered more challenging \autocite{domshlak-mirkis-jair2015}.
For oversubscription planning, there are a variety of explicit (heuristic) search approaches, mainly based on adapting existing heuristics from classical planning to oversubscription planning \autocite{mirkis-domshlak-icaps2013,domshlak-mirkis-jair2015,muller-karpas-icaps2018,katz-et-al-icaps2019,katz-et-al-icaps2019,garcia-olaya-et-al-aij2021}.
An important example of symbolic search in the context of oversubscription planning is the work of \textcite{eifler-et-al-ijcai2020}, in which plan explanations were investigated considering plan utilities.
The work of \textcite{speck-katz-aaai2021} is the first to explicitly define and study in detail symbolic search for solving oversubscription planning problems.
%To the best of our knowledge, \textcite{speck-katz-aaai2021} were the first to explicitly define and investigate symbolic search directly for solving oversubscription planning problems.

In this chapter, we define and motivate oversubscription planning that allows to extend the goal specification (\Cref{fig:goal_description:kiviat}), and discuss and summarize complexity and compilability results for this setting.
Then, an optimal and complete symbolic search approach for oversubscription planning introduced by \textcite{speck-katz-aaai2021} is presented.
The empirical analysis shows that the presented symbolic approach competes favorably with explicit heuristic state-space search.

\section{Formalism}

We consider oversubscription planning tasks \autocite{smith-icaps2004,katz-et-al-icaps2019}, where the solution cost and utility are \emph{not comparable}.

\begin{definition}[Oversubscription Planning]\label{def:planning_osp}
    An \emph{oversubscription planning (osp) task} is a tuple $\task = \langle \domain, \init, \goal, \limit \rangle$ with $\domain = \langle \vars, \operators, \constcostfun, \utility\rangle$ that extends an ordinary planning task and domain by a state utility function $\utility : \states \to \mathbb{N}_0$ that assigns utility to states.
\end{definition}

Similar to planning with sdac, where we focused mainly on cost functions that can be evaluated in polynomial time, here we focus mainly on utility functions that can be evaluated in polynomial time.
We assume that the utility function $\utility$ has the same form as an operator cost function (\Cref{def:cost-function}).

Since we consider utility in osp, it is natural to search for a plan that maximizes utility, i.e., leads to a state with high utility within the cost bound $\limit$. For this purpose, we define an osp plan as follows and introduce the concept of utility-optimal plans.

\begin{definition}[Osp Plan]\label{def:osp_plan}
    An \emph{(osp) plan} $\pi$ is defined as an ordinary classical plan (\Cref{def:plan}), i.e., an applicable sequence of operators that leads from an initial state $\init$ to a goal state $s_\star \in \goalstates$, where the plan cost (cumulative operator cost) is within the cost limit $\cost{}(\pi) \leq \limit$.

    The \emph{utility} $\utility(\pi)$ of the plan $\pi$ is defined by the utility of the final state $s_\star$ induced by $\pi$, i.e., $\utility(\pi) = \utility(s_\star)$.

    Such a plan $\pi$ is called \emph{utility-optimal}, or simply \emph{optimal} (if it is clear from the context), if there is no other plan $\pi'$ for which $\utility(\pi') > \utility(\pi)$.
    A plan $\pi$ is \emph{cheapest utility-optimal} if it is a cheapest among utility-optimal plans, i.e., there is no utility-optimal plan $\pi'$ for which $\cost(\pi') < \cost(\pi)$.
\end{definition}

The underlying idea of osp is to model many different possible competing goals with different associated utilities.
The overall goal is to achieve as much as possible given a cost $\limit$ that models a limited resource together with operator costs.
Therefore, in practice, the ``hard'' goals $\goal$ are often underspecified or even nonexistent, as the goals can be modeled with utility \autocite{katz-et-al-zenodo2019,katz-et-al-icaps2019}.
Since all the approaches and results presented in this chapter can straightforwardly support hard goals $\goal$, we consider them for completeness.
As a result of this modeling, there are often many valid plans that vary strongly in their utility. 
Thus, the main goal and challenge of oversubscription planning is to identify a plan with high utility.
The search for an utility-optimal plan, i.e., a plan with the highest utility, is called \emph{optimal oversubscription planning} and is the main topic of this chapter.
Originally, osp was introduced to address NASA missions such as planning science experiments for a Mars rover \autocite{smith-icaps2004}, where there are often limited resources and several different and competing goals.

\begin{example}\label{ex:osp}
    Recall \Cref{ex:rover}, where we specified the goal as taking images at (6,1) and (10,1) and traveling equipped with the drone to \say{Three Forks} at (0,5).
    It is reasonable to assume that the trip to (0,5) with the drone is the (hard) goal $\goal$, since the journey of the rover is expected to continue from there.
    Let us also assume that the drone has a limited battery capacity, as in reality.
    With the operator cost $\constcostfun$ we describe the battery consumption of the drone for the respective actions, and the final plan is constrained by the battery capacity of say $20$, which is represented as the cost limit $\limit = 20$.
    Recall that in our model, only the operators associated with the drone have non-zero costs.

    With osp, we can now express that we are more interested in the location (10,1) by assigning a utility of $10$ to the states in which an image was taken at (6,1) and a utility of $15$ to the states in which an image was taken at (10,1).
    If we consider the original plan $\pi^{\rover{}}$ with a cost of $\cost{}(\pi^\rover{}) = 24$ by taking images at (6,1) and (10,1), with a utility of $10+15=25$, we see that it is not feasible because it is too expensive at a cost of $\cost{}(\pi^\rover{}) = 24 \not\leq 20 = \limit$.
    %In fact, it is not possible to deal with such scenarios in classical planning, where it is only distinguished between achieving or not achieving all (hard) goals $\goal$.
    % and the optimization of a set of possible goals is not considered.

    Due to the cost bound, in a utility-optimal osp plan, only the objective of taking an image at (10,1) can be satisfied, ignoring location (6,1). 
    The final utility-optimal osp plan is as follows $\pi^\rover{} = \langle$\navigate{7}{2}, \navigate{7}{1}, \startDrone{7}{1}, \fly{7}{1}{10}{1}, \takeImg{10}{1}, \fly{10}{1}{7}{1}, \landDrone{7}{1}, \navigate{7}{2}, \dots, \navigate{0}{5}$\rangle$ with a cost of $\cost(\pi^\rover{}) = 18$ and a utility of $\utility(\pi^\rover{}) = 15$. If the cost limit were even lower, the plan would change to taking an image at (6,1) or even not taking any images at all.
\end{example}

\section{Complexity and Compilability}

In net-benefit planning, extensive complexity analyses have been proposed, showing, among other things, that it is \PSPACE{}-complete \autocite{aghighi-jonsson-aaai2014,aghighi-backstrom-ijcai2015}.
Moreover, for net-benefit planning, where it is common to define the state utility function as a weighted sum over a subset of the state variables, it was shown that these have no expressive power and can be easily compiled away \autocite{keyder-geffner-jair2009}.

However, in osp, where solution cost and solution utility are \emph{not comparable}, there is little work on complexity and compilability \autocite{katz-mirkis-ijcai2016}.
\textcite{katz-mirkis-ijcai2016} state that optimal oversubscription planning is \PSPACE{}-complete.
There is no explicit proof of this statement in the literature, so we provide such a proof in this thesis. 
For this purpose, we define the bounded utility plan existence problem as follows.

\begin{definition}[Bounded Utility Plan Existence]\label{def:osp_plan_existence}
    The \emph{bounded utility plan existence} problem is the decision problem of determining whether there exists a plan with utility greater than or equal to $u \in \mathbb{N}_0$ for a given osp task $\task$.
\end{definition}

Similar to planning with state-dependent action costs (\Cref{def:planning_sdac}), we assume a utility function that is computationally no more difficult than planning itself, i.e., lies in $\FPSPACE$.

\begin{theorem}\label{thm:osp_space}
    Bounded utility plan existence is \PSPACE{}-complete, if the utility function is in $\FPSPACE$.
\end{theorem}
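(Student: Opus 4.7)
The plan is to establish membership and hardness separately. For membership, I would show the problem is in \NPSPACE{} and then apply Savitch's theorem to conclude \PSPACE{}-membership. For hardness, I would reduce from the bounded plan existence problem for classical planning, which is \PSPACE{}-complete by \Cref{thm:bounded-plan-existence}.

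For the upper bound, I would describe a nondeterministic algorithm that simulates a plan step by step. It maintains three pieces of information: the current state $s$ (polynomial in $|\task|$), the accumulated cost $g$ stored in binary (at most $\lceil \log_2 \limit \rceil$ bits, hence polynomial in $\size{\task}$), and a step counter up to $|\states| \leq 2^{|\vars|}$ stored in binary (also polynomial in $\size{\task}$). At each step the algorithm nondeterministically chooses an applicable operator $o$, updates $s$ to $s[o]$, increments $g$ by $\constcostfun(o)$ and rejects if $g > \limit$, and finally evaluates $\utility(s)$ and accepts if $\utility(s) \geq u$ and (if desired) $\goal \subseteq s$. The step counter guarantees termination: any utility-achieving plan can be shortened to avoid repeating states, so a plan with the required utility exists iff one of length at most $|\states|$ exists. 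Evaluating $\utility(s)$ requires calling an $\FPSPACE$ subroutine, which by assumption runs in space polynomial in $\size{\task}$; since we reuse this space across calls, the total space usage remains polynomial. Thus the problem lies in $\NPSPACE = \PSPACE$ by Savitch's theorem.

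For the lower bound, I would reduce from bounded plan existence of classical planning. Given a classical task $\task = \langle \vars, \operators, \constcostfun, \init, \goal, \limit \rangle$, I construct the osp task $\task' = \langle \vars, \operators, \constcostfun, \utility, \init, \top, \limit \rangle$ that retains all components of $\task$, drops the hard goal (or keeps it trivially as $\top$), and defines the utility function by $\utility(s) = 1$ if $\goal \subseteq s$ and $\utility(s) = 0$ otherwise. This utility function can be evaluated in polynomial time, so it lies in $\FP \subseteq \FPSPACE$. A plan with utility at least $u = 1$ within the cost bound $\limit$ exists in $\task'$ iff a plan reaching a $\goal$-state within cost $\limit$ exists in $\task$, establishing \PSPACE{}-hardness. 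The construction is clearly polynomial, so the reduction is valid.

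The main technical point to be careful about is the interaction between the \FPSPACE{} utility oracle and the \NPSPACE{} simulation: one must argue that successive invocations of the utility routine reuse the same polynomial workspace, so that nested or repeated calls do not accumulate. Beyond this bookkeeping, both directions are routine once the right encodings (binary counters for cost and steps) are fixed.
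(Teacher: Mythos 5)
Your proposal is correct and follows essentially the same route as the paper: membership via a nondeterministic machine that guesses operators while maintaining only the current state, accumulated cost, and the \FPSPACE{} utility evaluation, concluding with $\NPSPACE = \PSPACE$, and hardness by reducing bounded plan existence, encoding the goal as a $0/1$ utility exactly as the paper's remark about simulating $\goal$ via non-zero utility on goal states. Your explicit step counter and cycle-shortening argument are just extra (harmless) bookkeeping the paper leaves implicit.
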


\begin{proof}
    \PSPACE{}-hardness results from reducing the well-known bounded plan existence (with unit cost) problem (\Cref{thm:plan_existence}; \cite{bylander-aij1994}) to our problem, where the plan length is equal to the plan cost. \PSPACE{} membership can be proved by defining a nondeterministic Turing machine (NTM) that starts with the initial state and guesses an operator to apply at each step.
    In addition, the NTM evaluates the utility of the current state $s$ in each step. The NTM ends with ``Yes'' if $s \in \goalstates$ and $\utility{}(s) \geq u$ and with ``No'' if the selected operator is not applicable or the cost bound is exceeded.
    Since at any point in time only the current state, the summed costs, and the computation of the utility function (in \FPSPACE{}) need to be maintained, this NTM is in \NPSPACE{}, which is known to be equivalent to \PSPACE{} \autocite{savitch-jcss1970}.
\end{proof}

Note that this proof holds even if the osp formalism does not contain a goal specification $\goal$, as is often the case in the literature. The \PSPACE{}-hardness remains because we can simulate $\goal$ by assigning non-zero utility only to the goal states $\goalstates$, while requiring a plan with utility greater than $0$, i.e., $u>0$.

Similar to all other extensions, we can observe that the computational complexity is the same for classical and oversubscription planning.
The compilability of osp to classical planning is an open research question.
Whether and to what extent osp is more expressive than classical planning is not known yet.
However, there are some results and arguments in the literature that suggest that osp is (significantly) more expressive than classical planning.

\textcite{katz-mirkis-ijcai2016} investigated the computational complexity of osp given common classes of utility functions.
They showed that the complexity of several of these fragments is computationally more difficult for osp than for net-benefit planning, where operator costs and state utilities are comparable, indicating that osp is computationally more challenging.
\textcite{domshlak-mirkis-jair2015} argue that unlike classical planning or net-benefit planning, osp does not seem to be reducible to a shortest path problem with a single source and destination.
For this reason, solving osp tasks requires (1) exhaustive search algorithms and (2) sophisticated heuristics used in classical planning cannot be applied directly.
Based on these observations, \textcite{speck-katz-aaai2021} proposed to use symbolic search for osp because symbolic search is known to be an efficient search strategy that (1) can exhaustively explore the state space and (2) does not rely on heuristics, i.e., performs a blind search.

\begin{figure}
    \begin{center}
        \resizebox{1\textwidth}{!}{
            \begin{tikzpicture}[yscale=1]
    \begin{scope}
        \node[align=center,draw] (s0) at (0,0) {$\init = \lnot x \land \lnot y$\\$\utility(\init) = 0$};
        \node[align=center,draw] (s1) at (5,0.75) {$s_1 = \lnot x \land y$\\$\utility(s_1) = 0$};
        \node[align=center,draw] (s2) at (5,-0.75) {$s_2 = x \land \lnot y$\\$\utility(s_2) = 2$};
        \node[align=center,draw] (s3) at (10,0) {$s_3 = x \land y$\\$\utility(s_2) = 3$};

        \draw[dotted] (-1.5,1.5) rectangle (1.5,-2.2) node[pos=.5,align=center] {\\\\\\\\\\\\\\$g = 0$};
        \draw[dotted] (3.5,1.5) rectangle (6.5,-2.2) node[pos=.5,align=center] {\\\\\\\\\\\\\\$g = 1$};
        \draw[dotted] (8.5,1.5) rectangle (11.5,-2.2) node[pos=.5,align=center] {\\\\\\\\\\\\\\$g = 2$};

        \draw[dashed] (7.5,0.95) to (7.5,0.2);
        \draw[dashed] (7.5,-0.6) to (7.5,-2.0);
        \node[align=center] at (7.5,1.3) {$\limit = 1$};

        \draw[->, thick] (s0) to node [above, align=center] {$o_1$} (s1);
        \draw[->, thick] (s0) to node [above, align=center] {$o_2$} (s2);
        \draw[->, thick] (s2) to node [above, align=center] {$o_3$} (s3);
    \end{scope}
\end{tikzpicture}
        }
    \end{center}
    \caption[Visualization of an oversubscription planning task.]{Visualization of the transition system induced by the oversubscription planning tasks $\task$ in \Cref{ex:osp-algorithm} \autocite{speck-katz-aaai2021}.}
    \label{fig:osp-example}
\end{figure}

\section{Symbolic Search}
\textcite{speck-katz-aaai2021} define symbolic \emph{forward} search for osp in a straightforward way and showed that their approach is complete and optimal.
In fact, the presented symbolic search approach finds not only a utility-optimal plan but also a cheapest utility-optimal plan.

The underlying idea is to perform an exhaustive symbolic forward search, just as in classical planning, until the cost bound $\limit$ is exceeded.
The utility function $\utility$ is a numerical function that can be represented with one ADD or several BDDs, as explained in \Cref{ex:dds}.
In each expansion step, the utility of the expanded states is symbolically evaluated and the states with the highest utility so far are maintained.
In addition, representing the utility as ADDs (or BDDs) has the advantage that the highest possible value of the utility function is known directly, since it is simply the terminal node with the highest value.
Finally, we stop when either all reachable states have been expanded within the cost bound or a state with the overall maximum utility has been found, since there can be no better plan.
\Cref{ex:osp-algorithm} illustrates how the symbolic approach of \textcite{speck-katz-aaai2021} works.

\begin{example}[\cite{speck-katz-aaai2021}]\label{ex:osp-algorithm}
    Consider a unit-cost oversubscription planning task $\task = \langle \vars, \operators, \constcostfun, \utility, \init, \goal, \limit \rangle$ with an empty goal $\goal = \{\}$ such that all states are goal states, i.e., $\states = \goalstates$.
    Moreover, $\task$ has two binary variables $\vars = \{x,y\}$, where $\vardomain_x = \vardomain_y = \{0,1\}$, an initial state $\init(x) = \init(y) = 0$, a utility function $\utility = 2x + xy$, and a cost bound $\limit = 1$.
    There exist three operators $O = \{o_1,o_2,o_3\}$, where $o_1 = \langle \lnot x \land \lnot y, y \rangle$, $o_2 = \langle \lnot x \land \lnot y, x \rangle$ and $o_3 = \langle x \land \lnot y, y \rangle$.
    The induced transition system of $\task$ is depicted in Figure \ref{fig:osp-example}.

    Symbolic forward search for osp \autocite{speck-katz-aaai2021} starts with a BDD representing the set of states $open = \{\init\}= \lnot x \land \lnot y$ consisting of a single state, namely the initial state.
    \Cref{fig:add} visualizes the ADD and \Cref{fig:bdd_0,fig:bdd_2,fig:bdd_3} visualize the BDDs representing the utility function $\utility$ with a maximum utility of $3$.
    In the first step, we expand $\{\init\}$, which also forms the set of best states seen so far with utility $\utility(\init) = 0$.
    The expansion leads to two new states $open = \{s_1,s_2\} = (\lnot x \land y) \lor (x \land \lnot y)$, both of which can be achieved with cost $g = 1$.
    Since $s_2$ has utility $\utility(s_2) = 2$, the set of best states changes to $\{ s_2 \}$.
    Finally, the plan $\langle o_2 \rangle$ leading to one of the best states (here $s_2$) is reconstructed as the cost limit of $\limit = 1$ is exceeded.
    Recall that all these computations are performed with decision diagrams.
\end{example}

Finally, \textcite{speck-katz-aaai2021} argue that it is not straightforward to efficiently apply regression to osp, which would enable symbolic backward search and, in particular, symbolic bidirectional search.
Unlike classical planning, where the starting point of symbolic backward search is obvious, namely the goal states represented as a compact goal formula, the starting point of symbolic backward search for osp is not obvious.
The reason is that the objective is to find a final state with maximum utility, and it is not clear how this can be represented in backward search.
It might be possible to partition all goal states by their utility values and represent them as separate BDDs.
However, since the goal formula in osp tasks is often unspecified or even empty, this may require a regression of the entire
state space with multiple backward searches.
All in all, it is not clear how to efficiently apply symbolic regression to osp, which is an open research question.

\begin{table}[t]
    \begin{center}
        \resizebox{1\textwidth}{!}{
            \begin{tabular}{lrrrrrrrr}
    \toprule
    Algorithm                    & \multicolumn{2}{c}{\textbf{BDD Search}}   & \multicolumn{1}{c}{\textbf{\astar{uADD}}} & \multicolumn{3}{c}{\astar{mc}}                     & \multicolumn{2}{c}{BnB}                                                                                                                                                                                                              \\
    \cmidrule(lr){2-3} \cmidrule(lr){4-4} \cmidrule(lr){5-7} \cmidrule(lr){8-9}
    Benchmark (\# Tasks)     & \multicolumn{1}{c}{uBDD} & \multicolumn{1}{c}{uADD}                               & \multicolumn{1}{c}{{\heu{blind}{}}} & \multicolumn{1}{c}{\heu{blind}{}} & \multicolumn{1}{c}{\heu{max}{b}} & \multicolumn{1}{c}{\heu{m\&s}{b}} & \multicolumn{1}{c}{\heu{blind}{}} & \multicolumn{1}{c}{\heu{lmcut}{mc}} 
    \\
    \midrule
    ~~25\% Bound (1667) & 1271                                     & \textbf{1274}                                                          & 1165                                                & 1197                                   & 1190                                            & 1074                                             & 1183                                   & 1151                                             \\
    ~~50\% Bound (1667) & 990                                      & \textbf{993}                                                           & 860                                                 & 901                                    & 902                                             & 828                                              & 893                                    & 867                                              \\
    ~~75\% Bound (1667) & \textbf{866}                             & 862                                                                    & 718                                                 & 758                                    & 738                                             & 734                                              & 735                                    & 702                                              \\
    100\% Bound (1667)  & \textbf{802}                             & 793                                                                    & 629                                                 & 668                                    & 655                                             & 676                                              & 643                                    & 618                                              \\
    \midrule
    \textbf{Sum (6668)}      & \textbf{3929}                            & 3922                                                                   & 3372                                                & 3524                                   & 3485                                            & 3312                                             & 3454                                   & 3338                                             \\
    \bottomrule
\end{tabular}

        }
    \end{center}
    \caption[Coverage for oversubscription planning.]{Coverage (number of optimally solved instances) for explicit and symbolic search algorithms on oversubscription planning domains \autocite{speck-katz-aaai2021}.}
    \label{tab:coverage_oversubscription}
\end{table}

\paragraph{Empirical Evaluation.}
\Cref{tab:coverage_oversubscription} shows the coverage on four different osp benchmark sets \autocite{speck-katz-aaai2021}.
Each benchmark set consists of 57 domains and originates from the optimal track of the IPC 1998-2014, where the goal facts are no longer hard goal facts, but have assigned utilities \autocite{katz-et-al-zenodo2019,katz-et-al-icaps2019}.
The sets differ in the cost bounds, which for each planning instance are set at 25\%, 50\%, 75\%, or 100\% of the cost of the optimal or best known solution.
The two symbolic search approaches (BDD search) perform a forward search using BDDs, but differ in how the utility of a set of states is evaluated, either with multiple BDDs (uBDD) or one ADD (uADD).
As a result, symbolic search with a decomposed utility function in multiple BDDs (BDD search + uBDD) works best overall.
The main advantage of representing utility values with BDDs compared to ADDs is that the underlying decision diagram library \name{cudd} \autocite{somenzi-cudd2015} uses techniques such as complement edges to store BDDs more compactly \autocite{brace-et-al-acm1990}.

\begin{figure}
    \begin{center}
        \includegraphics[width=1\textwidth]{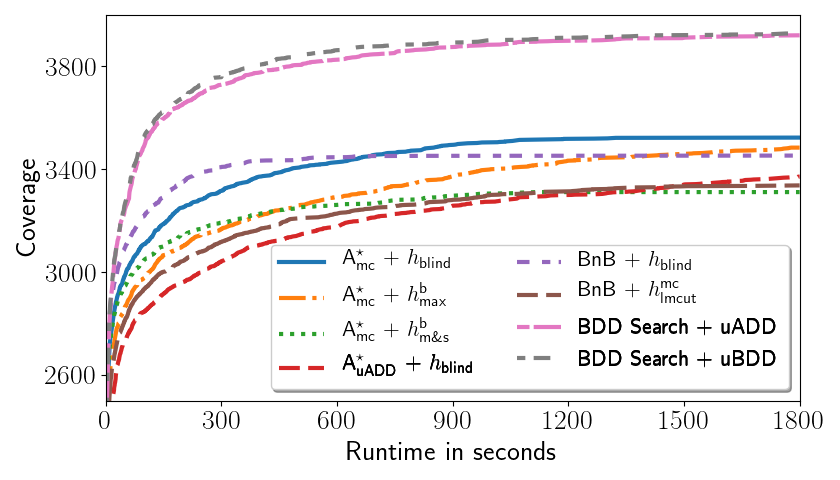}
    \end{center}
    \caption[Coverage over time for oversubscription planning.]{Coverage (number of optimally solved instances) over time for explicit and symbolic search algorithms on all oversubscription planning domains (25\%, 50\%, 75\%, or 100\% Bound) \autocite{speck-katz-aaai2021}.}
    \label{fig:coverage_time_oversubscription}
\end{figure}

Overall, the described symbolic search approach \autocite{speck-katz-aaai2021} performs better than explicit search approaches.
Comparing symbolic search with its explicit counterpart, $\astar{uADD}$ search with the blind heuristic \heu{blind}{}, we see a difference in performance due to the concise representation of state sets.
Considering other publicly available planning algorithms for osp, we find that branch-and-bound search BnB \autocite{katz-et-al-icaps2019} and $\astar{mc}$ search with multiple cost functions (mc) \autocite{katz-keyder-icaps2019wshsdip} perform worse than symbolic search in terms of overall coverage.
The performance difference between symbolic search and explicit heuristic search increases as the bound increases and thus the plan length increases.
The natural explanation for this is that heuristics do not pay off, since in osp it seems rather difficult to define a heuristic that is both informative and fast to compute. %, which is why blind heuristics work best.
This observation is underlined by \Cref{fig:coverage_time_oversubscription}, which depicts the number of instances solved over time, where symbolic search dominates all other approaches after a short time.
Finally, as usual, there are domains where explicit heuristic search performs better than symbolic search.

%\section{Summary}

\chapter{Top-k Planning}\label{ch:topk}
\chapterquote{If plan A fails, remember there are 25 more letters.}{Chris Guillebeau}
%\chapterquote{If plan A doesn't work, the alphabet has 25 more letters -- 204 if you're in Japan.}{Claire Cook, Seven Year Switch}

\renewcommand{\kiviatTopk}{2}
\begin{figure}[t]
    \begin{center}
        \begin{tikzpicture}
    \tkzKiviatDiagram[
    radial style/.style ={-{Latex[length=3mm, width=2mm]}},
    scale=0.85,
    label space=1.5,
    radial = 1,
    gap = 1.5,
    step = 1,
    lattice = 2]{
    \hspace{2.5cm}\mbox{\parbox{4cm}{{\begin{center}\textbf{Goal}\\\textbf{Description} \end{center}}}},
    {\textbf{Number of Plans}},
    \hspace{-1.5cm}\mbox{\parbox{3cm}{{\begin{center}\textbf{State}\\\textbf{Description}\end{center}}}},
    \textbf{Cost Function}}

    \ifbool{kiviatRed}{
        \tkzKiviatLine[very thick,color=red!50,
            fill=red!30,
            opacity=.35,
            mark=ball,
            mark size=2.5pt,
            ball color=red](\kiviatGoal{},\kiviatTopk{},\kiviatPredicates{},\kiviatCost{})
    }

    \ifbool{kiviatBlue}{
        \tkzKiviatLine[very thick,color=blue!50,
            fill=blue!30,
            opacity=.35,
            mark=ball,
            mark size=2.5pt,
            ball color=blue](1,1,1,1)
    }

    \ifbool{kiviatFull}{
    \tkzKiviatLine[very thick,color=darkgreen!50,
        fill=green!30,
        opacity=.35,
        mark=ball,
        mark size=2.5pt,
        ball color=darkgreen](1,2,2,2)
    \tkzKiviatLine[very thick,color=blue!50,
        fill=blue!30,
        opacity=.35,
        mark=ball,
        mark size=2.5pt,
        ball color=blue](1,1,1,1)
    \tkzKiviatLine[very thick,color=red!50,
        %fill=red!30,
        %opacity=.35,
        mark=ball,
        mark size=2.5pt,
        ball color=red](\kiviatGoal{},\kiviatTopk{},\kiviatPredicates{},\kiviatCost{})
    \LegendBox[shift={(-0cm,-2.25cm)}]{current bounding box.south west}%
    {
    red!100/{Forward symbolic search (contribution of this thesis)},
    asparagus!100/{Bidirectional symbolic search (contribution of this thesis)},
    blue!100/{Bidirectional symbolic search (previous state of the art)}}
    }
    {}

    % goal specification
    \draw[] node[align=center,fill=none] at (1.5,-0.5) {\small Hard};
    \draw[] node[align=center,fill=none] at (3.25,-0.75) {\small Oversub-\\ \small scription};
    %\draw[] node[] at (1.75,-0.6) {\rotatebox{-45}{hard}};
    %\draw[] node[align=center] at (3.85,-1.25) {\rotatebox{-45}{oversubscription}};

    % number of plans
    \draw[] node[align=center,fill=none] at (0.7,1.5) {\small Top-$1$};
    \draw[] node[align=center,fill=none] at (0.7,3.0) {\small Top-$k$};

    % derived predicates
    \draw[] node[align=center,fill=none] at (-1.25,0.75) {\small State\\ \small Variables};
    \draw[] node[align=center,fill=none] at (-3.25,0.75) {\small $+$ \small Derived\\ \small Variables};

    % cost function
    \draw[] node[align=center,fill=none] at (-1,-1.5) {\small Constant};
    \draw[] node[align=center,fill=none] at (-1.6,-3.0) {\small State-Dependent\phantom{0}};
\end{tikzpicture}
    \end{center}
    \caption[Overview of extension for classical planning (top-$k$).]{Overview of extensions for classical planning, where the red color denotes the planning formalism supported by the proposed symbolic search approach.}\label{fig:top_k:kiviat}
\end{figure}
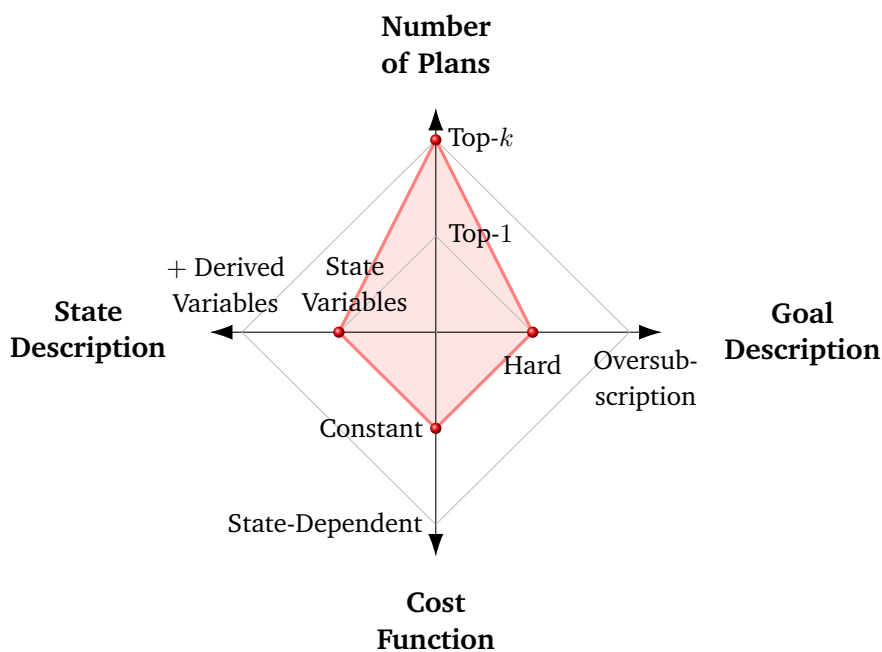
\renewcommand{\kiviatTopk}{1}

\section*{Core Publication of this Chapter}
\renewcommand{\citebf}[1]{\textbf{#1}}
\begin{itemize}
    \item \fullcite{speck-et-al-aaai2020}
\end{itemize}
\renewcommand{\citebf}[1]{#1}

In planning, it is common to assume that the model is fully specified and that the objective is to find a single plan.
While in some cases a single plan may be sufficient, in practice it is often better to have many good alternative plans \autocite{nguyen-et-al-aij2012}.
The possibility of obtaining multiple high-quality plans can be of great importance in various fields and applications, such as task and motion planning \autocite{lavalle-2006,ren-et-al-arxiv2021,lozano-et-al-iros2014}, diverse planning \autocite{katz-et-al-aaai2020}, scenario planning \autocite{sohrabi-et-al-aaai2018}, goal recognition \autocite{sohrabi-et-al-ijcai2016} or planning with user preferences \autocite{nguyen-et-al-aij2012,seimetz-et-al-ijcai2021}.

The problem of determining $k$ shortest paths for a given graph is a well-studied topic, going back to 1957 \autocite{bock-et-al-1957}.
However, the problem of determining $k$ cheapest plans, known as top-$k$ planning, has only been studied more recently \autocite{riabov-et-al-icaps2014wsspark}.
In addition to generating a set of possible solutions for all the above mentioned applications, the systematic enumeration of plans in an anytime behavior enables the realization of a \say{generate-and-test framework} by searching for plans that must satisfy certain complex requirements.
Thus, top-$k$ planning can also be used to search in a simplified version of a given problem, e.g., in an overapproximated, abstracted, or decomposed model representation \autocite{hoeller-icaps2021}.

In this chapter, we define and motivate top-$k$ planning (\Cref{fig:top_k:kiviat}) before presenting the work of \textcite{speck-et-al-aaai2020} on this topic.
On the theoretical side, we discuss the question of whether top-$k$ planning is computationally more difficult than ordinary planning.
On the practical side, we describe a sound and complete symbolic search approach to top-$k$ planning.
The empirical evaluation shows that symbolic search performs better than other state-of-the-art search methods for both a small and a large number of desired plans $k$.

\section{Formalism}

The objective of top-$k$ planning is to determine a set of $k$ different plans with lowest cost for a given classical planning task (\Cref{def:planning-task}).
We formally define top-$k$ planning as follows \autocite{riabov-et-al-icaps2014wsspark,katz-et-al-icaps2018,speck-et-al-aaai2020}.

\begin{definition}[Top-k Planning]\label{def:top-k-planning}
    Given a planning task $\task$ and a natural number $k \in \mathbb{N} \cup \{ \infty \}$, top-$k$ planning is the problem of determining a set of plans $P \subseteq P_\task$ such that:
    \begin{enumerate}
        \item there exists no plan $\plan' \in P_\task$ with $\plan' \not\in P$ that is cheaper than some plan $\plan \in P$, and
        \item $|P| = k$ if $|P_\task| \geq k$, and $|P| = |P_\task{}|$, otherwise.
    \end{enumerate}
\end{definition}

We emphasize that top-$k$ planning is a generalization of classical planning, where $k=1$ corresponds to cost-optimal classical planning.

There can be infinitely many plans, since according to \Cref{def:top-k-planning} plans with cycles, i.e., plans which visit the same state multiple times, are allowed.
Moreover, there can be infinitely many plans with optimal costs if cycles with $0$ costs exist.
In general, top-$k$ planners not only provide a way to generate the $k$ best plans, but also to enumerate all possible plans if no $0$-actions or, more precisely, no $0$-cost cycles exist.
Depending on the application, plans with cycles can be important to provide all possibilities, e.g., when user preferences are not fully known or the model specification is not complete.
In addition, the sequential generation of all plans allows the realization of complete search strategies that search in a simplified version of a problem at hand \autocite{hoeller-icaps2021}.
The symbolic approach presented in this chapter can be modified to generate only simple plans, i.e., plans without cycles \autocite{vontschammer-et-al-icaps2022}.
However, we will focus on \say{ordinary} top-$k$ planning, which includes all possible plans.
\Cref{ex:top_k} illustrates the idea of top-$k$ planning when user preferences are not fully known and the model is incomplete.

\begin{example}\label{ex:top_k}
    Consider \Cref{ex:rover}, in which we identified an optimal plan where the rover first navigates to (7,1) before the drone flies and takes an image at (6,1) and then at (10,1).
    Finally, the rover equipped with the drone navigates to (0,5).
    A closer look reveals that the order in which the drone takes the images at the two desired locations is irrelevant to the overall cost of the plan.
    With top-$k$ planning we can determine both plans, one where the drone flies first to (6,1) and then to (10,1) and vice versa.
    Based on these alternatives, it can be decided at which of the two locations the images should be taken first, depending on what the user prefers.
    In addition, there are various plans with different navigation routes of the rover, which allow to choose a different path during the execution of the plan without replanning and thus to react to unforeseen events that were not considered in the model.
\end{example}

\section{Complexity and Compilability}
Clearly, top-$k$ planning is as hard as classical planning, since it is a straightforward generalization.
However, the question arises to what extend it is harder to determine multiple plans instead of a single one.
Unlike the previously discussed extension to classical planning, top-$k$ planning is not a direct model extension.
The concept of compilability cannot (readily) be applied here, since the output changes from a single plan to a set of plans, which raises a conceptually different decision problem.
Thus, \textcite{speck-et-al-aaai2020} examined the computational complexity of top-$k$ planning and defined the \emph{bounded top-$k$ plans existence} problem (\Cref{def:bounded-top-k-plan-existence}) to answer the question of whether top-$k$ planning is as difficult as classical planning.

\begin{definition}[Bounded Top-k Plans Existence]\label{def:bounded-top-k-plan-existence}
    \emph{Bounded top-$k$ plans existence} is the decision problem: Given a planning task $\Pi$ and two natural numbers $\ell$ and $k$, are there at least $k$ different plans of length at most $\ell$?
\end{definition}

First of all, \textcite{speck-et-al-aaai2020} showed that the bounded top-$k$ plans existence problem in general is \PSPACE-complete (\Cref{thm:bounded-top-k-plan-existence}), which is surprising because the ordinary bounded plan existence problem, i.e., answering the question whether only one such plan exists, is also \PSPACE-complete (\Cref{thm:bounded-plan-existence}).

\begin{theorem}[\cite{speck-et-al-aaai2020}]\label{thm:bounded-top-k-plan-existence}
    Bounded top-$k$ plans existence is \PSPACE-complete. \qed
\end{theorem}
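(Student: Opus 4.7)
The plan is to handle the two directions of \Cref{thm:bounded-top-k-plan-existence} separately. \PSPACE{}-hardness is immediate: bounded plan existence, which is \PSPACE{}-complete by \Cref{thm:bounded-plan-existence}, is the special case $k=1$, and the map $(\task, \ell) \mapsto (\task, \ell, 1)$ is a trivial log-space many-one reduction. The remainder of the argument is devoted to \PSPACE{}-membership.

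For membership, I would count how many distinct applicable operator sequences of length at most $\ell$ lead from $\init$ into $\goalstates{}$ and decide whether this count is at least $k$. Because we only care whether the count is $\geq k$, every number encountered during the computation can be capped at $k+1$, so each stored value fits in $O(\log k)$ bits, which is polynomial in the input even though the true number of plans can be doubly exponential.

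The count itself would be organized by iterated squaring. For states $s,s'$ and length $m$, let $N(s,s',m)$ be the capped number of applicable operator sequences of length exactly $m$ from $s$ to $s'$. Set $N(s,s',0) = [s=s']$ and $N(s,s',1) = |\{o \in \operators : \pre_o \subseteq s \text{ and } s[o]=s'\}|$, and for $m \geq 2$,
\[
  N(s,s',m) \;=\; \min\!\Bigl(k+1,\; \sum_{s'' \in \states} N(s,s'',\lceil m/2 \rceil) \cdot N(s'',s',\lfloor m/2 \rfloor)\Bigr).
\]
Evaluating $N$ recursively yields a tree of depth $O(\log \ell)$, polynomial in the input. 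At each level only the current $s,s'$, an iterator over $s''$, and a capped partial sum need to sit on the stack, each of polynomial size; the two recursive calls are executed sequentially, so the stack frames do not compound. The overall answer is $[\sum_{m=0}^{\ell} \sum_{s_\star \in \goalstates} N(\init,s_\star,m) \geq k]$, computed with capped accumulators, giving a deterministic polynomial-space decision procedure.

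The main obstacle is reconciling the binary encoding of $\ell$ and $k$ with the exponential size of $\states{}$. A naive \NPSPACE{} approach that guesses plans one at a time and enforces distinctness via lexicographic canonicalization fails, because it would have to remember the previous candidate plan, which uses $\Theta(\ell)$ bits — exponential when $\ell$ is in binary. The iterated-squaring scheme sidesteps this by never materializing a full plan: it collapses length-$\ell$ counts into $O(\log \ell)$ length-halving steps, and the cap at $k+1$ prevents intermediate products and sums from overflowing the polynomial-space bound. The potential edge case $k = \infty$, if permitted, reduces to asking whether some applicable cycle is reachable on a path to a goal state, which is itself a \PSPACE{} reachability question and does not affect the bound.
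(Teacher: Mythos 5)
Your proposal is correct, and its overall skeleton coincides with the argument in the cited source \autocite{speck-et-al-aaai2020}, which this thesis quotes without reproducing the proof: hardness is the trivial $k=1$ specialization of bounded plan existence (\Cref{thm:bounded-plan-existence}), and membership rests on \emph{counting} plans rather than enumerating or storing them --- exactly the pitfall you rightly point out for a naive guess-and-check \NPSPACE{} routine. Where you diverge is in how the counting is justified: the published proof is terser and argues at the level of Turing machines, essentially observing that plans of length at most $\ell$ correspond to accepting runs of a nondeterministic polynomial-space machine and that such counts can be computed in polynomial space and compared with $k$, whereas you give an explicit, elementary procedure --- a Savitch-style recursion $N(s,s',m)$ with every intermediate value capped at $k+1$, which by induction computes exactly $\min(\text{true count}, k+1)$, with recursion depth $O(\log \ell)$ and polynomially sized stack frames. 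Your route buys self-containedness and makes the space accounting completely transparent, at the price of a longer write-up; the cited route leverages known results on polynomial-space counting (\FPSPACE{}) to keep the proof short. Two minor points to tighten: the halving step is a bijection only because the intermediate state reached after the first $\lceil m/2 \rceil$ operators is uniquely determined by the prefix, which is worth stating explicitly since it is what rules out over- or undercounting; and because the tasks of this thesis carry a cost bound $\limit$, plans must also respect it, which your scheme absorbs routinely by running the same capped recursion over pairs of state and accumulated cost (still polynomially many bits per frame), or by noting that the formalism of the cited work has no such bound.
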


Considering polynomially bounded plans, it turns out that the bounded \mbox{top-$k$} plan existence decision problem is \PP-hard \autocite{gill-siam-1977,speck-et-al-aaai2020}, while the ordinary bounded plan existence problem is \NP-complete \autocite{bylander-aij1994}.

\begin{theorem}[\cite{speck-et-al-aaai2020}]\label{thm:bounded-top-k-plan-existence-short}
    Bounded top-$k$ plans existence is \PP-hard if the plan length $\ell$ is polynomially bounded by the task size, i.e., $\ell \leq p(\size{\task})$ for some polynomial $p$. \qed
\end{theorem}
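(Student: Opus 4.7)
The plan is to prove \PP{}-hardness by a polynomial-time many-one reduction from the canonical \PP{}-complete threshold model counting problem: given a CNF formula $\varphi$ over $n$ variables with $m$ clauses and a natural number $k$ (encoded in binary), decide whether $\varphi$ has at least $k$ satisfying assignments. This problem is \PP{}-hard since it contains MAJSAT as the special case $k = 2^{n-1}$. The goal is to construct in polynomial time a classical planning task $\Pi$ and bounds $\ell, k$ so that $\Pi$ admits at least $k$ distinct plans of length at most $\ell$ if and only if $\varphi$ has at least $k$ satisfying assignments, while keeping $\ell$ polynomial in $\size{\Pi}$.

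First, I would build $\Pi$ so that every plan executes two phases in a fixed order, enforced by a phase counter $p$ with domain $\{0,\dots,n+m\}$. In the \emph{assignment phase} (advancing $p$ from $0$ to $n$), for each $i \in \{1,\dots,n\}$ I add two operators $\mathit{set\text{-}true}_i$ and $\mathit{set\text{-}false}_i$, both with precondition $p=i-1$ and effect $p := i$, respectively also setting $v_i := 1$ or $v_i := 0$. Since only one operator pair is applicable at each phase value, the resulting $n$-step prefixes are in bijection with truth assignments. In the \emph{verification phase} (advancing $p$ from $n$ to $n+m$), for each clause $C_j = \ell_{j,1} \lor \dots \lor \ell_{j,k_j}$ I add $k_j$ operators, where operator $r$ has precondition
\[
  p = n+j-1 \;\land\; \ell_{j,r} \;\land\; \bigwedge_{r' < r} \neg \ell_{j,r'}
\]
and effect $p := n+j$. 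This conjunctive precondition expresses ``$\ell_{j,r}$ is the first satisfied literal of $C_j$'' and is therefore expressible in the purely conjunctive STRIPS/\textsc{sas}$^+$ precondition language. The goal condition is $p = n+m$ and the plan length bound is $\ell = n+m$.

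I then need to verify the bijection. Any plan must advance $p$ from $0$ to $n+m$; since every operator strictly increments $p$, plan length is exactly $n+m$ and no cycles are possible. In the assignment phase, the plan deterministically encodes an assignment to $x_1,\dots,x_n$. In the verification phase, the canonicalization guarantees that for each clause either exactly one operator is applicable (when the assignment satisfies $C_j$) or none is (when it does not). Hence the plan extends to a full $(n+m)$-step solution iff the chosen assignment satisfies $\varphi$, and in that case the extension is unique. This yields a bijection between satisfying assignments of $\varphi$ and plans of length at most $\ell$ in $\Pi$, so the planning instance $(\Pi,\ell,k)$ is a yes-instance of bounded top-$k$ plan existence iff $(\varphi,k)$ is a yes-instance of threshold \#SAT. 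Since $\size{\Pi}$ and $\ell = n+m$ are both polynomial in $|\varphi|$, the reduction is polynomial-time and meets the stated plan-length restriction, establishing \PP{}-hardness.

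The main obstacle is the counting-preserving encoding of clause satisfaction: a naive reduction offering one verification operator per literal of each clause would map a single satisfying assignment to $\prod_j s_j$ plans (where $s_j$ is the number of satisfied literals in $C_j$), which destroys the correspondence needed for a \PP{}-hardness argument and only yields \NP{}-hardness. The first-satisfied-literal canonicalization is what repairs this by collapsing each clause's multiplicity to one, while still being expressible as a STRIPS precondition because negations of literals are themselves literals. A secondary care point is enforcing a canonical ordering of the variable-setting operators via $p$; without it, $n!$ permutations per assignment would similarly inflate the plan count.
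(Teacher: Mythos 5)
Your reduction is correct and matches the approach behind the cited result: a polynomial-time, counting-preserving (parsimonious) reduction from a PP-complete threshold-SAT/MAJSAT problem, where a phase counter forces each plan to encode exactly one truth assignment and a clause-by-clause check (with your first-satisfied-literal canonicalization ensuring uniqueness) makes plans of length at most $\ell = n+m$ correspond one-to-one to satisfying assignments. The only nitpicks are cosmetic: specify the initial values of the $v_i$ (any fixed assignment works) and note that strict-majority MAJSAT corresponds to threshold $2^{n-1}+1$, neither of which affects the argument.
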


\Cref{thm:bounded-top-k-plan-existence-short} indicates that top-$k$ planning is much harder in practice than ordinary classical planning.
The assumption that both problems have the same complexity when the plan length is polynomially bounded would imply a collapse of the polynomial hierarchy at $\mathsf{P}^{\text{\NP}}$ \autocite{toda-siam1991}, which is considered very unlikely.

\section{Symbolic Search}
\textcite{speck-et-al-aaai2020} propose a symbolic search approach to top-$k$ planning that is sound and complete.
Recall that symbolic search for classical planning is divided into two phases: 1) a reachability phase, in which states with increasing costs are generated until a goal state is found, and 2) a plan reconstruction phase, in which the search is regressed to reconstruct a corresponding goal path and plan.
Since symbolic search expands entire sets of states at once, it is not unusual to find multiple goal states at once.
If multiple goal states are found, then of course multiple plans are found as well.
But even if only one goal state is found, it is possible that multiple plans are found that lead to that particular goal state.
Usually, symbolic search reconstructs and reports only one such plan, ignoring all others.
Based on this observation, \textcite{speck-et-al-aaai2020} show that three modifications to ordinary symbolic search lead to a sound and complete symbolic search algorithm for top-$k$ planning.
First, they propose that after a goal state is expanded, all plans leading to that goal state are reconstructed.
Second, during the search, states are not closed, i.e., all newly generated states with their corresponding reachability costs are insert in the open list without filtering them by already expanded states, otherwise suboptimal plans may be lost.
Third and finally, the termination criterion is adjusted so that the algorithm terminates if either $k$ plans are found or the open list contains only states that have already been expanded at least once and are not part of a goal path induced by a plan that has already been found.

\Cref{ex:gripper} illustrates the functioning of this modified symbolic search for top-$k$ planning.
For simplicity, this example describes symbolic forward search for a task with unit cost.

\begin{figure}
    \centering
    \subfloat[Visualization of the dynamics of the planning task.\label{fig:gripper_example_a}]{
        \centering
        \makebox[1\textwidth][c]{
            \newcommand{\gripperFloor}[3]
{
    \draw [ultra thick] (#1,#2) -- (#1 + 0.6, #2);
    \draw [ultra thick] (#1 + 0.8, #2) -- (#1 + 1.4,#2);
    \node [draw,minimum size=0.65cm] at (#1 + 0.7, #2 - 0.75) {$s_#3$};

}

\newcommand{\gripperHand}[2]
{
    \draw [ultra thick] (#1 + 0.3,#2) -- (#1 + 0.3, #2 + 0.4);
    \draw [ultra thick] (#1, #2 - 0.4) --(#1,#2) -- (#1 + 0.6,#2) -- (#1 + 0.6, #2 - 0.4);
}

\newcommand{\gripperBall}[2]
{
    \draw [ultra thick] (#1,#2) circle [radius=0.225];
}

\begin{tikzpicture}[]
    \gripperFloor{0}{0}{0 = \init}
    \gripperBall{0.3}{0.325}
    \gripperHand{0.0}{1.2}

    \gripperFloor{3}{0}{1}
    \gripperBall{3.3}{0.85}
    \gripperHand{3}{1.2}

    \gripperFloor{6}{0}{2}
    \gripperBall{7.1}{0.85}
    \gripperHand{6.8}{1.2}

    \gripperFloor{9}{0}{3 \models \goal}
    \gripperBall{10.1}{0.325}
    \gripperHand{9.8}{1.2}

    \draw[>=latex, <->] (1.1,1.85) to[ultra thick, bend left=45] node [above, align=center] {pick-up (p) \\ drop (d)} (3.3,1.85);
    \draw[>=latex, ->] (4.1,1.85) to[ultra thick, bend left=45] node [above, align=center] {move (m)} (6.3,1.85);
    \draw[>=latex, <->] (7.1,1.85) to[ultra thick, bend left=45] node [above, align=center] {drop (d) \\ pick-up (p)} (9.3,1.85);

    \node at (0,-1.1) {};

    %\draw [ultra thick, dashed] (0,-1) -- (7, -1);
\end{tikzpicture}
        }
    }
    \\ \medskip
    \subfloat[Reachable states generated and expanded by the proposed symbolic search approach.\label{fig:gripper_example_b}]{
        \centering
        \makebox[1\textwidth][c]{
            %\resizebox{1\textwidth}{!}{
            \begin{tikzpicture}[]
    \newcommand*{\xd}{1.9}
    %%% Fwd reachable
    \node[draw,ellipse] at (0*\xd, -2) {$s_0$};
    \node[draw,ellipse] at (1*\xd, -2) {$s_1$};
    \node[draw,ellipse, align=center] at (2*\xd, -2) {$s_0$ \\ $s_2$};
    \node[draw,ellipse, align=center] at (3*\xd, -2) {$s_1$ \\ $\mathbf{s_3}$};
    \node[draw,ellipse, align=center] at (4*\xd, -2) {$s_0$ \\ $s_2$};
    \node[draw,ellipse, align=center] at (5*\xd, -2) {$s_1$ \\ $\mathbf{s_3}$};
    \node[align=center] at (5.75*\xd, -2) {\Large \textbf{\dots}};

    \node[] at (0*\xd, -3.25) {$S_0$};
    \node[] at (1*\xd, -3.25) {$S_1$};
    \node[] at (2*\xd, -3.25) {$S_2$};
    \node[] at (3*\xd, -3.25) {$S_3$};
    \node[] at (4*\xd, -3.25) {$S_4$};
    \node[] at (5*\xd, -3.25) {$S_5$};

    \path [draw, ->, thick] (0.0, -4) to (-0.5+6*\xd, -4);
    \node at (-0.5, -4.5) {$g$};
    \foreach \x in {0,...,5}
    \node  at (\xd*\x,-4.5) {\x};

    % %%% Reconstruction
    \draw[>=latex, ->,ultra thick, red] (0.25+0*\xd,-2) to[ultra thick] node [above] {p} (-0.25+1*\xd,-2);
    \draw[>=latex, ->,ultra thick, dashed] (0.25+1*\xd,-2) to[ultra thick] node [above] {d} (-0.25+2*\xd,-1.8);
    \draw[>=latex, ->,ultra thick, red] (0.25+1*\xd,-2) to[ultra thick] node [below] {m} (-0.25+2*\xd,-2.2);
    \draw[>=latex, ->,ultra thick, red] (0.25+2*\xd,-2.2) to[ultra thick] node [below] {d} (-0.25+3*\xd,-2.2);
    \draw[>=latex, ->,ultra thick, dashed] (0.25+2*\xd,-1.8) to[ultra thick] node [above] {p} (-0.25+3*\xd,-1.8);
    \draw[>=latex, ->,ultra thick, dashed] (0.25+3*\xd,-1.8) to[ultra thick] node [above] {m} (-0.25+4*\xd,-2.2);
    \draw[>=latex, ->,ultra thick, dashed] (0.25+3*\xd,-2.2) to[ultra thick] node [below] {p} (-0.25+4*\xd,-2.2);
    \draw[>=latex, ->,ultra thick, dashed] (0.25+4*\xd,-2.2) to[ultra thick] node [below] {d} (-0.25+5*\xd,-2.2);
\end{tikzpicture}
            %}
        }
    }
    \caption[Visualization of symbolic search for top-$k$ planning.]{Visualization of the \name{gripper} planning task and the functioning of the proposed symbolic search approach for top-$k$ planning \autocite{speck-et-al-aaai2020}.}
    \label{fig:gripper_example}
\end{figure}

\begin{example}[\cite{speck-et-al-aaai2020}]\label{ex:gripper}
    Consider a unit cost planning task with two rooms and a robot with a gripper, as shown in \Cref{fig:gripper_example_a}.
    The robot can move from room A to B if it carries the ball, but it cannot return.
    There is a possibility to pick-up and drop the ball in each room.
    The goal is to get the ball from room A to room B.

    Assuming that the desired number of plans is $k = 3$, \Cref{fig:gripper_example_b} illustrates the functioning of the proposed symbolic search approach.
    First, all states $S_0$ reachable with cost $0$, i.e., only the initial state, are expanded resulting in the set of states $S_1 = \{s_1 \}$ reachable with cost $1$.
    The subsequent expansion of $S_1$ yields the state set $S_2 = \{ s_0, s_2 \}$, whose cost is $2$.
    Note that $s_0$ is a part of $S_2$, although it has been previously expanded and therefore would no longer be considered in ordinary symbolic search.
    The next expansion yields $S_3 = \{ s_1, s_3 \}$ with cost $3$, leading to the expansion of $S_3$, which contains the goal state $s_3$.
    Therefore, the plan reconstruction procedure is executed that yields exactly one plan $\plan_1 = \langle \textit{pick-up}, \textit{move}, \textit{drop} \rangle$ visualized in red in \Cref{fig:gripper_example_b}.
    Since a total of $3$ plans is desired, the search continues from here until the next goal states are expanded, which occurs with the expansion of $S_5$.
    Plan reconstruction yields two plans with a cost of $5$ each, namely $\plan_2 = \langle \textit{pick-up}, \textit{drop}, \textit{pick-up}, \textit{move}, \textit{drop} \rangle$ and $\plan_3 = \langle \textit{pick-up}, \textit{move}, \textit{drop}, \textit{pick-up},  \textit{drop} \rangle$.
    As the desired number of plans is achieved, the algorithm terminates with $\{\plan_1, \plan_2, \plan_3\}$.
    However, if more plans were desired, the search would continue.
\end{example}

Looking more closely at the reconstruction phase, we can see that we perform an exhaustive greedy backward search using the provided perfect heuristic of the reachability phase.
Although this exhaustive search may seem expensive, it is goal-driven due to the perfect heuristic, and each time the initial state is reached, a new plan is created.
Finally, \textcite{speck-et-al-aaai2020} also define and describe the support of general operator costs, including zero costs, and defined symbolic backward search and symbolic bidirectional search, which can significantly improve planning performance.

\begin{figure}
    \begin{center}
        \includegraphics[width=\textwidth]{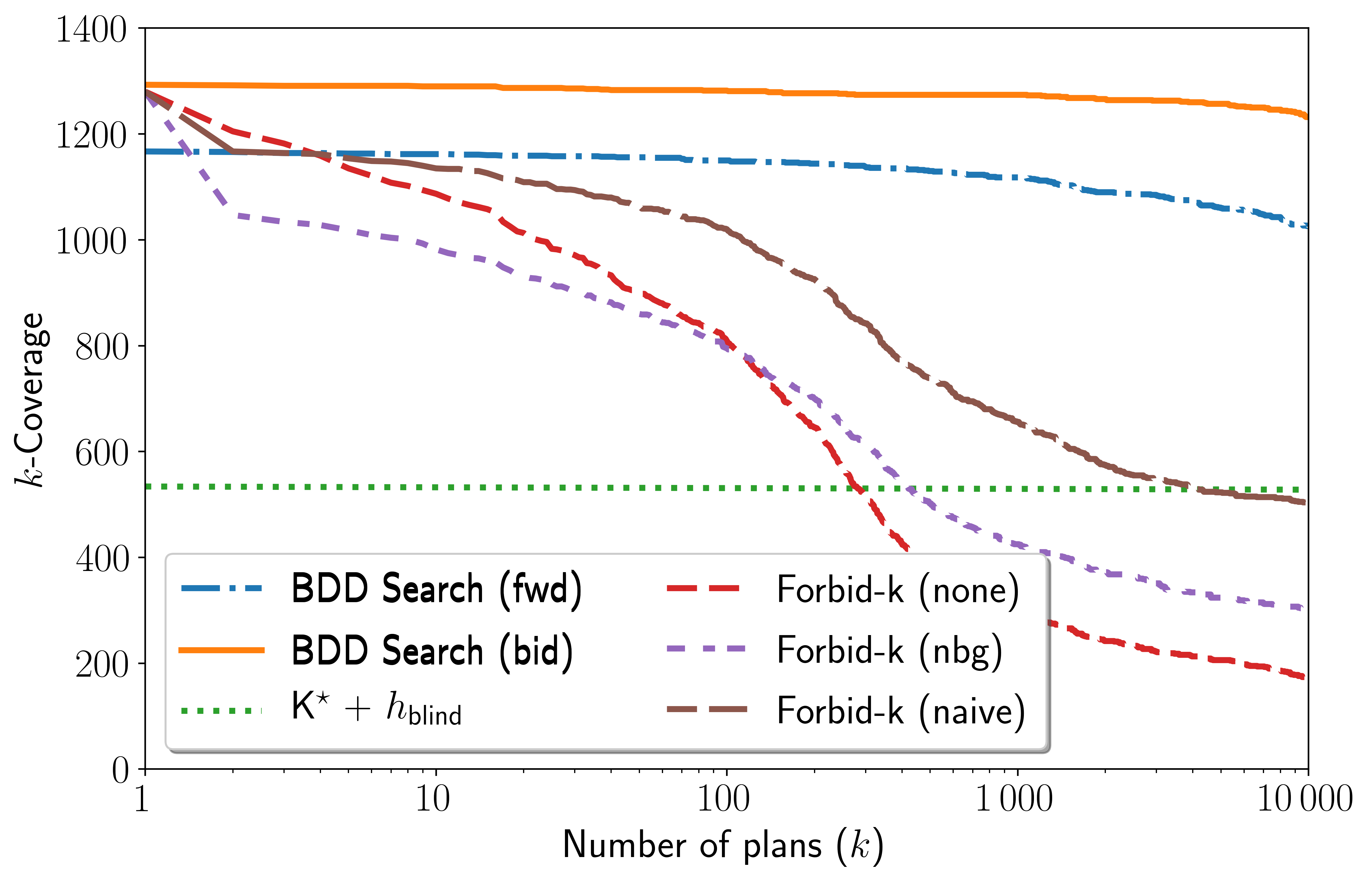}
    \end{center}
    \caption[The $k$-coverage for top-$k$ planning.]{The $k$-coverage (number of instances for which $k$ best plans were found) for explicit and symbolic search algorithms on classical planning domains \autocite{speck-et-al-aaai2020}.}
    \label{fig:coverage_time_topk}
\end{figure}

\paragraph{Empirical Evaluation.}
To compare different approaches to top-$k$ planning, we consider $k$-coverage, i.e., the number of instances for which a planner reports a set of $k$ best plans or reports only $k' < k$ plans but proves that only $k'$ plans exist (see \Cref{def:top-k-planning}).
\Cref{fig:coverage_time_topk} compares the $k$-coverage of the discussed symbolic search approach based on BDDs from \textcite{speck-et-al-aaai2020} with other state-of-the-art approaches from the literature on domains of the optimal track of the International Planning Competitions 1998-2018.
The $x$-axis indicates the number of desired plans $k$, while the $y$-axis represents the number of instances in which the corresponding $k$-coverage has been reached.
\kstar{} \autocite{aljazzar-leue-aij2011,katz-et-al-icaps2018} generates and processes parts of the implicit search tree as needed and can be enhanced by a heuristic.
However, the evaluation only includes \kstar{} with the blind heuristic because \kstar{} is very memory intensive, so the performance differences reported by \textcite{speck-et-al-aaai2020} between \kstar{} with and without sophisticated heuristics are negligible.
\forbidk{} \autocite{katz-et-al-icaps2018} is an iterative approach that searches for additional plans through a replanning loop that forbids already found plans and preserves all other plans.
The underlying search is an orbit space search with structural symmetries \autocite{alkhazraji-et-al-ipc2014,domshlak-et-al-tr2015} and the \name{LM}-cut heuristic \autocite{helmert-domshlak-icaps2009}.
The three versions of \forbidk{} considered differ in a plan reordering step, where different reordering strategies (none, naive, and neighbor) can be used to generate multiple plans from a single plan.

Overall, we can see that symbolic bidirectional search performs as well as \forbidk{} when only one plan ($k=1$) is requested.
But already for $k=2$ symbolic bidirectional search performs best and for $k=4$ symbolic forward search surpasses Forbid-k, while symbolic bidirectional search already shows a large performance advantage.
Most importantly, symbolic search scales much better to larger $k$ than all other approaches.
\kstar{} solves only six more instances for $k = 1$ than for $k = 10\,000$, due to the high memory consumption that is the bottleneck of the approach.
The replanning of \forbidk{} turns out to be too expensive when a larger number of plans is desired, resulting in a large performance drop.
Finally, \textcite{speck-et-al-aaai2020} expect the dominant performance of symbolic search to hold even for very large $k$ until the high number of plan reports is the limiting factor, i.e., the limiting factor is writing the plans to disk.

\chapter{Discussion}\label{ch:discussion}
\chapterquote{Be happy, but never satisfied.}{Bruce Lee}

\setboolean{kiviatFull}{true}
\renewcommand{\kiviatTopk}{2}
\renewcommand{\kiviatGoal}{2}
\renewcommand{\kiviatCost}{2}
\renewcommand{\kiviatPredicates}{2}
\begin{figure}[t]
    \begin{center}
        \begin{tikzpicture}
    \tkzKiviatDiagram[
    radial style/.style ={-{Latex[length=3mm, width=2mm]}},
    scale=0.85,
    label space=1.5,
    radial = 1,
    gap = 1.5,
    step = 1,
    lattice = 2]{
    \hspace{2.5cm}\mbox{\parbox{4cm}{{\begin{center}\textbf{Goal}\\\textbf{Description} \end{center}}}},
    {\textbf{Number of Plans}},
    \hspace{-1.5cm}\mbox{\parbox{3cm}{{\begin{center}\textbf{State}\\\textbf{Description}\end{center}}}},
    \textbf{Cost Function}}

    \ifbool{kiviatRed}{
        \tkzKiviatLine[very thick,color=red!50,
            fill=red!30,
            opacity=.35,
            mark=ball,
            mark size=2.5pt,
            ball color=red](\kiviatGoal{},\kiviatTopk{},\kiviatPredicates{},\kiviatCost{})
    }

    \ifbool{kiviatBlue}{
        \tkzKiviatLine[very thick,color=blue!50,
            fill=blue!30,
            opacity=.35,
            mark=ball,
            mark size=2.5pt,
            ball color=blue](1,1,1,1)
    }

    \ifbool{kiviatFull}{
    \tkzKiviatLine[very thick,color=darkgreen!50,
        fill=green!30,
        opacity=.35,
        mark=ball,
        mark size=2.5pt,
        ball color=darkgreen](1,2,2,2)
    \tkzKiviatLine[very thick,color=blue!50,
        fill=blue!30,
        opacity=.35,
        mark=ball,
        mark size=2.5pt,
        ball color=blue](1,1,1,1)
    \tkzKiviatLine[very thick,color=red!50,
        %fill=red!30,
        %opacity=.35,
        mark=ball,
        mark size=2.5pt,
        ball color=red](\kiviatGoal{},\kiviatTopk{},\kiviatPredicates{},\kiviatCost{})
    \LegendBox[shift={(-0cm,-2.25cm)}]{current bounding box.south west}%
    {
    red!100/{Forward symbolic search (contribution of this thesis)},
    asparagus!100/{Bidirectional symbolic search (contribution of this thesis)},
    blue!100/{Bidirectional symbolic search (previous state of the art)}}
    }
    {}

    % goal specification
    \draw[] node[align=center,fill=none] at (1.5,-0.5) {\small Hard};
    \draw[] node[align=center,fill=none] at (3.25,-0.75) {\small Oversub-\\ \small scription};
    %\draw[] node[] at (1.75,-0.6) {\rotatebox{-45}{hard}};
    %\draw[] node[align=center] at (3.85,-1.25) {\rotatebox{-45}{oversubscription}};

    % number of plans
    \draw[] node[align=center,fill=none] at (0.7,1.5) {\small Top-$1$};
    \draw[] node[align=center,fill=none] at (0.7,3.0) {\small Top-$k$};

    % derived predicates
    \draw[] node[align=center,fill=none] at (-1.25,0.75) {\small State\\ \small Variables};
    \draw[] node[align=center,fill=none] at (-3.25,0.75) {\small $+$ \small Derived\\ \small Variables};

    % cost function
    \draw[] node[align=center,fill=none] at (-1,-1.5) {\small Constant};
    \draw[] node[align=center,fill=none] at (-1.6,-3.0) {\small State-Dependent\phantom{0}};
\end{tikzpicture}
    \end{center}
    \caption[Overview of the contribution of this thesis.]{
        Overview of the contribution of this thesis in terms of extending symbolic search to support different expressive extensions of classical planning.
    }\label{fig:dicussion:kiviat}
\end{figure}
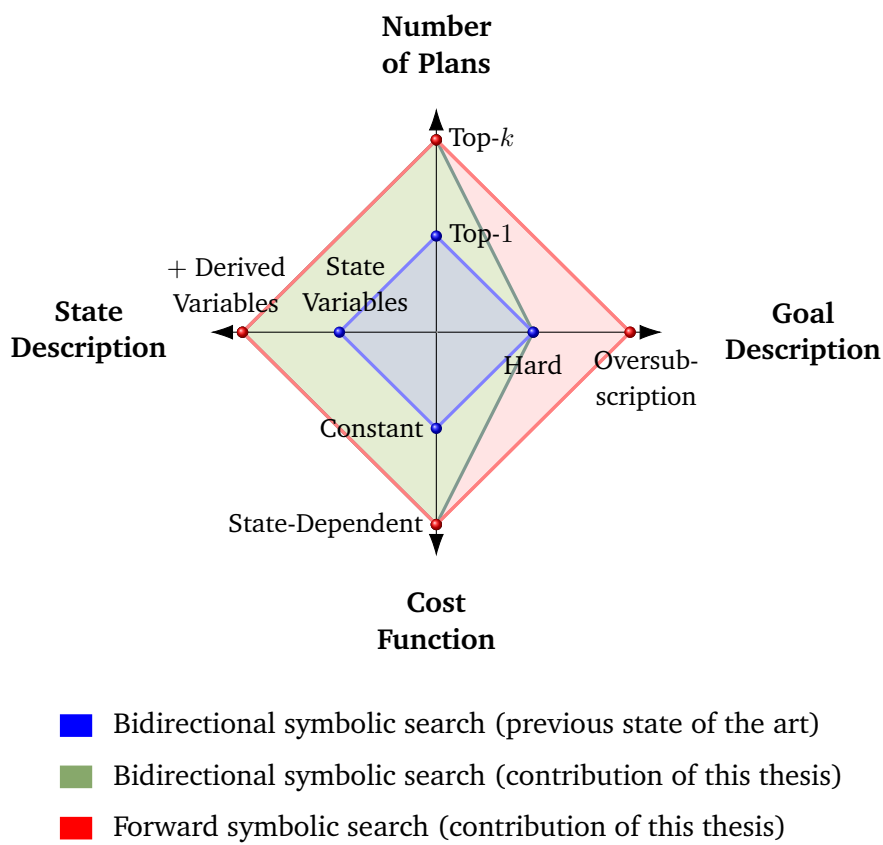

\setboolean{kiviatFull}{false}

As we discussed in \Cref{sec:symbolic_search_discussion}, symbolic blind search, i.e., without any heuristics, is the dominant search strategy for symbolic search, on par with explicit heuristic search in optimal planning.
For this reason, symbolic search provides a good basis for supporting several extensions of classical planning.
In this chapter, we discuss the combination of the previously considered extensions and how symbolic search with the introduced modifications can support this setting.
Finally, possible future work related to this thesis is discussed.

\section{Combination of Extensions}\label{sec:combination}
As we have seen in the previous chapters, it is possible to extend symbolic search to support various extensions of classical planning.
A natural question that arises is, of course, about combining these extensions, which makes intuitive sense since they all capture and extend different aspects of classical planning.
The key question is whether the different enhancements to symbolic search can be combined to support several of these planning extensions simultaneously.
The short answer to this question is: Yes, with some minor restrictions on the search directions.
This is remarkable because the support of only one of these features is quite rare, and the support of more than one feature is almost inexistent in the literature on cost-optimal planning.
The reason is the same why very few planning techniques and planners support a single feature, namely that almost all optimal planners are based on heuristic search and it is very difficult to design admissible, informative, and fast to evaluate heuristics that take into account the different extensions to classical planning.
We will now briefly discuss the combination of the discussed extensions for classical planning and how and why the proposed modifications to symbolic search can support all of these extensions simultaneously.

\paragraph{Model Extensions.}
Let us first focus on the three discussed model extensions that result in \emph{oversubscription planning with axioms and state-dependent action costs}.
This planning formalism is simply defined by combining all the features of the extension and allowing the state-dependent cost function and the utility function to take into account derived variables \derivedvars such that the cost function of an operator $o$ is given by $\costfun_o : \extendedstates \to \mathbb{N}_0$ and the utility function $\utility : \extendedstates \to \mathbb{N}_0$.
This makes it possible to concisely encode complex state-dependent action cost functions and state utility functions.
In the empirical evaluation of \Cref{ch:sdac}, we have already mentioned that derived variables can be used in a state-dependent action cost function in the Power Supply Restoration (PSR) domain \autocite{thiebaux-cordier-ecp2001,thiebaux-et-al-ijcai2013,hoffmann-et-al-jair2006}, to encode whether certain parts of the power system are energized.
A closer look, however, reveals an interesting connection between these model extensions.
While derived variables in the cost or utility function can lead to a simple and concise representation, it might be possible to simulate the evaluation of the values of the derived variables by the cost or utility function, which could make the consideration of derived variables in the cost or utility function obsolete.
Moreover, it might be possible to simplify a complex cost or utility function by introducing new derived variables and thus shifting the complexity into the computation of the derived variables.
This shows that these extensions are closely related and a more detailed analysis of the connection is an exciting question for future work (\Cref{sec:future_work}).

All proposed symbolic encodings of axioms and derived variables in the context of symbolic search (\Cref{ch:axioms}) allow derived variables to exist in the domain of operator cost functions and utility functions.
Thus, the proposed enhancements to symbolic search can be directly used for oversubscription planning with axioms and state-dependent action costs.
This is especially true for the dominant axiom encoding strategy, symbolic translation, in which each derived variable is replaced by its primary representation (\Cref{def:primary_representation}).
However, as \Cref{fig:dicussion:kiviat} illustrates, in oversubscription planning tasks, i.e., in the presence of utilities, only forward symbolic search can be performed, since it is not clear how to perform backward or bidirectional symbolic search efficiently.

\paragraph{Top-k Planning with Model Extensions.}
The search for the best $k$ plans is independent of the support of axioms and state-dependent action costs and can be easily combined.
However, if one combines oversubscription planning and top-$k$ planning, the question arises which are the best $k$ plans.
Although this question has never been answered in the literature, it is natural to rank plans first by utility and then by cost, i.e., a plan $\pi$ is better than a plan $\pi'$ iff 1) $\utility(\pi) > \utility(\pi')$ or 2) $\utility(\pi) = \utility(\pi')$ and $\cost(\pi) < \cost(\pi')$.
And indeed, this can be supported with minor modifications to the proposed symbolic search approach by maintaining not only the states with the best utility found so far, but all relevant ones and executing exhaustive plan extraction as done in symbolic search for top-$k$ planning (\Cref{fig:dicussion:kiviat}).
\medskip

Overall, the proposed extension of symbolic search yields an optimal, sound, and complete approach to \emph{top-$k$ oversubscription planning with axioms and state-dependent action costs}.
As \Cref{fig:dicussion:kiviat} illustrates, this not only enhances the applicability and state of the art of symbolic search for classical planning with expressive extensions, but also provides the first planner that can support several of these extensions at once.\footnote{Available online: \url{https://github.com/speckdavid/symk}}
Finally, the proposed planning algorithm that supports the interaction of all these features does not suffer from its generality, because if certain features are not present, the approach becomes standard symbolic search.

\section{Future Work}\label{sec:future_work}
%There are various directions for research in the future related to the content of this thesis. 
While we have highlighted some future challenges and research in the individual chapters, here we focus on future research directions related to symbolic (heuristic) search and planning with expressive extensions in general.

We have seen in this thesis different types of decision diagrams such as EVMDDs, ADDs and BDDs used in the context of symbolic search for classical planning.
A more detailed comparison in theory and practice could provide more insight into when to use which decision diagrams.
Other types of decision diagrams such as Functional Decision Diagrams \autocite{kebschull-et-al-dac1992} or Kronecker Functional Decision Diagrams \autocite{drechsler-becker-tcad1998} could also lead to an even more concise and efficient representation.

Throughout this thesis, we have assumed a fixed static variable order.
Since the order of the variables plays an important role in the efficiency of symbolic search, as the size of the decision diagrams strongly depends on the chosen order of the variables, it is important to find good orders.
While it is known that the computation of an optimal order for decision diagrams is \coNP{}-complete \autocite{bryant-ieeecomp1986}, it is still a challenging question whether and how to find good orders in practice \autocite{kissmann-hoffmann-icaps2013,kissmann-hoffmann-jair2014}.
Moreover, dynamic reordering techniques \autocite{rudell-iccad1993} are used in other fields, such as model checking \autocite{yang-et-al-fmcad1998} or logic synthesis \autocite{scholl-et-al-tcad1999}, which have been explored up to now only sparsely in planning \autocite{kissmann-hoffmann-jair2014,kissmann-et-al-ipc2014}.

The efficient integration of heuristics into symbolic search is still an open research task.
We believe that the main goal should be to keep the decision diagrams involved as small as possible.
\textcite{fiser-et-al-icaps2021wshsdip} introduced operator potentials based on potential heuristics \autocite{pommerening-et-al-aaai2015}, which seem to provide concise representation in symbolic forward search, while efficiently pruning states based on their heuristic values.
More work certainly needs to be done here to better understand the search behavior of symbolic heuristic search.
In particular, it is interesting to see if it is possible to find heuristics that give any size guarantees for the decision diagrams involved, and if and how the results can be generalized to symbolic bidirectional heuristic search.

Considering the discussed extensions of classical planning, an in-depth theoretical and empirical evaluation of the combination of several such extensions and the symbolic search approach presented for this purpose is certainly a promising research direction.
In particular, it is interesting to examine the relationship between different extensions.
It might be possible that one extension can be compiled away (with small overhead) in the presence of another extension.
Moreover, several of these extensions are not yet covered by more expressive planning formalisms, such as
fully observable nondeterministic planning \autocite{cimatti-et-al-aij2003}, partially observable nondeterministic planning \autocite{bertoli-et-al-aij2006,speck-et-al-ki2015}, or hierarchical task network planning \autocite{erol-et-al-amai1996,geier-bercher-ijcai2011}.
An analysis of how and to what extent all these features can enhance these planning formalisms is an important future line of research, and whether the proposed ideas can also be used for symbolic search approaches for these settings \autocite{kissmann-edelkamp-ki2009,behnke-speck-aaai2021,bertoli-et-al-aij2006}.
%\section*{Core publications of this chapter}
%This chapter describes and discusses the combination of all extensions and %clearly relies on all core publications, but goes beyond their content.

% \todo[caption={}]{
%     \textbf{Complexity and Compilability:}
%     \begin{itemize}
%         \item OSP + Derived Predicates + SDAC are all essientiel features and can lead to a multiple exponential blowup by compiling them away.
%         \item OSP + Derived Predicates + SDAC is PSPACE-complete (given functions in PSPACE).
%         \item Hardness: Reduce one version (e.g SDAC) to it.
%         \item Membership: Guess actions until the cost bound is exhausted or the current state has a high enough utility. For evaluation of derived variables we require only PSPACE and for evaluation of cost function again PSPACE. No problem.
%     \end{itemize}

% }

% \section{Formalism}
% \section{Complexity and Compilability}
% \section{Extending Symbolic Search}
% \section{Summary}

\chapter{Conclusion}\label{ch:conclusion}
\chapterquote{Perhaps the best test of [an agent's] intelligence is [its] capacity for making a summary.}{Lytton Strachey}

We theoretically and empirically evaluated symbolic search for cost-optimal planning with expressive extensions. 
First, we analyzed the search behavior of symbolic heuristic search in form of \bddastar{} and revealed an unknown or at least often overlooked fundamental problem: We showed that using a heuristic does not always improve the search performance of \bddastar{}.
In general, even the perfect heuristic can deteriorate search performance exponentially.
Second, we analyzed classical planning with different expressive extensions, such as planning with axioms and derived variables, planning with state-dependent action costs, oversubscription planning, and top-$k$ planning.
The discussed and presented results on the computational complexity and compilability of these planning formalisms show that native support for the relevant planning extensions is needed to solve many real-world problems.
Based on this observation, we proposed symbolic blind search to solve such planning formalisms with expressive extensions. 
More specifically, we have proposed symbolic search approaches that provide optimal, sound, and complete algorithms for these planning formalisms. 
Our empirical evaluations show that the presented symbolic search approaches perform favorably in all these planning settings compared with other state-of-the-art approaches. 
Finally, we analyzed the combination of all these classical planning extensions, i.e., top-$k$ oversubscription planning with axioms and state-dependent action costs, and show how symbolic search with the introduced modifications can support this planning formalism.
\backmatter

\cleardoublepage
\listoffigures
\cleardoublepage
\listoftables

\setlength\bibitemsep{1.5\itemsep} % increase item separation in bibliography
\printbibliography[heading=bibintoc,notcategory=ignore]
%\addcontentsline{toc}{chapter}{List of Algorithms}
%\listofalgorithms

\ifbool{attachPDFs}{
    \cleardoublepage
    % TODO: structure which directly belong to the diss
% Fix Bibliography section on top of pages. Make Name bold and add a short summary. Maybe better to do it seperatly
\renewcommand{\citebf}[1]{\textbf{#1}}

\chapter{Core Publications}\label{chapter:core_publications}
The following publications contain the core results of this thesis and have been published at leading conferences on AI and automated planning. 
The publications are arranged according to their appearance in the thesis and are all provided in their publication form.

\cleartoleftpage
\vspace*{\fill}
\begin{itemize}
  \item \fullcite{speck-et-al-icaps2020}
\end{itemize}
\vspace*{\fill}
\clearpage
\includepdf[pages=-]{papers/speck-etal-icaps2020.pdf}

\cleartoleftpage
\vspace*{\fill}
\begin{itemize}
  \item \fullcite{speck-et-al-icaps2019}
\end{itemize}
\vspace*{\fill}
\clearpage
\includepdf[pages=-]{papers/speck-etal-icaps2019.pdf}

\cleartoleftpage
\vspace*{\fill}
\renewcommand{\citeaddendum}{\\\textbf{(Best Student Paper Runner-Up Award)}}
\begin{itemize}
  \item \fullcite{speck-et-al-icaps2021}
\end{itemize}
\renewcommand{\citeaddendum}{}
\vspace*{\fill}
\clearpage
\includepdf[pages=-]{papers/speck-etal-icaps2021a.pdf}

\cleartoleftpage
\vspace*{\fill}
\renewcommand{\citeaddendum}{\\\textbf{(Partly based on ideas from my master thesis)}}
\begin{itemize}
  \item \fullcite{speck-et-al-icaps2018}
\end{itemize}
\renewcommand{\citeaddendum}{}
\vspace*{\fill}
\clearpage
\includepdf[pages=-]{papers/speck-etal-icaps2018.pdf}

\cleartoleftpage
\vspace*{\fill}
\begin{itemize}
  \item \fullcite{speck-katz-aaai2021}
\end{itemize}
\vspace*{\fill}
\clearpage
\includepdf[pages=-]{papers/speck-katz-aaai2021.pdf}

\cleartoleftpage
\vspace*{\fill}
\begin{itemize}
  \item \fullcite{speck-et-al-aaai2020}
\end{itemize}
\vspace*{\fill}
\clearpage
\includepdf[pages=-]{papers/speck-etal-aaai2020.pdf}

\renewcommand{\citebf}[1]{#1}
}

\end{document}